\newcommand{\argmax}[1]{\underset{#1}{\operatorname{arg}\,\operatorname{max}}\;}
\newtheorem{theorem}{Theorem}[section]
\newtheorem{proposition}[theorem]{Proposition}
\newtheorem{lemma}[theorem]{Lemma}
\newtheorem{corollary}[theorem]{Corollary}
\theoremstyle{definition}
\newtheorem{definition}[theorem]{Definition}
\newtheorem{assumption}[theorem]{Assumption}
\theoremstyle{remark}
\newtheorem{remark}[theorem]{Remark}
\let\citet\cite
\title{Optimal Regret Is Achievable with Bounded Approximate Inference Error: An Enhanced Bayesian Upper Confidence Bound Framework}
\author{%
  $^1$Ziyi Huang, $^1$Henry Lam, $^2$Amirhossein Meisami, $^1$Haofeng Zhang \thanks{Authors are listed alphabetically.} 
  \\
  $^1$ Columbia University, New York, NY, USA\\
  $^2$ Adobe Inc., San Jose, CA, USA \\
  \texttt{zh2354,khl2114,hz2553@columbia.edu, meisami@adobe.com} \\
}
\begin{document}

\maketitle

\begin{abstract}
Bayesian bandit algorithms with approximate Bayesian inference have been widely used in real-world applications. However, there is a large discrepancy between the superior practical performance of these approaches and their theoretical justification. Previous research only indicates a negative theoretical result: Thompson sampling could have a worst-case linear regret $\Omega(T)$ with a constant threshold on the inference error measured by one $\alpha$-divergence. To bridge this gap, we propose an Enhanced Bayesian Upper Confidence Bound (EBUCB) framework that can efficiently accommodate bandit problems in the presence of approximate inference. Our theoretical analysis demonstrates that for Bernoulli multi-armed bandits, EBUCB can achieve the optimal regret order $O(\log T)$ if the inference error measured by two different $\alpha$-divergences is less than a constant, regardless of how large this constant is. To our best knowledge, our study provides the first theoretical regret bound that is better than $o(T)$ in the setting of constant approximate inference error. Furthermore, in concordance with the negative results in previous studies, we show that only one bounded $\alpha$-divergence is insufficient to guarantee a sub-linear regret.
\end{abstract}

\section{Introduction} \label{contextual}
The stochastic bandit problem, dated back to \citet{robbins1952some}, is an important sequential decision-making problem that aims to find optimal adaptive strategies to maximize cumulative reward. At each time step, the learning agent chooses an action among all possible actions and observes its corresponding reward (but not others), and thus requires a balance between exploration and exploitation. Previous theoretical studies mainly focus on \textit{exact} Bayesian bandit problems, requiring access to exact posterior distributions. However, their work cannot be easily applied to complex models such as deep neural networks, where maintaining exact posterior distributions tends to be intractable \citep{riquelme2018deep}. In contrast, \textit{approximate} Bayesian bandit methods are widely employed in real-world applications with state-of-the-art performances \citep{riquelme2018deep,snoek2015scalable,osband2016deep,urteaga2018variational,guo2020deep,zhang2021neural}. In comparison with exact algorithms, approximate Bayesian bandit algorithms are more challenging to analyze, as the inaccessibility of exact posterior sampling adds another level of discrepancy, and the resulting theory and solutions hence also differ substantially. 

Few theoretical studies have been developed around Bayesian bandit approaches with approximate inference, despite their superior practical performance. \citet{lu2017ensemble} gave a theoretical analysis of an approximate sampling method called Ensemble Sampling, which possessed constant Kullback–Leibler divergence (KL divergence) error from the exact posterior and thus indicated a linear regret. \citet{phan2019thompson} showed that with a \textit{constant} threshold on the inference error in terms of $\alpha$-divergence (a generalization of KL divergence), regardless of how small the threshold is, Thompson sampling with general approximate inference could have a linear regret in the worst case. Their work also showed that Thompson sampling combined with a small amount of forced exploration could achieve a $o(T)$ regret upper bound, but no better result than $o(T)$ was shown. Moreover, this improvement was mostly credited to the forced exploration rather than the intrinsic property of Thompson sampling. It appears that \citet{phan2019thompson} illustrated a paradox that approximate Bayesian bandit methods worked well empirically but failed theoretically. Thus, further study regarding the fundamental understanding of approximate Bayesian bandit methods is necessary. 


Motivated by the negative results in \citet{phan2019thompson}, \citet{mazumdar2020approximate} leveraged an efficient Markov chain Monte Carlo (MCMC) Langevin algorithm in multi-armed Thompson sampling so that the inference error would \textit{vanish} along with increased sample numbers. \citet{xu2022langevin} extended \citet{mazumdar2020approximate} to the contextual bandit problems and integrated contextual Thompson sampling with the Langevin algorithm that allowed the approximate posterior distribution to be sufficiently close to the exact posterior distribution. Hence, both works \citet{xu2022langevin,mazumdar2020approximate} had a similar feature in terms of vanishing inference error. However, in other inference approaches, such as variational inference \citep{blei2017variational}, the approximate posteriors might incur a systematic computational bias. To accommodate the latter scenario, we consider a general Bayesian inference approach that allows bounded inference error: 

\textit{Is it possible to achieve the optimal regret order $O(\log T)$ with a \textbf{constant} (non-vanishing) threshold on the inference error?}

This question is not well investigated in previous literature, even for Bernoulli bandits. \citet{phan2019thompson} showed that the answer is \textit{No} for Thompson sampling when the inference error measured by only \textit{one} $\alpha$-divergence is bounded.
In this study, we will provide a novel theoretical framework and point out that the answer could be \textit{Yes} when the inference error measured by \textit{two} different $\alpha$-divergences is bounded where one $\alpha$ is greater than 1, and the other $\alpha$ is less than 0. This assumption guarantees that the approximate posterior is close to the exact posterior from two different "directions". Our finding implies that the problem of sub-optimal regret in the presence of approximate inference may not arise from the constant but from the design of the inference error. 
Our study takes the first step in deriving positive answers in the presence of constant approximate inference error, which provides some theoretical support for the superior performance of approximate Bayesian bandit methods.

In this study, we extend the work of the Bayesian Upper Confidence Bound (BUCB) \citep{kaufmann2012bayesian,kaufmann2018bayesian,srinivas2009gaussian,guo2020deep} to the setting of approximate inference and propose an enhanced Bayesian bandit algorithm 
that can efficiently accommodate approximate inference, termed as Enhanced Bayesian Upper Confidence Bound (EBUCB). In particular, we redesign the quantile choice in the algorithm to address the challenge of approximate inference: The original choice of $t^{-1}$ provides the best regret bound without approximate inference, but in the presence of approximate inference, it leads to an undesirable quantile shift which degrades the performance. By adjusting the quantile choice, we theoretically demonstrate that EBUCB can achieve the optimal regret order $O(\log T)$ if the inference error measured by two different $\alpha$-divergences ($\alpha_1>1$ and $\alpha_2<0$) is bounded. 
We also provide insights in the other direction: Instead of two different $\alpha$-divergences, controlling one $\alpha$-divergence alone is not sufficient to guarantee a sub-linear regret for both Thompson sampling, BUCB, and EBUCB. This further suggests that naive approximate inference methods that only minimize one $\alpha$-divergence alone could perform poorly, and thus it is critical to design approaches with two different $\alpha$-divergences reduced.

Our main contributions are summarized as follows:\\
1) We propose a general Bayesian bandit framework, named EBUCB, to address the challenge of approximate inference. Our theoretical study shows that for Bernoulli bandits, EBUCB can achieve a $O(\log T)$ regret upper bound when the inference error measured by two different $\alpha$-divergences is bounded. 
To the best of our knowledge, with constant approximate inference error, there is no existing result showing a regret upper bound that is better than $o(T)$, even for Bernoulli bandits. 
\\
2) We develop a novel sensitivity analysis of quantile shift with respect to inference error. This provides a fundamental tool to analyze Bayesian quantiles in the presence of approximate inference, which holds promise for broader applications, e.g., when the inference error is time-dependent. \\
3) We demonstrate that one bounded $\alpha$-divergence alone is insufficient to guarantee a sub-linear regret. Worst-case examples are constructed and illustrated where Thompson sampling/BUCB/EBUCB has $\Omega(T)$ regret if only one $\alpha$-divergence is bounded. Hence, special consideration on reducing two different $\alpha$-divergences is necessary for real-world applications. \\
4) Our experimental evaluations corroborate our theory well, showing that our EBUCB is consistently superior to BUCB and Thompson sampling on multiple approximate inference settings.

\textbf{Related Work.}
Bandit problems and their theoretical optimality have been extensively studied over decades \citep{li2019nearly,bubeck2012regret}. The seminal paper \citet{lai1985asymptotically} (and subsequently \citet{burnetas1996optimal}) established the first problem-dependent frequentist regret lower bound, showing that without any prior knowledge on the distributions, a regret of order $(\log T)$ is unavoidable. Two popular lines of Bayesian bandit algorithms, Thompson sampling \citep{agrawal2012analysis,kaufmann2012thompson,gopalan2014thompson} and BUCB \citep{kaufmann2012bayesian,kaufmann2018bayesian}, had been shown to match the lower bound, which indicated the theoretical optimality of those algorithms. Beyond Gaussian processes \citep{srinivas2009gaussian} and linear models \citep{agrawal2013thompson,russo2014learning}, exact computation of the posterior distribution is generally intractable, and thus, the approximate Bayesian inference techniques are necessary. 

Some recent work focused on designing specialized methods to construct Bayesian indices since previous studies had demonstrated that Thompson sampling with constant inference error could exhibit linear regret in the worst-case scenario  \citep{lu2017ensemble,phan2019thompson}. \citet{mazumdar2020approximate} constructed Langevin algorithms to generate approximate samples with decreasing inference error and showed an optimal problem-dependent frequentist regret. \citet{o2021variational} proposed variational Bayesian optimistic sampling, suggesting solving a convex optimization problem over the simplex at every time step. Unlike these researches, our study presents general results that only depend on the error threshold of approximate inference, rather than some specific approximate inference approaches. 

Beyond Bayesian, another mainstream of bandit algorithms to address the exploration-exploitation tradeoff is upper confidence bound (UCB)-type frequentist algorithms \citep{auer2002using,auer2002finite,auer2002nonstochastic,dani2008stochastic,garivier2011kl,seldin2013evaluation,zhou2020neural,korda2016distributed,mahadik2020fast,szorenyi2013gossip}. \citet{chapelle2011empirical} revealed that Thompson sampling empirically outperformed UCB algorithms in practice, partly because UCB was typically conservative, as its configuration was data-independent which led to over-exploration \citep{hao2019bootstrapping}. BUCB \citep{kaufmann2012bayesian} could be viewed as a middle ground between Thompson sampling and UCB. On the other hand, empirical studies \citep{kaufmann2018bayesian} showed that Thompson sampling and BUCB performed similarly well in general. 

\section{Methodology}
The stochastic multi-armed bandit problem consists of a set of $K$ actions (arms), each with a stochastic scalar reward following a probability distribution $\nu_{i}$ ($i=1,...,K$). At each time step $t=1,...,T$, where $T$ is the time horizon, the agent chooses an action $A_t\in [K]$ and in return observes an independent reward $X_t$ drawn from the associated probability distribution $\nu_{A_t}$. The goal is to devise a strategy
$\mathcal{A} = (A_t)_{t\in [T]}$, to maximize the accumulated rewards through the observations from historical interactions.

In general, a wise strategy should be sequential, in the sense that the upcoming actions are determined and adjusted by the past observations: letting $\mathcal{F}_t = \sigma(A_1, X_1,..., A_t, X_t)$ be the $\sigma$-field generated by the observations up to time $t$, $A_t$ is $\sigma(\mathcal{F}_{t-1}, U_t)$-measurable, where $U_t$ is a uniform random
variable independent from $\mathcal{F}_{t-1}$ (as algorithms may be randomized). More precisely, let $\mu_j$ ($j\in [K]$) denote the mean reward of the action $j$ (i.e., the mean of the distribution $\nu_i$), and without loss of generality, we assume that $\mu_1 = \max_{j\in [K]} \mu_j$. Then maximizing the rewards is
equivalent to minimizing the (frequentist) regret, which is defined as the expected difference between the reward accumulated by an ``ideal'' strategy (a strategy that always playing the best action), and the reward accumulated by a strategy $\mathcal{A}$:
\begin{equation} \label{equ:regret}
R(T, \mathcal{A}) := \mathbb{E}\left[T \mu_1 - \sum_{t=1}^{T} X_t
\right] = \mathbb{E}\left[\sum_{t=1}^{T} (\mu_1 - \mu_{A_t}
)\right].    
\end{equation}
The expectation is taken with respect to both the randomness in the sequence of successive rewards from each action $j$, denoted by $(Y_{j,s})_{s\in \mathbb{N}}$, and the possible randomization of the algorithm, $(U_t)_{t \in [T]}$. Let $N_j(t) = \sum_{s=1}^t \mathbbm{1}(A_s=j)$ denote the number of draws from action $j$ up to time $t$, so that $X_t = Y_{A_t,N_{A_t}(t)}$. Moreover, let $\hat{\mu}_{j,s} =\frac{1}{s}\sum_{k=1}^s Y_{j,k}$ be the empirical mean of the first $s$ rewards from action $j$ and let  $\hat{\mu}_{j}(t)$ be the empirical mean
of action $j$ after $t$ rounds of the bandit algorithm. Therefore $\hat{\mu}_{j}(t) = 0$ if $N_j(t) = 0$, $\hat{\mu}_{j}(t) = \hat{\mu}_{j,N_j(t)}$ otherwise.

Note that the true mean rewards $\bm{\mu}=(\mu_1,..., \mu_K)$ are fixed and unknown to the agent. In order to perform Thompson sampling, or more generally, Bayesian approaches, we artificially define a prior distribution $\Pi_0$ on $\bm{\mu}$. Let $\Pi_t$ be the exact posterior distribution of $\bm{\mu}|\mathcal{F}_{t-1}$ with density function $\pi_t$ with marginal distributions $\pi_{t,1},...,\pi_{t,K}$ for actions $1,...,K$. Specifically, if at time step $t$, the agent chooses action $A_t = j$ and consequently observes $X_t = Y_{A_t,N_{A_t}(t)}$, the Bayesian update for action $j$ is
\begin{equation} \label{equ:update}
\pi_{t,j}(\theta) \propto \nu_{\theta}(X_t) \pi_{t-1,j}(\theta),
\end{equation}
whereas for $i \ne j$, $\pi_{t,i} = \pi_{t-1,i}$. At each time step $t$, we assume that the exact posterior computation in \eqref{equ:update} cannot be obtained explicitly and an approximate inference method is able to give us an approximate distribution $Q_t$ (instead of $\Pi_t$). We use $q_t$ to denote the density function of $Q_t$.

First, we consider a standard case where the exact posterior is accessible.
In Thompson sampling \citep{agrawal2012analysis,thompson1933likelihood}, a sample $\hat{m}$ is drawn from the posterior distribution $\Pi_{t-1}$ and then an action $A_t$ is selected using the following strategy: 
$A_t = i$ if  $\hat{m}_i= \max_{j} \hat{m}_j$. In BUCB \citep{kaufmann2012bayesian}, we compute the quantile of the posterior distribution 
$qu_j(t)=Qu(1 - \frac{1}{t(\log T)^c}, \Pi_{t-1,j})$
for each action $j$, where $Qu(\gamma, \rho)$ is the quantile function associated to the distribution $\rho$, such that $P_\rho(X \le Qu(\gamma, \rho)) = \gamma$. Then we select action $A_t$ as follows: $A_t = i$ if $qu_i(t)= \max_{j} qu_j(t)$. 

Next, we move to a more concrete example, Bernoulli multi-armed bandit problems with a standard setting used in seminal papers \citep{agrawal2012analysis,agrawal2013further,kaufmann2012bayesian,kaufmann2012thompson}. In these problems, each (stochastic) reward follows a Bernoulli distribution $\nu_i\sim \text{Bernoulli}(\mu_i)$ and these distributions are independent of each other. The prior $\Pi_{0,j}$ is typically chosen to be the independent and identically distributed (i.i.d.) $\text{Beta}(1, 1)$, or the uniform distribution for every action $j$. Then the posterior distribution for action $j$ is a Beta distribution
$\Pi_{t,j}=\text{Beta}(1+ S_j(t), 1+ N_j(t)- S_j(t))$, where $S_j(t) = \sum_{s=1}^{t} \mathbbm{1}\{A_s = j\} X_t$ is the empirical cumulative reward from action $j$ up to time $t$. Then, Thompson sampling/BUCB chooses the samples/quantiles of the posterior $\Pi_{t,j}=\text{Beta}(1+ S_j(t), 1+ N_j(t)- S_j(t))$ respectively at each time step.

In the presence of approximate inference, Thompson sampling draws the sample $\hat{m}$ from $Q_{t-1}$, as the exact $\Pi_{t-1}$ is not accessible. Correspondingly, we modify the specific sequence of quantiles chosen by the BUCB algorithm with a general sequence of $\{\gamma_t\}$-quantiles and term it as Enhanced Bayesian Upper Confidence Bound (EBUCB) algorithm. The detailed pseudo algorithm of EBUCB is described in Algorithm \ref{alg:EBUCB}. Note that the choice of $\gamma_t$ should address the presence of inference error and should be trailed to the specific definition of inference error; See Remark \ref{quantilechoice} in Section \ref{sec:theo}.




\begin{algorithm}[tb]
   \caption{Enhanced Bayesian Upper Confidence Bound (EBUCB) with Approximate Inference}
   \label{alg:EBUCB}
\begin{algorithmic}
   \State {\bfseries Input:} $T$ (time horizon), $\Pi_0=Q_0$ (initial prior on $\bm{\mu}$), $c$ (parameters of the quantile), and a real-value increasing sequence $\{\gamma_t\}$ such that $\gamma_t\to 1$ as $t\to \infty$
   \For{$t=1$ {\bfseries to} $T$}
   \For{each action $j = 1,..., K$}
   \State Compute $qu_j(t)=Qu(\gamma_t, Q_{t-1,j}).$
   \EndFor
   \State Draw action $A_t = \argmax{j=1,\ldots,K} qu_j(t)$
   \State Get reward $X_t=Y_{A_t,N_{A_t}(t)}$
   \State 
   Obtain the approximate distribution $Q_t$
   \EndFor
\end{algorithmic}

\end{algorithm}

\section{Theoretical Analysis}  \label{sec:theo}
In this section, we present a theoretical analysis of  EBUCB. In Section \ref{sec:kldivergence}, we provide the necessary background of $\alpha$-divergence on approximate inference error measurement. Then in Section \ref{sec:quantile}, we develop a novel sensitivity analysis of quantile shift with respect to inference error. This provides a fundamental tool to analyze Bayesian quantiles in the presence of approximate inference. The general results therein will be used for our derivation for the regret upper bound of EBUCB in Section \ref{sec:regretbound}, and are also potentially useful for broad applications, e.g., when the inference error is time-dependent. 
Lastly, in Section \ref{sec:tsfails}, we provide examples where Thompson sampling/BUCB/EBUCB has a linear regret with arbitrarily small inference error measured by one $\alpha$-divergence alone. All proofs are given in the Appendix.

\subsection{The Alpha Divergence for Inference Error Measurement} \label{sec:kldivergence}
The $\alpha$-divergence, generalizing the KL divergence, is a common way to measure errors in
inference methods.
\begin{definition}
The $\alpha$-divergence between two distributions $P_1$ and $P_2$ with density functions $p_1(x)$ and $p_2(x)$ is defined as: 
$D_{\alpha}(P_1,P_2)=\frac{1}{\alpha(\alpha-1)}\left(\int p_1(x)^{\alpha}p_2(x)^{1-\alpha}dx-1\right)$,
where $\alpha\in \mathbb{R}$ and the case of $\alpha=0$ and $1$ is defined as the limit.     
\end{definition}
Note that different studies use the $\alpha$ parameter in different ways. Herein, our definition of $\alpha$-divergence does not follow Renyi's definition of $\alpha$-divergence \citep{renyi1961measures}; Instead, we follow a generalized version of Tsallis's $\alpha$-divergence, which is adopted by \citet{zhu1995information,minka2005divergence,phan2019thompson}. Compared with Renyi’s $\alpha$-divergence, Tsallis’s $\alpha$-divergence does not involve a log function, and it has the following property:
\begin{proposition}[Positivity and symmetry]
For any $\alpha\in\mathbb{R}$, $D_\alpha(P_1, P_2)\ge 0$ and $D_\alpha(P_1, P_2)=D_{1-\alpha}(P_2, P_1)$.    
\end{proposition}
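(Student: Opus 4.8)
The plan is to treat the two assertions separately: the symmetry is purely algebraic, while the positivity reduces to a one-variable convexity fact.

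For the symmetry $D_\alpha(P_1,P_2)=D_{1-\alpha}(P_2,P_1)$ I would just substitute into the definition. Replacing $\alpha$ by $1-\alpha$ and swapping the roles $P_1\leftrightarrow P_2$ turns the prefactor into $\frac{1}{(1-\alpha)\left((1-\alpha)-1\right)}=\frac{1}{(1-\alpha)(-\alpha)}=\frac{1}{\alpha(\alpha-1)}$ and the integrand into $p_2(x)^{1-\alpha}p_1(x)^{\,1-(1-\alpha)}=p_1(x)^{\alpha}p_2(x)^{1-\alpha}$, so the two expressions agree term by term. No inequality is needed here.

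For the positivity the key step is to put the divergence into $f$-divergence form. Using $\int p_1=\int p_2=1$, I would write
\[
D_\alpha(P_1,P_2)=\int g_\alpha\!\left(\frac{p_1(x)}{p_2(x)}\right)p_2(x)\,dx,\qquad g_\alpha(t):=\frac{t^{\alpha}-1-\alpha(t-1)}{\alpha(\alpha-1)},
\]
the extra affine term $-\alpha(t-1)$ being admissible because it integrates to $\alpha\!\left(\int p_1-\int p_2\right)=0$ against $p_2$. It then suffices to show $g_\alpha(t)\ge 0$ for all $t>0$ and every $\alpha\notin\{0,1\}$, and this is a two-line computation: $g_\alpha(1)=0$; $g_\alpha'(t)=\frac{t^{\alpha-1}-1}{\alpha-1}$, hence $g_\alpha'(1)=0$; and $g_\alpha''(t)=t^{\alpha-2}\ge 0$. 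So $g_\alpha$ is convex with a global minimum value $0$ attained at $t=1$, the integrand is pointwise nonnegative, and $D_\alpha(P_1,P_2)\ge 0$. The boundary cases $\alpha\in\{0,1\}$ are defined by the limit; a Taylor expansion of $t^{\alpha}$ in $\alpha$ identifies $D_1(P_1,P_2)=\int p_1\log(p_1/p_2)$ and $D_0(P_1,P_2)=\int p_2\log(p_2/p_1)$, both nonnegative by Gibbs' inequality (or by passing to the limit in the nonnegative quantities just bounded). If one prefers to bypass the $f$-divergence rewriting, an equivalent route is: for $\alpha\in(0,1)$, Hölder's inequality with conjugate exponents $1/\alpha$ and $1/(1-\alpha)$ gives $\int p_1^{\alpha}p_2^{1-\alpha}\le 1$, and dividing by the negative constant $\alpha(\alpha-1)$ yields the claim; for $\alpha>1$, Jensen's inequality applied to the convex map $u\mapsto u^{\alpha}$ under $P_2$ gives $\int p_1^{\alpha}p_2^{1-\alpha}=\mathbb{E}_{P_2}\!\left[(p_1/p_2)^{\alpha}\right]\ge\left(\mathbb{E}_{P_2}[p_1/p_2]\right)^{\alpha}=1$; and for $\alpha<0$ one invokes the symmetry already proved to reduce to the case $1-\alpha>1$.

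The only genuinely delicate point is the measure-theoretic edge case in which $P_1$ is not absolutely continuous with respect to $P_2$ (relevant when $\alpha>1$ or $\alpha<0$): on a set where $p_2=0$ but $p_1>0$ the factor $p_2^{1-\alpha}$ blows up and $\int p_1^{\alpha}p_2^{1-\alpha}=+\infty$, so $D_\alpha=+\infty\ge 0$ trivially. I would state the standard $f$-divergence convention $0\cdot g_\alpha(a/0)=a\lim_{t\to\infty}g_\alpha(t)/t$ at the outset so that the integral representation above is literally valid in all cases, after which the convexity argument closes the proof with no further case analysis.
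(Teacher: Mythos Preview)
Your proof is correct. The symmetry argument is the obvious substitution and is fine as written. For positivity, your $f$-divergence rewriting with $g_\alpha(t)=\frac{t^{\alpha}-1-\alpha(t-1)}{\alpha(\alpha-1)}$ and the three-line convexity check $g_\alpha(1)=g_\alpha'(1)=0$, $g_\alpha''(t)=t^{\alpha-2}\ge 0$ is clean and complete; the alternative H\"older/Jensen route you sketch is equally valid and is the more classical textbook derivation. Your handling of the limiting cases $\alpha\in\{0,1\}$ and of the absolute-continuity edge case is also correct.

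As for comparison with the paper: the paper does not actually supply a proof of this proposition. It is stated as a known property of Tsallis-type $\alpha$-divergences (with implicit reference to \cite{minka2005divergence} and related sources) and is never revisited in the appendix, whose proof sections cover only Sections~\ref{sec:quantile}--\ref{sec:tsfails}. So there is nothing to compare against; your argument stands on its own and would serve perfectly well as the missing justification.
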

The $\alpha$-divergence contains many distances such as $KL(P_2, P_1) (\alpha \to 0)$, $KL(P_1, P_2) (\alpha \to 1)$, Hellinger distance $(\alpha = 0.5)$, and $\chi^2$ divergence $(\alpha = 2)$. $\alpha$-divergence is widely used in variational inference \citep{blei2017variational,kingma2013auto,li2016renyi}, which is one of the most popular approaches in Bayesian approximate inference. Moreover, it was also adopted in previous studies on Thompson sampling with approximate inference \citep{phan2019thompson,lu2017ensemble}. In particular, the KL divergence ($(\alpha=1)$-divergence) is:
$KL(P_1,P_2)=\int p_1(x)\log\left(\frac{p_1(x)}{p_2(x)}\right) dx.$
In approximate Bayesian inference, the exact posterior distribution $\Pi_t$ and the approximate distribution $Q_t$ may differ from each other. To provide a statistical analysis of approximate sampling methods, we use the $\alpha$-divergence as the measurement of inference error (statistical distance) between $\Pi_t$ and $Q_t$. Our starting point is the following: 
\begin{assumption} \label{assu0}
Suppose that there exists a positive value $\epsilon \in (0,+\infty)$ and two different parameters $\alpha_1>1$ and $\alpha_2<0$ such that
\begin{equation} \label{equ:error}
\begin{split}
D_{\alpha_1}(Q_{t,j},\Pi_{t,j})&\le \epsilon, \ \forall t\in [T], j\in [K],    \\
D_{\alpha_2}(Q_{t,j},\Pi_{t,j})&\le \epsilon, \ \forall t\in [T], j\in [K].  
\end{split}
\end{equation}
\end{assumption}
This assumption is adapted from \citet{phan2019thompson} but we enhance theirs with two bounded $\alpha$-divergences, as \citet{phan2019thompson} showed one bounded $\alpha$-divergence was not sufficient to guarantee the sublinear regret. However, in the following, we show that the optimal regret order $O(\log T)$ is indeed achievable under Assumption \ref{assu0} with two bounded $\alpha$-divergences. Intuitively, $P_2$ is flatten to cover $P_1$’s entire support when minimizing $D_\alpha(P_1, P_2)$ with a large $\alpha$ (greater than 1), while when $\alpha$ is small (less than 0), $P_2$ fits the $P_1$’s dominant mode; See \citet{minka2005divergence} for the implication of $\alpha$-divergence. Therefore, Assumption \ref{assu0} guarantees
that the approximate posterior is close to the exact posterior from two different ``directions''.
It is worth mentioning that when one $\alpha$-divergence is small, it does not necessarily imply that any other $\alpha$-divergences are large or infinite. In fact, as long as the two distributions have densities with the same support, then any $\alpha$-divergence between them is finite. Note that Assumption 3.1 does not require the threshold $\epsilon$ to be small; instead, $\epsilon$ can be any finite positive number.
We pinpoint that this assumption, as well as our subsequent results, are very general in the sense that it does not depend on any specific methods of approximate inference. To enhance credibility on Assumption \ref{assu0}, we make several additional remarks in Section \ref{sec:assumption}. 


\subsection{Quantile Shift with Inference Error} \label{sec:quantile}



In this section, we develop a novel sensitivity analysis
of quantile shift with respect to inference error, which implies that under Assumption \ref{assu0}, the $\gamma$-quantiles of $\Pi_{t,j}$ and $Q_{t,j}$ only differs from a bound depending on $\epsilon$. We provide a general result first, which is rigorously stated as follows:


\begin{theorem} \label{thm:quantileshift}
Consider any two distributions $P_1$ and $P_2$ with densities $p_1(x)$ and $p_2(x)$. Let $R_i$ denote the quantile function of the distribution $P_i$, i.e., $R_i(p):=Qu(p,P_i)$ ($i=1,2$). 
Let $0<\gamma<1$. Let $\delta_{\gamma,\epsilon}$ satisfy that $R_1(\gamma)=R_2(\gamma+\delta_{\gamma,\epsilon})$ where $-\gamma\le \delta_{\gamma,\epsilon}\le 1-\gamma$.

a) If $D_{\alpha}(P_1,P_2)\le \epsilon$ where $\alpha>1$, then
$$\delta_{\gamma,\epsilon}\le 1-\gamma-\left(\epsilon\alpha(\alpha-1)+1\right)^{\frac{1}{1-\alpha}}\left(1-\gamma\right)^{\frac{\alpha}{\alpha-1}}.$$
Note that when $\alpha>1$, $\left(\epsilon\alpha(\alpha-1)+1\right)^{\frac{1}{1-\alpha}}<1$ and $\left(1-\gamma\right)^{\frac{\alpha}{\alpha-1}}<1-\gamma$.

b) If $D_{\alpha}(P_1,P_2)\le \epsilon$ where $\alpha<0$, then
$$\delta_{\gamma,\epsilon}\ge 1-\gamma-\left(\epsilon\alpha(\alpha-1)+1\right)^{\frac{1}{1-\alpha}}\left(1-\gamma\right)^{\frac{\alpha}{\alpha-1}}.$$
Note that when $\alpha<0$, $\left(\epsilon\alpha(\alpha-1)+1\right)^{\frac{1}{1-\alpha}}>1$ and $\left(1-\gamma\right)^{\frac{\alpha}{\alpha-1}}>1-\gamma$.

c) Suppose that $\alpha\in (0,1)$ and $\epsilon\ge \frac{-1}{\alpha(\alpha-1)}$. Then for any $\delta_{\gamma,\epsilon}\in [-\gamma,1-\gamma]$, there exist two distributions $P_1$ and $P_2$ such that
$D_{\alpha}(P_1,P_2)\le \epsilon.$
This implies that the condition $D_{\alpha}(P_1,P_2)\le \epsilon$ cannot control the quantile shift between $P_1$ and $P_2$ in general when $\alpha\in (0,1)$.
\end{theorem}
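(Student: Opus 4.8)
The plan is to reduce the statement to the behaviour of the two c.d.f.'s at the single point $x^{*}:=R_1(\gamma)$. Write $F_i$ for the c.d.f. of $P_i$. Since $p_1,p_2$ are densities, $F_1,F_2$ are continuous, so $F_1(x^{*})=\gamma$, and the defining relation $R_1(\gamma)=R_2(\gamma+\delta_{\gamma,\epsilon})$ forces $F_2(x^{*})=\gamma+\delta_{\gamma,\epsilon}$; hence $\delta_{\gamma,\epsilon}=F_2(x^{*})-F_1(x^{*})$. In particular, on the upper tail $B:=(x^{*},+\infty)$ we have $\int_B p_1\,dx=1-\gamma$ and $\int_B p_2\,dx=1-\gamma-\delta_{\gamma,\epsilon}$. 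The other basic ingredient is a rewriting of the divergence hypothesis: when $\alpha>1$ or $\alpha<0$ we have $\alpha(\alpha-1)>0$, so $D_{\alpha}(P_1,P_2)\le\epsilon$ is equivalent to $\int p_1^{\alpha}p_2^{1-\alpha}\,dx\le\alpha(\alpha-1)\epsilon+1$, and a fortiori $\int_B p_1^{\alpha}p_2^{1-\alpha}\,dx\le\alpha(\alpha-1)\epsilon+1$ because the integrand is nonnegative. Finiteness of $\int p_1^{\alpha}p_2^{1-\alpha}\,dx$ (from $\epsilon<\infty$) also rules out the degenerate tail with $\delta_{\gamma,\epsilon}=1-\gamma$, so every quantity below is strictly positive.

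For parts (a) and (b) I would apply Hölder's inequality on $B$ to the pointwise factorization $p_1=(p_1^{\alpha}p_2^{1-\alpha})^{1/\alpha}\,p_2^{(\alpha-1)/\alpha}$ with the conjugate exponents $\alpha$ and $\tfrac{\alpha}{\alpha-1}$. When $\alpha>1$ both exponents exceed $1$ and ordinary Hölder gives
\[
1-\gamma=\int_B p_1\,dx\ \le\ \Big(\int_B p_1^{\alpha}p_2^{1-\alpha}\,dx\Big)^{1/\alpha}\,(1-\gamma-\delta_{\gamma,\epsilon})^{(\alpha-1)/\alpha};
\]
when $\alpha<0$ one exponent is negative and the reverse Hölder inequality gives the same line with ``$\le$'' replaced by ``$\ge$''. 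Substituting the divergence bound (using that $x\mapsto x^{1/\alpha}$ is increasing for $\alpha>1$ and decreasing for $\alpha<0$, so the relevant inequality is preserved in both cases), then raising both sides to the positive power $\tfrac{\alpha}{\alpha-1}$ and using $-\tfrac{1}{\alpha-1}=\tfrac{1}{1-\alpha}$, yields
\[
1-\gamma-\delta_{\gamma,\epsilon}\ \ge\ \big(\alpha(\alpha-1)\epsilon+1\big)^{1/(1-\alpha)}(1-\gamma)^{\alpha/(\alpha-1)}\qquad(\alpha>1),
\]
which is exactly (a), and the reversed inequality when $\alpha<0$, which is exactly (b). The two cases are mirror images: the same algebraic string with every inequality flipped, giving an upper bound on the shift in one regime and a lower bound in the other.

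For part (c), observe that for $\alpha\in(0,1)$ we have $\alpha(\alpha-1)<0$ while $\int p_1^{\alpha}p_2^{1-\alpha}\,dx\ge0$ always, so $D_{\alpha}(P_1,P_2)\le\tfrac{-1}{\alpha(\alpha-1)}$ for \emph{every} pair of densities; hence the hypothesis $\epsilon\ge\tfrac{-1}{\alpha(\alpha-1)}$ makes the constraint $D_{\alpha}(P_1,P_2)\le\epsilon$ vacuous, and it remains only to realize an arbitrary shift. Take $P_1=\mathrm{Uniform}[0,1]$, so $R_1(\gamma)=\gamma$, and let $P_2$ be the distribution with piecewise-constant density equal to $\tfrac{\gamma+\delta_{\gamma,\epsilon}}{\gamma}$ on $[0,\gamma]$ and $\tfrac{1-\gamma-\delta_{\gamma,\epsilon}}{1-\gamma}$ on $[\gamma,1]$. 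Both constants are nonnegative precisely because $\delta_{\gamma,\epsilon}\in[-\gamma,1-\gamma]$, $P_2$ integrates to $1$, and $F_2(\gamma)-F_1(\gamma)=\delta_{\gamma,\epsilon}$, so $P_1,P_2$ realize the quantile shift $\delta_{\gamma,\epsilon}$ at level $\gamma$ (the boundary values of $\delta_{\gamma,\epsilon}$ are covered by the same construction together with the inf-convention for the quantile function).

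The main obstacle is bookkeeping rather than ideas: one must invoke the correct form of Hölder (ordinary for $\alpha>1$, reverse for $\alpha<0$), integrate over the correct tail so that $1-\gamma$ and $1-\gamma-\delta_{\gamma,\epsilon}$ (rather than $\gamma$ and $\gamma+\delta_{\gamma,\epsilon}$) appear, and track how each inequality behaves when both sides are raised to a possibly negative or fractional power; verifying that these sign changes compose to exactly one net reversal between (a) and (b) is the only delicate point. The remaining steps (justifying $F_1(x^{*})=\gamma$, $F_2(x^{*})=\gamma+\delta_{\gamma,\epsilon}$, and discarding the zero-mass tail) are immediate from absolute continuity of $P_1,P_2$ together with $\epsilon<\infty$.
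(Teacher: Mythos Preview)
Your argument is correct and reaches the same conclusions by a genuinely different route from the paper. The paper first proves a quantile-based identity (Lemma~\ref{thm:quantileshift2}),
\[
D_\alpha(P_1,P_2)=\frac{1}{\alpha(\alpha-1)}\Big(\int_0^1\big(\tfrac{d}{du}R_2^{-1}(R_1(u))\big)^{1-\alpha}du-1\Big),
\]
splits the integral at $\gamma$, and applies Jensen's inequality for the convex map $x\mapsto\frac{1}{\alpha(\alpha-1)}x^{1-\alpha}$; dropping the nonnegative ``lower-tail'' term then yields the bound. You bypass the change-of-variables lemma entirely and apply H\"older (for $\alpha>1$) or reverse H\"older (for $\alpha<0$) directly on the set $B=(R_1(\gamma),\infty)$ to the factorization $p_1=(p_1^{\alpha}p_2^{1-\alpha})^{1/\alpha}\,p_2^{(\alpha-1)/\alpha}$. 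Both routes land on the identical intermediate inequality $\alpha(\alpha-1)\epsilon+1\ge (1-\gamma)^{\alpha}(1-\gamma-\delta_{\gamma,\epsilon})^{1-\alpha}$ (direction reversed for $\alpha<0$), after which the algebra is the same; your H\"older step is effectively the data-processing inequality for $\alpha$-divergence under the coarsening $\{B,B^c\}$, with the $B^c$ contribution discarded. Your approach is more elementary since it needs no auxiliary lemma; the paper's approach has the side benefit of producing the full two-term lower bound, which they use to exhibit the extremal pair (their construction~\eqref{equ:Jensenachievable}) and to explain why the other tail gives only a vacuous bound in each regime. For part~(c) your argument is in fact cleaner than the paper's: you note that $\int p_1^{\alpha}p_2^{1-\alpha}\ge 0$ forces $D_\alpha\le \frac{-1}{\alpha(\alpha-1)}$ for \emph{every} pair when $\alpha\in(0,1)$, so the hypothesis is vacuous and any pair realizing the shift suffices; the paper instead computes $D_\alpha$ on the specific piecewise construction and bounds it term by term.
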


Theorem \ref{thm:quantileshift} states that $\gamma$-quantile of the distribution $P_1$ is the $(\gamma+\delta_{\gamma,\epsilon})$-quantile of the distribution $P_2$ where the quantile shift $\delta_{\gamma,\epsilon}$ has the following properties. a) The upper bound of $\delta_{\gamma,\epsilon}$ is close to $0$ if 
$D_{\alpha}(P_1,P_2)$ with $\alpha>1$ is bounded; b) The lower bound of $\delta_{\gamma,\epsilon}$ is close to $0$ if 
$D_{\alpha}(P_1,P_2)$ with $\alpha<0$ is bounded; c) A slightly large bound on $D_{\alpha}(P_1,P_2)$ with $0<\alpha<1$ cannot control the shift $\delta_{\gamma,\epsilon}$ in general, which gives the intuition that $\alpha\in(0,1)$ is not implemented in Assumption \ref{assu0}.

This theorem is distribution-free, in the sense that the bound of $\delta_{\gamma,\epsilon}$ does not depend on any specific distributions (noting that distribution changes as $t$ evolves in bandit problems). In particular, a)+b) in Theorem \ref{thm:quantileshift} shows that
$C_{\epsilon,\alpha_1}\left(1-\gamma\right)^{\frac{\alpha_1}{\alpha_1-1}}\ge \delta_{\gamma,\epsilon}-(1-\gamma)\ge -C_{\epsilon,\alpha_2}\left(1-\gamma\right)^{\frac{\alpha_2}{\alpha_2-1}},$
with $\alpha_1>1$ and $\alpha_2<0$
where $C_{\epsilon,\alpha}=\left(\epsilon\alpha(\alpha-1)+1\right)^{\frac{1}{1-\alpha}}$ is independent of $\gamma$ or distributions, and thus independent of the time step $t$ in our EBUCB algorithm. This observation is important in the robustness of using quantiles in the EBUCB.
The proof of Theorem \ref{thm:quantileshift} relies on the following lemma, which provides a quantile-based representation of $\alpha$-divergence.

\begin{lemma} \label{thm:quantileshift2}
Under the same conditions in Theorem \ref{thm:quantileshift}, we have that for any $\alpha$-divergence,
$D_\alpha(P_1,P_2)=\frac{\int_{0}^{1} \left(\frac{d}{du}R_2^{-1}(R_1(u))\right)^{1-\alpha}du -1}{\alpha(\alpha-1)}.$
\end{lemma}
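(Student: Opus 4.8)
The plan is to prove the equivalent integral identity
$$\int p_1(x)^{\alpha}p_2(x)^{1-\alpha}\,dx=\int_{0}^{1}\left(\frac{d}{du}R_2^{-1}(R_1(u))\right)^{1-\alpha}du,$$
after which Lemma \ref{thm:quantileshift2} follows at once from the definition of $D_\alpha$. The central observation is that $R_i^{-1}$ is exactly the CDF $F_i$ of $P_i$: differentiating the identity $F_1(R_1(u))=u$ yields $p_1(R_1(u))\,R_1'(u)=1$, i.e. $p_1(R_1(u))=1/R_1'(u)$, while the chain rule gives $\frac{d}{du}R_2^{-1}(R_1(u))=\frac{d}{du}F_2(R_1(u))=p_2(R_1(u))\,R_1'(u)$.

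First I would carry out the monotone substitution $x=R_1(u)$ (equivalently $u=F_1(x)$), which maps $(0,1)$ onto the interior of the support of $P_1$ with $dx=R_1'(u)\,du$, to obtain
$$\int p_1(x)^{\alpha}p_2(x)^{1-\alpha}\,dx=\int_0^1 p_1(R_1(u))^{\alpha}\,p_2(R_1(u))^{1-\alpha}\,R_1'(u)\,du.$$
Then I would substitute $p_1(R_1(u))^{\alpha}=R_1'(u)^{-\alpha}$ and merge the leftover factor $R_1'(u)^{1-\alpha}$ with $p_2(R_1(u))^{1-\alpha}$ to get $\int_0^1\bigl(p_2(R_1(u))\,R_1'(u)\bigr)^{1-\alpha}du$; by the chain-rule identity above this is precisely $\int_0^1\bigl(\frac{d}{du}R_2^{-1}(R_1(u))\bigr)^{1-\alpha}du$. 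The boundary cases $\alpha=0$ and $\alpha=1$ then follow by taking limits, consistently with the convention in the Definition.

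The main technical obstacle is not the computation but the regularity needed to justify both the change of variables and the a.e. differentiability of $R_1$ and of $u\mapsto R_2^{-1}(R_1(u))=F_2(F_1^{-1}(u))$. It suffices to assume each $p_i$ is positive on the interior of its support (an interval), so that $F_i$ is a strictly increasing absolutely continuous bijection onto $(0,1)$ and $R_i=F_i^{-1}$ is differentiable with $R_i'=1/p_i(R_i(\cdot))$, which legitimizes the monotone substitution formula. One should also handle the degenerate case in which $P_1$ and $P_2$ do not share the same support: if $p_1>0$ on a set where $p_2=0$, then for $\alpha>1$ (so $1-\alpha<0$) the left side is $+\infty$, and correspondingly $\frac{d}{du}R_2^{-1}(R_1(u))=p_2(R_1(u))R_1'(u)$ vanishes there, making its $(1-\alpha)$-th power $+\infty$ as well, so the two sides still agree; the $\alpha<0$ case is symmetric and can be reduced to this via $D_\alpha(P_1,P_2)=D_{1-\alpha}(P_2,P_1)$. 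These bookkeeping checks, rather than the core substitution, are where care is required.
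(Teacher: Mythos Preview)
Your proposal is correct and follows essentially the same approach as the paper: both arguments perform the monotone substitution $x=R_1(u)$, use the identity $p_1(R_1(u))R_1'(u)=1$ obtained by differentiating $F_1(R_1(u))=u$, and invoke the chain rule $\frac{d}{du}R_2^{-1}(R_1(u))=p_2(R_1(u))R_1'(u)$ to rewrite the integrand. Your additional remarks on regularity and mismatched supports go beyond what the paper records, but the core computation is identical.
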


\subsection{Finite-Time Regret Bound for EBUCB} \label{sec:regretbound}
We rigorously derive the upper bound of the problem-dependent frequentist regret for EBUCB in Bernoulli multi-armed bandit problems. By \eqref{equ:regret}, we can express the regret as
$R(T, \mathcal{A}) := \mathbb{E}\left[\sum_{t=1}^{T} (\mu_1 - \mu_{A_t}
)\right]=\sum_{j=1}^{K} (\mu_1 - \mu_{j}) \mathbb{E}\left[N_j(t)\right].$
Therefore, it is sufficient to study $\mathbb{E}\left[N_j(t)\right]$ in order to bound the problem-dependent regret $R(T, \mathcal{A})$. For $(p, q) \in [0,1]^2$, we denote the Bernoulli $\alpha$-divergence between two points by
$d(p, q) = p \log(\frac{p}{q})+ (1 - p) \log(\frac{1-p}{1-q}),$
with $0 \log 0 = 0 \log(0/0) = 0$ and $x \log(x/0) = +\infty$ for $x > 0$ by convention. We also denote that $d^+(p,q) = d(p, q)\mathbbm{1}\{p<q\}$ for convenience.

Note that $\pi_{t,j}(x)$ is the density of $\text{Beta}(1+ S_j(t), 1+ N_j(t)- S_j(t))$ so its (closed) support is $[0,1]$. We put a basic assumption on $Q_{t,j}$.

\begin{assumption} \label{assu2}
$Q_{t,j}$ has the density $q_{t,j}(x)$ whose support is $[0,1]$ for any $t\in [T], j\in [K]$.
\end{assumption}

The following is our main theorem which establishes a finite-time
regret bound for our EBUCB algorithm. Without loss of generality, we assume action $1$ is optimal. 

\begin{theorem} \label{thm:regret}
Suppose Assumptions \ref{assu0} and \ref{assu2} hold. Let $M_{\epsilon,1}=\left(\epsilon\alpha_1(\alpha_1-1)+1\right)^{\frac{1}{1-\alpha_1}}<1$, $\tilde{\alpha}_1=\frac{\alpha_1}{\alpha_1-1}>0$, $M_{\epsilon,2}=\left(\epsilon\alpha_2(\alpha_2-1)+1\right)^{\frac{1}{1-\alpha_2}}>1$, and $\tilde{\alpha}_2=\frac{\alpha_2}{\alpha_2-1}>0$.
For any $\xi > 0$, choosing the parameter  $c$ such that $c\tilde{\alpha}_2 \ge 5$ in the EBUCB algorithm and setting $\gamma_t=1-\frac{1}{t^{\zeta}(\log T)^c} (\zeta>0)$, the number of draws of any sub-optimal action $j\ge 2$ is upper-bounded by
$$\mathbb{E}[N_j(T)]\le \frac{\Big(\zeta\tilde{\alpha}_2 + c\tilde{\alpha}_2 \Big) M_{\epsilon,2} e T^{1-\zeta\tilde{\alpha}_2}}{1-\zeta\tilde{\alpha}_2} + o (T^{1-\zeta\tilde{\alpha}_2})$$  
if $0< \zeta\tilde{\alpha}_2 < 1$, and 
$$\mathbb{E}[N_j(T)]
\le \frac{(1 + \xi)\zeta\tilde{\alpha}_1}{d(\mu_j, \mu_1)} \log(T) + o (\log T)$$
if $ \zeta\tilde{\alpha}_2 \ge 1$.
\end{theorem}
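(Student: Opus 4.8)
The plan is to reduce the analysis of EBUCB's approximate‑posterior quantiles to that of exact‑posterior (Beta) quantiles via Theorem~\ref{thm:quantileshift}, and then run a Beta--Binomial concentration argument in the style of Kaufmann et al.'s BUCB analysis. Concretely, fix $t$ and $j$ and apply Theorem~\ref{thm:quantileshift} with $P_1=Q_{t-1,j}$, $P_2=\Pi_{t-1,j}$, $\gamma=\gamma_t$: parts (a) and (b), together with Assumption~\ref{assu0}, show that $qu_j(t)=Qu(\gamma_t,Q_{t-1,j})=Qu(\bar\gamma_{t,j},\Pi_{t-1,j})$ for an effective level $\bar\gamma_{t,j}$ satisfying
$$1-M_{\epsilon,2}(1-\gamma_t)^{\tilde\alpha_2}\;\le\;\bar\gamma_{t,j}\;\le\;1-M_{\epsilon,1}(1-\gamma_t)^{\tilde\alpha_1}.$$
Substituting $1-\gamma_t=t^{-\zeta}(\log T)^{-c}$ gives $1-\bar\gamma_{t,j}\in[\,M_{\epsilon,1}t^{-\zeta\tilde\alpha_1}(\log T)^{-c\tilde\alpha_1},\,M_{\epsilon,2}t^{-\zeta\tilde\alpha_2}(\log T)^{-c\tilde\alpha_2}\,]$. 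Since Assumption~\ref{assu2} makes $\Pi_{t-1,j}$ a $\mathrm{Beta}(1+S_j(t-1),1+N_j(t-1)-S_j(t-1))$ distribution on $[0,1]$, we are now exactly in a BUCB‑type setting, but with the non‑standard quantile schedule $1-t^{-\zeta}(\log T)^{-c}$ and an extra two‑sided distortion governed by $M_{\epsilon,1}<1$ and $M_{\epsilon,2}>1$; because $\tilde\alpha_1>1>\tilde\alpha_2>0$ and $M_{\epsilon,2}>1>M_{\epsilon,1}$, the $\alpha_2$‑side (the lower endpoint) is the more conservative one and will control the ``bad event'' for the optimal arm, while the $\alpha_1$‑side (the upper endpoint) will control the exploration of the sub‑optimal arms.

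Next, bound $\mathbb{E}[N_j(T)]\le\sum_{t=1}^T\mathbb{P}(A_t=j)$ and split on whether the optimal arm's index is below its mean; using $A_t=j\Rightarrow qu_j(t)\ge qu_1(t)$,
$$\mathbb{E}[N_j(T)]\;\le\;\sum_{t=1}^T\mathbb{P}\big(qu_1(t)<\mu_1\big)\;+\;\sum_{t=1}^T\mathbb{P}\big(qu_j(t)\ge\mu_1,\,A_t=j\big).$$
For the first sum, monotonicity of quantiles and the lower bound on $\bar\gamma_{t,1}$ give $qu_1(t)\ge Qu(1-\delta_t,\Pi_{t-1,1})$ with $\delta_t=M_{\epsilon,2}t^{-\zeta\tilde\alpha_2}(\log T)^{-c\tilde\alpha_2}$; translating this quantile event through the Beta--Binomial identity $F_{\mathrm{Beta}(a,b)}(x)=\mathbb{P}(\mathrm{Bin}(a+b-1,x)\ge a)$ and a Chernoff bound, and then summing over the possible values of $N_1(t-1)$ via a peeling argument, one obtains $\mathbb{P}(qu_1(t)<\mu_1)$ of order $(\zeta\tilde\alpha_2+c\tilde\alpha_2)M_{\epsilon,2}e\,t^{-\zeta\tilde\alpha_2}$ up to lower‑order terms (the factor $\zeta\tilde\alpha_2+c\tilde\alpha_2$ tracking the logarithmically‑many peeling levels). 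The integral comparison $\sum_{t\le T}t^{-\zeta\tilde\alpha_2}\le T^{1-\zeta\tilde\alpha_2}/(1-\zeta\tilde\alpha_2)$ then yields the stated $\frac{(\zeta\tilde\alpha_2+c\tilde\alpha_2)M_{\epsilon,2}eT^{1-\zeta\tilde\alpha_2}}{1-\zeta\tilde\alpha_2}+o(T^{1-\zeta\tilde\alpha_2})$ when $0<\zeta\tilde\alpha_2<1$; when $\zeta\tilde\alpha_2\ge1$ the same sum is $O(\log T)$, and the surviving factor $(\log T)^{-c\tilde\alpha_2}$ with $c\tilde\alpha_2\ge5$ drives it to $o(\log T)$.

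For the second sum, the upper bound on $\bar\gamma_{t,j}$ gives $qu_j(t)\le Qu(1-\beta_t,\Pi_{t-1,j})$ with $\beta_t=M_{\epsilon,1}t^{-\zeta\tilde\alpha_1}(\log T)^{-c\tilde\alpha_1}$, so $\{qu_j(t)\ge\mu_1\}\subseteq\{Qu(1-\beta_t,\Pi_{t-1,j})\ge\mu_1\}$. Again by the Beta--Binomial identity and a Chernoff/Sanov bound, this event forces $N_j(t-1)\,d^+(\hat\mu_j(t-1),\mu_1)\le\log(1/\beta_t)=\zeta\tilde\alpha_1\log t+c\tilde\alpha_1\log\log T+\log(1/M_{\epsilon,1})$; on a high‑probability event on which $\hat\mu_j(t-1)$ lies within $\eta$ of $\mu_j$ (its complement contributing only $O(1)$ after summing $\sum_{s\ge1}\mathbb{P}(\hat\mu_{j,s}\ge\mu_j+\eta)$) we get $N_j(t-1)\le\frac{\zeta\tilde\alpha_1\log T+o(\log T)}{d(\mu_j+\eta,\mu_1)}$, and counting the pulls of $j$ allowed by this constraint, then letting $\eta\downarrow0$ (absorbed into $\xi$), gives $\sum_t\mathbb{P}(qu_j(t)\ge\mu_1,A_t=j)\le\frac{(1+\xi)\zeta\tilde\alpha_1}{d(\mu_j,\mu_1)}\log T+o(\log T)$. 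Combining the two sums, and noting $O(\log T)=o(T^{1-\zeta\tilde\alpha_2})$ when $\zeta\tilde\alpha_2<1$ while the $T^{1-\zeta\tilde\alpha_2}$ term disappears when $\zeta\tilde\alpha_2\ge1$, splits exactly into the two cases of the theorem.

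The main obstacle is the constant‑tracking in the optimal‑arm term under $0<\zeta\tilde\alpha_2<1$: extracting precisely the factor $(\zeta\tilde\alpha_2+c\tilde\alpha_2)$, rather than a cruder rate with an unspecified logarithmic prefactor, requires handling the Beta--Binomial tail‑probability inversion and the union/peeling over $N_1(t-1)$ carefully, while keeping the $(\log T)$‑powers controlled so that they collapse as claimed once $c\tilde\alpha_2\ge5$. A secondary subtlety is that the quantile‑shift reduction must be uniform in $t$ — this is exactly what the distribution‑free Theorem~\ref{thm:quantileshift} buys us, so that the effective levels $\bar\gamma_{t,j}$ depend only on $t$ and not on the random posteriors — and the optional‑stopping technicality that $S_1(t-1)\mid N_1(t-1)=n$ is not literally $\mathrm{Bin}(n,\mu_1)$ must be resolved, as in Kaufmann et al., by a Doob/martingale argument.
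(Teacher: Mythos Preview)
Your proposal is correct and follows essentially the same route as the paper: reduce the approximate quantiles to exact Beta quantiles via Theorem~\ref{thm:quantileshift} (packaged in the paper as Lemma~\ref{thm:regret2}), use the same over/under-estimation decomposition (the paper's Lemma~\ref{thm:regret3}), and bound the two resulting sums via a self-normalized inequality adapted from Garivier--Capp\'e for the optimal-arm term and a KL counting argument for the sub-optimal-arm term. The only minor departures are that the paper inserts a vanishing slack $\beta_T=1/\sqrt{\log T}$ at the threshold $\mu_1$ (rather than your fixed $\eta$ on $\hat\mu_j$, later sent to $0$) and additionally splits the optimal-arm sum on $\{N_1(t)+2\le(\log T)^2\}$ to absorb the extra $\log(N_1(t)+2)$ nuisance appearing in the lower quantile bound---both technical rather than structural differences.
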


Theorem \ref{thm:regret} provides an exact \textit{finite-time} regret bound and the $o(\cdot)$ term in Theorem \ref{thm:regret} has an exact finite-time closed-form expression that holds for any time horizon $T$; See Step 4 in the proof of Theorem \ref{thm:regret} in Appendix \ref{sec:proofs2}. We only show the most dominant term of the regret bound and shrink the rest to the $o(\cdot)$ term to improve the readability of the main paper. Note that this bound has explicit dependence on 
$\epsilon$, which is $M_{\epsilon,1}^{-1}$ and $M_{\epsilon,2}$ in Step 4 in the proof of Theorem \ref{thm:regret}. Obviously, the error terms $M_{\epsilon,1}^{-1}$ and $M_{\epsilon,2}$ in the bound increase as $\epsilon$ increases. However, this dependence on $\epsilon$ does not impact the dominating term too much. The exact posterior will be more ``concentrated'' on the true mean with small variability as the time t increases, and the impact from the error will vanish; See Remark \ref{largeepsilon}.
 
It is easy to see that to minimize the regret upper bound, we may choose $\zeta=\frac{1}{\tilde{\alpha}_2}$ in Theorem 
\ref{thm:regret}.

\begin{corollary} \label{cor:regret}
Under the same conditions in Theorem 
\ref{thm:regret}, for any $\xi > 0$, choosing the parameter $c$ such that $c\tilde{\alpha}_2 \ge 5$ in the EBUCB algorithm and setting $\gamma_t=1-\frac{1}{t^{1/\tilde{\alpha}_2}(\log T)^c}$, the number of draws of any sub-optimal action $j\ge 2$ is upper-bounded by
$\mathbb{E}[N_j(T)]
\le \frac{(1 + \xi)\frac{\tilde{\alpha}_1}{\tilde{\alpha}_2}}{d(\mu_2, \mu_1)} \log(T) + o(\log T).$
\end{corollary}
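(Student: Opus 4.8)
The plan is to obtain Corollary~\ref{cor:regret} as an immediate specialization of Theorem~\ref{thm:regret} to the parameter choice $\zeta=1/\tilde{\alpha}_2$; essentially all of the substantive work is already contained in Theorem~\ref{thm:regret}. First I would check that this choice is admissible: Assumptions~\ref{assu0} and~\ref{assu2} are unchanged, the requirement $c\tilde{\alpha}_2\ge 5$ on the quantile parameter is imposed verbatim, and the sequence $\gamma_t=1-\frac{1}{t^{1/\tilde{\alpha}_2}(\log T)^c}$ is exactly of the form $1-\frac{1}{t^{\zeta}(\log T)^c}$ with $\zeta=1/\tilde{\alpha}_2>0$, which is increasing in $t$ and tends to $1$, hence a valid input to Algorithm~\ref{alg:EBUCB}.

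Next, with $\zeta=1/\tilde{\alpha}_2$ one has $\zeta\tilde{\alpha}_2=1$, so we land in the second (boundary) case $\zeta\tilde{\alpha}_2\ge 1$ of Theorem~\ref{thm:regret}. Substituting $\zeta\tilde{\alpha}_1=\tilde{\alpha}_1/\tilde{\alpha}_2$ into the logarithmic bound there yields, for every suboptimal arm $j\ge 2$, $\mathbb{E}[N_j(T)]\le \frac{(1+\xi)(\tilde{\alpha}_1/\tilde{\alpha}_2)}{d(\mu_j,\mu_1)}\log T + o(\log T)$, which is the stated bound (displayed for the representative arm $j=2$). The $o(\log T)$ remainder inherits the explicit finite-time closed form from Step~4 of the proof of Theorem~\ref{thm:regret}, now evaluated at $\zeta=1/\tilde{\alpha}_2$.

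Finally, to justify the preceding remark that $\zeta=1/\tilde{\alpha}_2$ is the optimal choice, I would argue as follows. In the logarithmic regime the leading coefficient $(1+\xi)\zeta\tilde{\alpha}_1/d(\mu_j,\mu_1)$ is strictly increasing in $\zeta$, so one wants $\zeta$ as small as possible; but any $\zeta<1/\tilde{\alpha}_2$ pushes the bound into the first regime, where it is $\Theta\big(T^{1-\zeta\tilde{\alpha}_2}\big)$, polynomially worse than $\log T$. Hence the minimizer is the threshold value $\zeta=1/\tilde{\alpha}_2$. The only point requiring a moment of care — and the closest thing to an obstacle here — is that the boundary value $\zeta\tilde{\alpha}_2=1$ must be read as belonging to the logarithmic case: the competing expression $\frac{(\zeta\tilde{\alpha}_2+c\tilde{\alpha}_2)M_{\epsilon,2}eT^{1-\zeta\tilde{\alpha}_2}}{1-\zeta\tilde{\alpha}_2}$ from the first case has a (removable, and irrelevant) singularity at $\zeta\tilde{\alpha}_2=1$ precisely because that case is excluded there; everything else is direct substitution into Theorem~\ref{thm:regret}.
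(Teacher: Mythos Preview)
Your proposal is correct and mirrors the paper's own derivation: the corollary is obtained by plugging $\zeta=1/\tilde{\alpha}_2$ into Theorem~\ref{thm:regret}, landing in the $\zeta\tilde{\alpha}_2\ge 1$ case, and reading off the leading constant $\zeta\tilde{\alpha}_1=\tilde{\alpha}_1/\tilde{\alpha}_2$. Your additional admissibility checks and the optimality discussion for the choice of $\zeta$ are sound and slightly more detailed than what the paper makes explicit.
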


This result states that with the $\epsilon$ error threshold, the regret of the EBUCB algorithm is bounded above by $O(\log T)$ regardless of how large $\epsilon$ is,
which reaches the same order $(\log T)$ of the problem-dependent frequentist regret lower bound \citep{lai1985asymptotically}. In comparison with the exact lower bound, there is a slight difference in the multiplier before the order $(\log T)$: Our upper bound in Corollary \ref{cor:regret} has the additional multiplier $\frac{\tilde{\alpha}_1}{\tilde{\alpha}_2}$, which arises from the approximate inference when estimating the posterior distributions (Assumption \ref{assu0}). If in addition Assumption \ref{assu0} holds for any $\alpha_1>1$ and any $\alpha_2<0$, then we can let $\frac{\tilde{\alpha}_1}{\tilde{\alpha}_2}\to 1$ by taking $\alpha_1\to +\infty$ and $\alpha_2\to -\infty$ to match the exact lower bound. 

In the absence of approximate inference, \citet{kaufmann2012bayesian} showed that $\mathbb{E}[N_j(T)]\le \frac{(1 + \xi)}{d(\mu_2, \mu_1)} \log(T) + o(\log T)$ matching the exact lower bound. Prior to our work, it was unknown in the literature whether the optimal regret order $O(\log T)$ could be achieved in the presence of constant approximate inference error. 
Our result provides a positive answer to this question, despite the fact that the inference error may increase the multiplier before the order $(\log T)$. To the best of our knowledge, this is the first algorithm providing the theoretical regret upper bound that is better than $o(T)$ with constant approximate inference error \citep{phan2019thompson}.

As discussed in \citet{kaufmann2012bayesian}, the horizon-dependent term $(\log T)^c$ in Corollary \ref{cor:regret} is only an artifact of the theoretical analysis to obtain a finite-time regret upper bound. In practice, the model with choice $c = 0$ (i.e., without the horizon-dependent term) already achieves superior performance. This is confirmed by our experiments in Section \ref{sec:exp}. A similar observation in BUCB was indicated in \citet{kaufmann2012bayesian}.

\begin{remark} \label{largeepsilon}
It might appear a little surprising that the result in Corollary \ref{cor:regret} indicates a regret upper bound with the dominating term $O(\log T)$ that does not depend on $\epsilon$, as one may expect that a large $\epsilon$ allows the ``fully swap'' of the posterior of the optimal action and a suboptimal action, making any Bayesian-based approaches unable to distinguish them. However, benefiting from historical observations, the exact posterior will be more ``concentrated'' on the true mean with small variability, which will keep enlarging the $\alpha$-divergence between two actions. This indicates that, for a fixed $\epsilon$, the $\alpha$-divergence between the exact posteriors of two actions can be sufficiently large along with a sufficiently large $t$, so the ``fully swap'' will not happen.
\end{remark}


\begin{remark} \label{quantilechoice}
The $\frac{1}{t^{1/\tilde{\alpha}_2}}$ in EBUCB, instead of the original $\frac{1}{t}$ in BUCB, is a delicate choice to address the tradeoff between making the regret optimal without approximate inference and the presence of inference error. On a technical level, a
power $\zeta$ close to $1$ in $t^{\zeta}$ improves the regret bound without the presence of approximate inference but simultaneously leads to high-level quantile shift
caused by approximate inference. Choosing $\zeta=\frac{1}{\tilde{\alpha}_2}$ is a subtle balance of these two.
\end{remark}


The technical derivation of Theorem \ref{thm:regret} depends on analyzing the quantiles of the approximate distributions used in the EBUCB algorithm. In particular, one of the major techniques in our analysis is Lemma \ref{thm:regret2} in Appendix \ref{sec:proofs2}. It provides explicit upper and lower bounds on the tails of approximate distributions to control the quantiles designed by the EBUCB algorithm. It is obtained by combining the quantile shift between the approximate and exact posterior distributions that developed in Theorem \ref{thm:quantileshift} (Section \ref{sec:quantile}) with the tight bounds on the quantiles of the exact posterior distributions (the proof of Lemma 1 in \citet{kaufmann2012bayesian}). This result is then used to bound the expectation of a decomposition of $N_j(T)$ in Lemma \ref{thm:regret3} that links $N_j(T)$ to the over-estimation of
the optimal arm.




\subsection{Negative Results} \label{sec:tsfails}
We show that one bounded $\alpha$-divergence alone cannot guarantee a sub-linear regret. We provide two worst-case examples,
one where Thompson sampling has a linear regret, and the other where BUCB/EBUCB has a linear regret, even when the inference error measured by one $\alpha$-divergence is small. A similar study on Thompson sampling was conducted in \citet{phan2019thompson} with a special focus on the inference error on the joint distribution of all actions. In our study, nevertheless, we focus on a setting where the inference error on the distribution of each action is assumed; See Remark \ref{jointdistribution}. Therefore, the examples in \citet{phan2019thompson} cannot be directly applied in our setting. Moreover, our second example shows that BUCB/EBUCB could have a linear regret if only one $\alpha$-divergence is considered, which is new.

\begin{assumption} \label{assu3}
Suppose that there exists a positive value $\epsilon \in (0,+\infty)$ such that
\begin{equation} \label{equ:error2}
\begin{split}
D_{\alpha}(Q_{t,j},\Pi_{t,j})&\le \epsilon, \ \forall t\in [T], j\in [K]. 
\end{split}
\end{equation}
\end{assumption}

We establish the following theorem for Thompson sampling:
\begin{theorem} \label{thm:tsfails}
Consider a Bernoulli multi-armed bandit problem where the number of actions is $K = 2$ and $\mu_1>\mu_2$. The prior $\Pi_{0,j}$ is chosen to be the i.i.d. $\text{Beta}(1, 1)$, or the uniform distribution for every action $j=1,2$. For any given $\alpha<1$ and any error threshold $\epsilon>0$, there exists a sequence of distributions $Q_{t-1}$ such that for all $t \ge 1$:\\
1) The probability of sampling from $Q_{t-1}$ choosing action $2$ is greater than a positive constant independent of $t$. \\
2) $Q_{t-1}$ satisfies Assumptions \ref{assu2} and \ref{assu3}.\\
Therefore Thompson sampling from the approximate distribution $Q_{t-1}$ will cause a finite-time linear frequentist regret:
$R(T, \mathcal{A})=\Omega(T).$
\end{theorem}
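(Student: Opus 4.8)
The plan is to construct, at every round and for every realization of the history, an approximate posterior $Q_{t-1}$ obtained from the exact Beta posterior $\Pi_{t-1}$ by mixing a small, \emph{fixed} corruption bump into each arm's marginal, so that Thompson sampling simultaneously draws a large value for arm $2$ and a small value for arm $1$ with a constant probability, while the $\alpha$-divergence between $Q_{t-1,j}$ and $\Pi_{t-1,j}$ stays below $\epsilon$ uniformly. Concretely, fix a constant $\eta\in(0,1/2]$ (to be pinned down last) and take the joint $Q_{t-1}$ to be the product of the marginals
\begin{equation*}
Q_{t-1,1}=(1-\eta)\,\Pi_{t-1,1}+\eta\,\mathrm{Unif}[0,\tfrac14],\qquad
Q_{t-1,2}=(1-\eta)\,\Pi_{t-1,2}+\eta\,\mathrm{Unif}[\tfrac34,1]
\end{equation*}
(any two disjoint sub-intervals of $[0,1]$, the left one for arm $1$ and the right one for arm $2$, would do). Since $\Pi_{t-1,j}$ is a Beta distribution with both parameters $\ge 1$, its support is $[0,1]$, hence so is the support of the mixture $Q_{t-1,j}$, and Assumption \ref{assu2} holds automatically for every $t$.

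Next I would verify Assumption \ref{assu3}. Writing $q=(1-\eta)\pi+\eta b$ with $b$ the bump density, the pointwise bound $q\ge (1-\eta)\pi$ together with the monotonicity of $s\mapsto s^{\alpha}$ on $(0,\infty)$ — increasing for $\alpha\in(0,1)$, decreasing for $\alpha<0$ — gives, for $\alpha\in(0,1)$, $\int q^{\alpha}\pi^{1-\alpha}\ge (1-\eta)^{\alpha}\int\pi = (1-\eta)^{\alpha}$, so that $D_\alpha(Q_{t-1,j},\Pi_{t-1,j}) = \frac{1-\int q^\alpha\pi^{1-\alpha}}{\alpha(1-\alpha)}\le \frac{1-(1-\eta)^{\alpha}}{\alpha(1-\alpha)}$; and for $\alpha<0$, $\int q^{\alpha}\pi^{1-\alpha}\le (1-\eta)^{\alpha}\int\pi=(1-\eta)^{\alpha}$, so $D_\alpha(Q_{t-1,j},\Pi_{t-1,j})\le \frac{(1-\eta)^{\alpha}-1}{\alpha(\alpha-1)}$. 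The limiting case $\alpha=0$, which is $KL(\Pi_{t-1,j},Q_{t-1,j})$, is bounded by $-\log(1-\eta)$ via $\pi/q\le 1/(1-\eta)$. In every case the bound tends to $0$ as $\eta\downarrow 0$ and, crucially, is independent of the particular Beta distribution $\Pi_{t-1,j}$ and of $t$ and the realized history. This is exactly where $\alpha<1$ is used: the bump places $\Theta(1)$ density precisely where $\pi$ may be exponentially small (near an endpoint), which would make $\int q^{\alpha}\pi^{1-\alpha}$ blow up if $1-\alpha<0$, i.e.\ if $\alpha>1$. I regard making this estimate genuinely uniform over all admissible posteriors (rather than for one fixed posterior) as the main point that needs care, even though the computation itself is short.

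Then I would lower-bound the probability that Thompson sampling picks arm $2$. Conditionally on $\mathcal F_{t-1}$, the samples $\hat m_1\sim Q_{t-1,1}$ and $\hat m_2\sim Q_{t-1,2}$ are independent; with probability $\eta$ each (independently) $\hat m_1$ comes from $\mathrm{Unif}[0,\tfrac14]$ and $\hat m_2$ from $\mathrm{Unif}[\tfrac34,1]$, and on that event $\hat m_2\ge\tfrac34>\tfrac14\ge\hat m_1$, so $A_t=2$. Hence $\mathbb P(A_t=2\mid\mathcal F_{t-1})\ge \eta^{2}$ for every history, which is claim (1) with the $t$-independent constant $\eta^{2}$, while claim (2) was established in the previous step. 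Finally I would choose $\eta$ small enough that the divergence bound of the second step is $\le\epsilon$ (possible since it vanishes as $\eta\downarrow 0$), and conclude
$R(T,\mathcal A)=(\mu_1-\mu_2)\sum_{t=1}^{T}\mathbb P(A_t=2)\ge(\mu_1-\mu_2)\,\eta^{2}\,T=\Omega(T)$,
using $\mu_1>\mu_2$. (If one insists on $Q_0=\Pi_0$ at $t=1$, this only removes a single term from the sum and does not affect the $\Omega(T)$ conclusion.)
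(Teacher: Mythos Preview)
Your proposal is correct and takes a genuinely different route from the paper's own construction. The paper keeps $Q_{t,2}=\Pi_{t,2}$ exact and modifies only arm~$1$ by a \emph{multiplicative} reweighting around a history-dependent threshold $b_t$ (the median of $\Pi_{t,2}$): it shrinks $\pi_{t,1}$ by the factor $1/r$ above $b_t$ and inflates it below, then argues $\mathbb P_{Q_{t-1}}(x_2\ge x_1)\ge\tfrac12(1-\tfrac1r)$. You instead use an \emph{additive} mixture, contaminating both marginals with fixed, history-independent uniform bumps on disjoint sub-intervals, and get the constant $\eta^{2}$ via the mixture representation. Both constructions share the same core mechanism for the divergence bound, namely the pointwise inequality $q\ge c\,\pi$ with a $t$-free constant $c<1$ (for the paper $c=1/r$, for you $c=1-\eta$), which for every $\alpha<1$ yields a uniform-in-$t$ bound on $D_\alpha(Q_{t,j},\Pi_{t,j})$ that vanishes as $c\uparrow 1$. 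Your version is arguably cleaner: the corruption rule does not look at the realized posterior or at $t$ at all, so the uniformity over histories is immediate rather than requiring the auxiliary estimate $\frac{F_{t,1}(b_t)}{1-\frac{1}{r}(1-F_{t,1}(b_t))}\le r$ used in the paper. On the other hand, the paper's construction perturbs only one arm and stays absolutely continuous with respect to the exact posterior with a two-valued likelihood ratio, which makes its example slightly sharper as a ``minimal'' adversary.
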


This theorem shows that making one $\alpha$-divergence a small constant alone, even for each action $j$, is not sufficient to guarantee a sub-linear regret of Thompson sampling. Note that Theorem \ref{thm:tsfails} is an enhancement of the results in \citet{phan2019thompson} in the sense that the $Q_t$ constructed by our theorem satisfies more restrictive assumptions. We can derive a similar observation for the BUCB/EBUCB algorithm as follows:
\begin{theorem} \label{thm:bucbfails}
Consider a Bernoulli multi-armed bandit problem where the number of actions is $K = 2$ and $\mu_1>\mu_2$. The prior $\Pi_{0,j}$ is chosen to be the i.i.d. $\text{Beta}(1, 1)$, or the uniform distribution for every action $j=1,2$. 
Consider the general EBUCB algorithm described in Algorithm \ref{alg:EBUCB}.
For any given $\alpha<1$ and any error threshold $\epsilon>0$, there exists a constant $T_0$ (only depending on $\epsilon$, $\alpha$, and the sequence $\{\gamma_t\}$) and a sequence of distributions $Q_{t-1}$ such that for all $t \ge 1$:\\
1) The EBUCB algorithm always chooses action $2$ when $t \ge T_0$.\\
2) $Q_{t-1}$ satisfies Assumptions \ref{assu2} and \ref{assu3}.\\
Therefore the EBUCB algorithm from the approximate distribution $Q_{t-1}$ will cause a finite-time linear frequentist regret:
$R(T, \mathcal{A})=\Omega(T).$
\end{theorem}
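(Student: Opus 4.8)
The plan is to exhibit an adversarial sequence $\{Q_{t-1}\}$ of approximate posteriors that locks EBUCB onto the suboptimal arm after a finite, history-independent time; the guiding idea is to inflate the \emph{upper} tail of the bad arm's approximate posterior so that its Bayesian index stays above the good arm's. \textbf{Reduction.} Since action $1$ is optimal, $R(T,\mathcal{A})=(\mu_1-\mu_2)\,\mathbb{E}[N_2(T)]$, so it suffices to produce $\{Q_{t-1}\}$ satisfying Assumptions~\ref{assu2} and~\ref{assu3} together with a constant $T_0=T_0(\epsilon,\alpha,\{\gamma_t\})$ such that the $\argmax$ in Algorithm~\ref{alg:EBUCB} equals $2$ at every round $t\ge T_0$ on every sample path; then $N_2(T)\ge T-T_0$ and $R(T,\mathcal{A})\ge(\mu_1-\mu_2)(T-T_0)=\Omega(T)$. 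Thus the entire proof reduces to the construction (which may be chosen adaptively on the observed history, although $T_0$ may not).

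\textbf{Construction.} Leave arm $1$ untouched, $Q_{t-1,1}=\Pi_{t-1,1}$ for all $t$, so its inference error is identically $0$; for $t<T_0$ also take $Q_{t-1,2}=\Pi_{t-1,2}$, so EBUCB coincides with exact BUCB on the first $T_0-1$ rounds and hence $N_1(T_0-1)\le T_0-1$ regardless of the observed rewards. For $t\ge T_0$ set
\[
Q_{t-1,2}=(1-\lambda_t)\,\Pi_{t-1,2}+\lambda_t\,\mathrm{Unif}\bigl(1-\eta_t,\,1\bigr),\qquad \lambda_t:=2(1-\gamma_t),\quad \eta_t:=\tfrac12\bigl(1-\gamma_t^{1/T_0}\bigr),
\]
both tending to $0$. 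Every $Q_{t-1,j}$ has a density supported on $[0,1]$, so Assumption~\ref{assu2} holds automatically.

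\textbf{Verification.} For the inference error, note that $q\ge(1-\lambda_t)\pi$ pointwise, where $q,\pi$ denote the densities of $Q_{t-1,2},\Pi_{t-1,2}$; I would then split on the sign of $\alpha<1$, since the monotonicity and convexity of $x\mapsto x^\alpha$ reverse across $\alpha=0$: this yields $\int q^\alpha\pi^{1-\alpha}\ge(1-\lambda_t)^\alpha$ for $\alpha\in(0,1)$, $\int q^\alpha\pi^{1-\alpha}\le(1-\lambda_t)^\alpha$ for $\alpha<0$, and $D_0=KL(\Pi_{t-1,2}\|Q_{t-1,2})\le-\log(1-\lambda_t)$. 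In each case $D_\alpha(Q_{t-1,2},\Pi_{t-1,2})$ is bounded by an explicit function of $\lambda_t$ alone that vanishes as $\lambda_t\to0$, so choosing $T_0$ large enough makes it $\le\epsilon$ for all $t\ge T_0$, giving Assumption~\ref{assu3}. For the arm-choice claim I would induct on $t\ge T_0$: under the hypothesis that arm $2$ was chosen throughout $[T_0,t-1]$, we have $N_1(t-1)\le T_0-1$, so $\Pi_{t-1,1}$ is a Beta law built from at most $T_0-1$ observations, hence stochastically dominated by $\mathrm{Beta}(T_0,1)$, whence $qu_1(t)=Qu(\gamma_t,\Pi_{t-1,1})\le\gamma_t^{1/T_0}$; meanwhile $\lambda_t>1-\gamma_t$ forces $Q_{t-1,2}([0,1-\eta_t])\le1-\lambda_t<\gamma_t$, so $qu_2(t)>1-\eta_t=\tfrac12(1+\gamma_t^{1/T_0})>\gamma_t^{1/T_0}\ge qu_1(t)$, and arm $2$ is selected, closing the induction; the regret bound then follows as in the Reduction step.

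\textbf{Main obstacle.} The delicate point is that as $\gamma_t\to1$ \emph{both} indices tend to $1$, so the argument must quantify precisely how the injected tail mass of $Q_{t-1,2}$ overtakes the fixed, finitely-informed exact posterior of arm $1$; this is what pins down $\lambda_t$ and, especially, the margin $\eta_t$, which must be calibrated against the worst Beta posterior with at most $T_0$ observations, and it is also what keeps $T_0$ history-independent (only $N_1\le T_0-1$ is used). A secondary nuisance is that the inference-error estimate must hold uniformly over all $\alpha<1$, forcing the case split on $\operatorname{sgn}(\alpha)$. Finally, it is worth recording why the mirror move --- shrinking arm $1$'s upper tail, as in the Thompson-sampling example --- does not work here: a Hölder argument in the spirit of Theorem~\ref{thm:quantileshift} shows that the $\gamma_t$-quantile of any $Q$ with $D_\alpha(Q,\Pi_{t-1,1})\le\epsilon$ cannot be pushed below roughly $(1-\epsilon\alpha(1-\alpha))^{1/(1-\alpha)}$, which for small $\epsilon$ exceeds $\mu_2$, the limiting value of arm $2$'s exact index; hence inflating arm $2$'s tail is the essential ingredient and is the part that must be executed carefully.
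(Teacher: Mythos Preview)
Your construction is correct and proves the theorem, but it differs from the paper's in an interesting way. Both leave arm~$1$ untouched and inflate arm~$2$'s upper tail; the difference is \emph{how} the inflation is done and how the comparison of indices is established.

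The paper rescales $\pi_{t,2}$ piecewise around the threshold $b_{t}=Qu(\gamma_{t+1},\Pi_{t,1})$: it shrinks the density by a fixed factor $1/r$ below $b_{t}$ and inflates it above $b_{t}$ so that the total mass is preserved. Because $b_{t-1}$ is exactly arm~$1$'s $\gamma_t$-quantile, one gets $Q_{t-1,2}([0,b_{t-1}])=\tfrac{1}{r}F_{t-1,2}(b_{t-1})\le\tfrac{1}{r}<\gamma_t$ for every $t\ge T_0$ \emph{directly}, with no induction and no control on arm~$1$'s posterior needed; moreover the $\alpha$-divergence bound $\frac{r^{-\alpha}-1}{\alpha(\alpha-1)}$ is completely uniform in $t$. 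You instead mix $\Pi_{t-1,2}$ with a shrinking uniform spike near $1$, which forces you to (i) bound $qu_1(t)$ via the Beta stochastic-dominance argument $\Pi_{t-1,1}\preceq\mathrm{Beta}(T_0,1)$, (ii) run an induction on $t$ to maintain $N_1(t-1)\le T_0-1$, and (iii) let the inference-error bound depend on $\lambda_t$ and vanish rather than be uniform. What your route buys is a very clean one-line divergence estimate from the pointwise bound $q\ge(1-\lambda_t)\pi$, and the explicit quantile $\gamma_t^{1/T_0}$ of $\mathrm{Beta}(T_0,1)$; what the paper's route buys is the avoidance of induction and of any dependence on arm~$1$'s sample path, since tying the cut point to $Qu(\gamma_t,\Pi_{t-1,1})$ makes the index comparison automatic.

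Two small points worth tidying: you should state explicitly that $T_0$ is also taken large enough that $\gamma_{T_0}>\tfrac12$ (so that $\lambda_t<1$ and the mixture is a probability), and your remark about uniformity ``over all $\alpha<1$'' in the obstacle paragraph is stronger than needed, since the theorem fixes a single $\alpha<1$.
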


This theorem shows that making one $\alpha$-divergence a small constant alone, even for each action $j$, is insufficient to guarantee a sub-linear regret of BUCB/EBUCB. We emphasize that the examples in Theorems \ref{thm:tsfails} and \ref{thm:bucbfails} are in the \textit{worst-case} sense, indicating that there exist worst-case examples where Thompson sampling/EBUCB exhibits a linear regret if only one $\alpha$-divergence is bounded. However, this does not imply that EBUCB and Thompson sampling would fail on average in the presence of approximate inference. In fact, Theorem \ref{thm:regret} shows that a sub-linear regret can be achieved if the inference error measured by two different $\alpha$-divergences is bounded.

\section{Experiments} \label{sec:exp}
\begin{figure*}[t]
    \centering
    \includegraphics[width=1.0\textwidth]{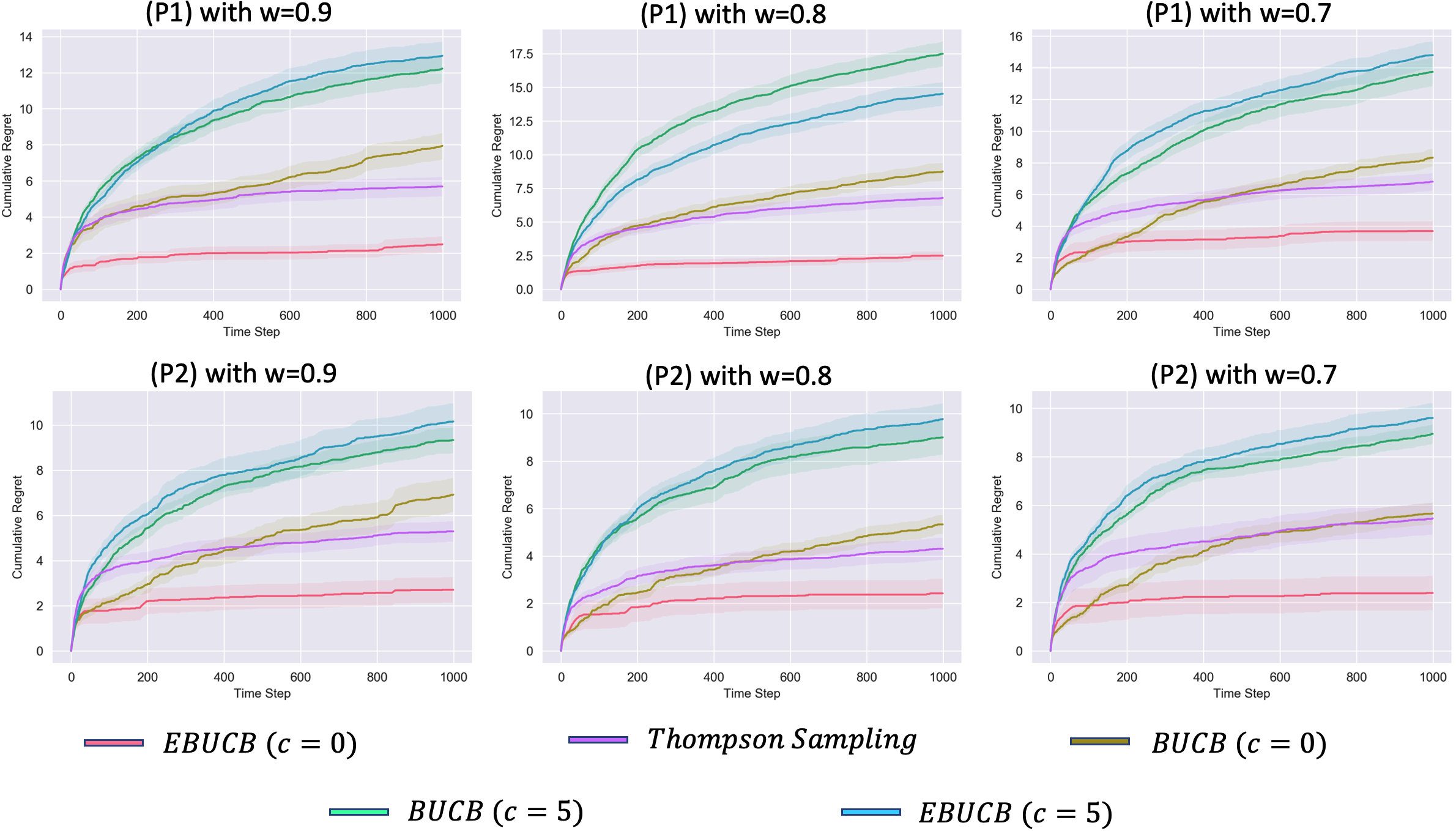}
    \caption{Comparison of EBUCB and baselines with generally misspecified posteriors under different problem settings. Results are averaged over $10$ runs with shaded standard errors.
 }
    \label{fig:nottoo}
\end{figure*}

\begin{figure*}[t]
    \centering
    \includegraphics[width=1.0\textwidth]{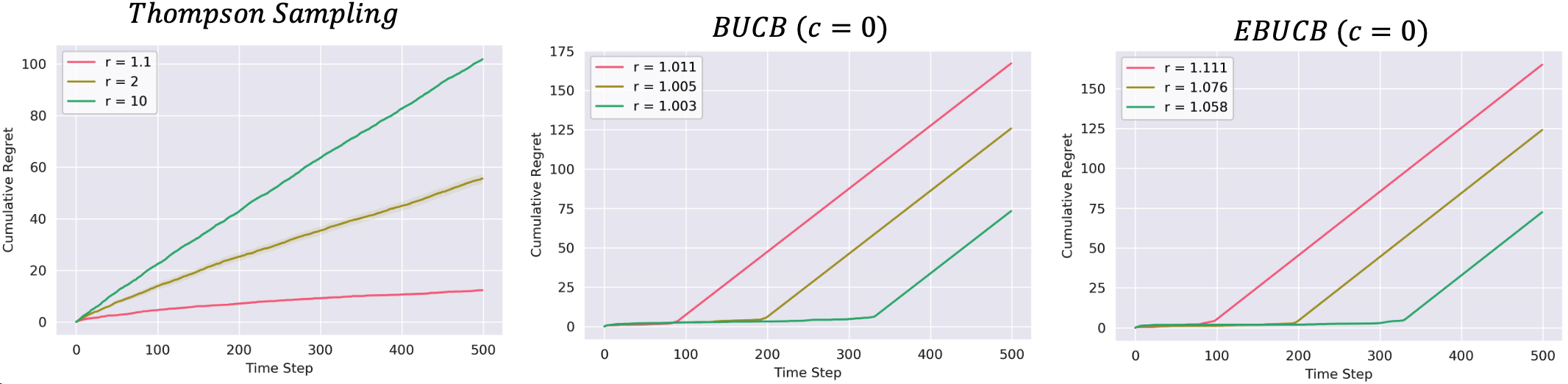}
    \caption{Results of Thompson sampling, BUCB, and EBUCB with worst-case misspecified posteriors under different problem settings. Results are averaged over 10 runs with shaded standard errors.} 
    \label{fig:too}
\end{figure*}

In this section, we conduct numerical experiments to show the correctness of our theory.\footnotemark\footnotetext{The source code for experiments is available at \url{https://github.com/HZ0000/EBUCB}.} In Section \ref{sec:5.1}, we compare the performance of EBUCB with the following baselines: BUCB (using its originally proposed quantile and $Q_t$ as $\Pi_t$ since the exact posterior distribution $\Pi_t$ is unavailable) and Thompson sampling (using $Q_t$ as $\Pi_t$). In Section \ref{sec:5.2}, we construct worst-case examples showing that both EBUCB and Thompson sampling can degenerate to linear regret if only one $\alpha$-divergence is bounded. 
We consider the Bernoulli multi-armed bandit problem which has two actions with mean rewards $[0.7, 0.3]$, and use $\text{Beta}(1,1)$ as the prior distribution of mean reward for each action. At each time step $t$, the exact posterior distribution for each action is $\text{Beta}(1+S_j(t),1+N_j(t)-S_j(t))$, where $S_j(t)$ is the empirical cumulative reward from action $j$ up to time $t$ and $N_j(t)$ is the number of draws from action $j$ up to time $t$.

\subsection{Generally Misspecified Posteriors}
\label{sec:5.1}

Suppose the posterior distributions are misspecified to the following distributions:
{\small
\begin{align*}
(P1):\ &(1-w) * \text{Beta}(1+S_j(t),1+N_j(t)-S_j(t)) + w * \text{Beta}(\frac{1+S_j(t)}{2},\frac{1+N_j(t)-S_j(t)}{2})\\
(P2):\ &(1-w) * \text{Beta}(1+S_j(t),1+N_j(t)-S_j(t))
+ w * \text{Beta}(2(1+S_j(t)),2(1+N_j(t)-S_j(t)))
\end{align*}}
where $w=0.9, 0.8, 0.7$. Figure \ref{fig:nottoo} presents the results of EBUCB and the baselines. Overall, EBUCB achieves consistently superior performance than the baselines, and it outperforms BUCB with considerable improvements. These results confirm the effectiveness of EBUCB across multiple settings. Moreover, EBUCB performs well without the horizon-dependent term (i.e., $c = 0$ in Corollary \ref{cor:regret}). This brings EBUCB practical advantages in real-world applications, as it does not require advanced knowledge of the horizon (i.e., \textit{anytime}). A similar observation of BUCB was also noticed in \citet{kaufmann2012bayesian}. 


\subsection{Worst-Case Misspecified Posteriors}
\label{sec:5.2}

We consider the worst-case examples, Equations \eqref{exp:tsfails} and \eqref{exp:bucbfails}, presented in the proof of Theorems \ref{thm:tsfails} and \ref{thm:bucbfails}, where the posterior distributions are misspecified using one $\alpha$-divergence.
The results of Thompson sampling, BUCB, and EBUCB are displayed in Figure \ref{fig:too}. From these worst-case examples, we observe that:
1) Thompson sampling exhibits a linear regret after $t\ge 1$. As shown in Theorem \ref{thm:tsfails}, the linear coefficient (i.e., the slope) of the regret depends on the level $r$ that corresponds to the inference error. Specifically, the slope of the regret is increased along with the increased value of $r$, as illustrated in both Figure \ref{fig:too} and the proof of Theorem \ref{thm:tsfails}.
2) BUCB/EBUCB exhibits a linear regret with constant slope $\mu_1-\mu_2$ after $t\ge T_0$, where $T_0$ is the time threshold introduced in Theorem \ref{thm:bucbfails} after which BUCB/EBUCB always chooses the sub-optimal action. The artificial choice of $r$ is to make $T_0=100,200,333$ where $\gamma_{T_0}=\frac{1}{r}$; See the proof of Theorem \ref{thm:bucbfails}. 

In summary, our experiments evidently demonstrate the superior performance of our proposed EBUCB on multi-armed bandit problems with generally misspecified posteriors. Our results also align closely with our theory that making one $\alpha$-divergence a small constant alone is insufficient to guarantee a sub-linear regret of Thompson sampling/BUCB/EBUCB. Hence, making two different $\alpha$-divergences bounded is necessary for the sub-linear regret upper bound.  



\section{Conclusions and Future Work} 
\label{sec:conclude}
In this paper, we propose a general Bayesian bandit algorithm, Enhanced Bayesian Upper Confidence Bound (EBUCB), that achieves superior performance for Bernoulli bandit problems with approximate inference. We prove that, if the inference error measured by two different $\alpha$-divergences is less than a constant, EBUCB can achieve the optimal regret order $O(\log T)$. 
Additionally, we construct worse-case examples to show the necessity of bounding two different $\alpha$-divergences, which is further validated by our experiments. 
We consider the study of other problem settings as meaningful future research that could be built upon our current framework, e.g., extending to the general exponential family bandit problems by leveraging the techniques in \citet{kaufmann2018bayesian}. We will also extend our current framework to contextual bandit problems and investigate the performance of contextual Bayesian bandit algorithms with approximate inference \citet{riquelme2018deep}.






\begin{ack}
This work has been supported in part by the National Science Foundation under grants CAREER CMMI-1834710 and IIS-1849280, and the Cheung-Kong Innovation Doctoral Fellowship. The authors thank the anonymous reviewers for their constructive comments which have helped greatly improve the quality of our paper.
\end{ack}

\medskip

\bibliographystyle{abbrv}
\bibliography{example_paper}

\newpage
\begin{appendices}

We provide further results and discussions in this appendix. Section \ref{sec:assumption} presents detailed discussions on Assumption \ref{assu0}. Section \ref{sec:proofs1} presents proofs for the results in Section \ref{sec:quantile} in the main paper. Section \ref{sec:proofs2} presents proofs for the results in Section \ref{sec:regretbound}.
Section \ref{sec:proofs3} presents proofs for the results in Section \ref{sec:tsfails}. Section \ref{sec:expmore} presents additional experimental results.

\section{Remarks on Assumption \ref{assu0}} \label{sec:assumption}

\begin{remark} [Constant Threshold]
A constant threshold assumption on inference error appears in \citet{phan2019thompson}. In this study, we adopt a similar assumption as \citet{phan2019thompson}, since no standard assumptions are in place due to the infancy of this field. In practice, keeping inference error below a constant threshold is not only feasible but also can be improved. As a concrete example, \citet{mazumdar2020approximate} showed that the inference error of an efficient Langevin MCMC algorithm decreases at the order of $O(1/\sqrt{N_j(t)})$. \citet{xu2022langevin} also establish similar consistency results. On a high level, as more data are collected, the approximate and exact posteriors both become more concentrated, but they could concentrate to the same point of true mean rewards, as \citet{mazumdar2020approximate} showed. 
In addition to MCMC algorithms \citep{andrieu2003introduction} which typically produce a consistent posterior, our assumption also fits other Bayesian inference algorithms, such as variational inference \citep{blei2017variational}, that output approximate posteriors with a systematic computational bias.  

In fact, the negative results in the presence of constant inference error shown by \citet{phan2019thompson} motivate another research direction: A natural idea for obtaining positive results is to construct a ``highly-effective" Bayesian inference approach with vanishing inference error as the number of samples increases, as studied in \citep{mazumdar2020approximate,xu2022langevin}. However, our results demonstrate that the $O(\log T)$ regret upper bound is achievable even with non-vanishing inference error, without requiring specific ``highly-effective'' Bayesian inference approaches.



\end{remark}

\begin{remark} [$\alpha$-divergence]
Studies beyond KL divergence \citep{ruiz2019contrastive,brekelmans2020all}, using $\alpha$-divergence \citep{li2016renyi,hernandez2016black,daudel2021mixture,yang2020alpha} or more generally $f$-divergence \citep{wan2020f,wang2018variational}, have appeared in recent research. 
These studies demonstrated the potential use of general $\alpha$-divergence and other types of divergence in practical model design. 
Moreover, from a practical perspective, the two $\alpha$-divergences between the estimated and the exact posterior distributions could be small, even if only one $\alpha$-divergence is used to construct the approximate distribution. This might be the reason for the superior performance of approximate Bayesian bandit algorithms that consider only the KL divergence in practice. 
\end{remark}

\begin{remark} [Independent Prior]
In multi-armed bandit problems, the prior distribution of mean rewards $\Pi_0$ is typically chosen to be independent among each action $\Pi_{0,1},...,\Pi_{0,K}$, as this is the most reasonable and natural way without further information \citep{agrawal2012analysis,kaufmann2012bayesian,kaufmann2018bayesian,mazumdar2020approximate}. In this case, the posteriors must also be independent among each action because of the Bayesian update \eqref{equ:update}. Therefore, our study focuses on the inference error on the distribution of each action in \eqref{equ:error}, which appears to be more realistic than the inference error on the joint distribution of all actions assumed in \citet{phan2019thompson}. \label{jointdistribution}
\end{remark}

\section{Proofs of Results in Section \ref{sec:quantile}} \label{sec:proofs1}


\begin{proof}[Proof of Theorem \ref{thm:quantileshift}]
Note that $R_1(\gamma)=R_2(\gamma+\delta_{\gamma,\epsilon})$ implies $R_2^{-1}(R_1(\gamma))=\gamma+\delta_{\gamma,\epsilon}$ and $R_1^{-1}(R_2(\gamma+\delta_{\gamma,\epsilon}))=\gamma$. Obviously $R_2^{-1}(R_1(0))=0$ and $R_2^{-1}(R_1(1))=1$. On a high level, our proof technique is to split the $D_{\alpha}(P_1,P_2)$ into two parts using the quantile-based representation of $\alpha$-divergence in Lemma \ref{thm:quantileshift2}, and then use Jensen's inequality (since $\frac{1}{\alpha(\alpha-1)}x^{1-\alpha}$ is a convex function of $x\ge 0$ for any $\alpha\ne 0,1$) to derive a bound for $\delta_{\gamma,\epsilon}$.

By Lemma \ref{thm:quantileshift2} and Jensen's inequality,
\begin{align}
&D_{\alpha}(P_1,P_2)\nonumber\\
=&\frac{1}{\alpha(\alpha-1)}\int_{0}^1 \left(\frac{d}{du}R_2^{-1}(R_1(u))\right)^{1-\alpha}du-\frac{1}{\alpha(\alpha-1)}\nonumber\\
=&\frac{1}{\alpha(\alpha-1)}\int_{0}^\gamma \left(\frac{d}{du}R_2^{-1}(R_1(u))\right)^{1-\alpha}du+\frac{1}{\alpha(\alpha-1)}\int_{\gamma}^1 \log\left(\frac{d}{du}R_2^{-1}(R_1(u))\right)^{1-\alpha}du-\frac{1}{\alpha(\alpha-1)}\nonumber\\
\ge& \frac{\gamma}{\alpha(\alpha-1)}\left(\frac{1}{\gamma}\int_{0}^\gamma \frac{d}{du}R_2^{-1}(R_1(u))du\right)^{1-\alpha}+\frac{1-\gamma}{\alpha(\alpha-1)}\left(\frac{1}{1-\gamma}\int_{\gamma}^1 \frac{d}{du}R_2^{-1}(R_1(u))du\right)^{1-\alpha}-\frac{1}{\alpha(\alpha-1)}\nonumber\\
=& \frac{\gamma}{\alpha(\alpha-1)}\left(\frac{R_2^{-1}(R_1(\gamma))-R_2^{-1}(R_1(0))}{\gamma}\right)^{1-\alpha}+\frac{1-\gamma}{\alpha(\alpha-1)}\left(\frac{R_2^{-1}(R_1(1))-R_2^{-1}(R_1(\gamma))}{1-\gamma}\right)^{1-\alpha}-\frac{1}{\alpha(\alpha-1)}\nonumber\\
=& \frac{\gamma}{\alpha(\alpha-1)}\left(\frac{\gamma+\delta_{\gamma,\epsilon}}{\gamma}\right)^{1-\alpha}+\frac{1-\gamma}{\alpha(\alpha-1)}\left(\frac{1-\gamma-\delta_{\gamma,\epsilon}}{1-\gamma}\right)^{1-\alpha}-\frac{1}{\alpha(\alpha-1)}\label{equ:Jensen}
\end{align}

First, we note that for any $\gamma \in (0,1)$ and any $\delta_{\gamma,\epsilon}\in [-\gamma,1-\gamma]$, the above inequality is indeed achievable. To see this, consider
\begin{equation} \label{equ:Jensenachievable}
p_2(x) = \begin{cases}
\frac{\gamma+\delta_{\gamma,\epsilon}}{\gamma} p_1(x)  & \text{if } x_1 < R_1(\gamma) \\
\frac{1-\gamma-\delta_{\gamma,\epsilon}}{1-\gamma} p_1(x_1) & \text{if } x_1> R_1(\gamma) \end{cases}    
\end{equation}
Simple calculations give
$$\int p_2(x)= \frac{\gamma+\delta_{\gamma,\epsilon}}{\gamma} F_1(R_1(\gamma))+\frac{1-\gamma-\delta_{\gamma,\epsilon}}{1-\gamma}(1-F_1(R_1(\gamma)))=\frac{\gamma+\delta_{\gamma,\epsilon}}{\gamma} \gamma+\frac{1-\gamma-\delta_{\gamma,\epsilon}}{1-\gamma}(1-\gamma)=1$$
where $F_1$ is the cdf of $P_1$, and 
\begin{align*}
&D_{\alpha}(P_1,P_2)\\
=&\frac{1}{\alpha(\alpha-1)}\left(\int \left(\frac{p_2(x)}{p_1(x)}\right)^{1-\alpha}p_1(x)dx-1\right)\\
=&\frac{1}{\alpha(\alpha-1)}\left(\left(\frac{\gamma+\delta_{\gamma,\epsilon}}{\gamma}\right)^{1-\alpha} F_1(R_1(\gamma))+\left(\frac{1-\gamma-\delta_{\gamma,\epsilon}}{1-\gamma}\right)^{1-\alpha}(1-F_1(R_1(\gamma))) -1\right)\\
=& \frac{\gamma}{\alpha(\alpha-1)}\left(\frac{\gamma+\delta_{\gamma,\epsilon}}{\gamma}\right)^{1-\alpha}+\frac{1-\gamma}{\alpha(\alpha-1)}\left(\frac{1-\gamma-\delta_{\gamma,\epsilon}}{1-\gamma}\right)^{1-\alpha}-\frac{1}{\alpha(\alpha-1)}.
\end{align*}
Therefore, the $P_1$ and $P_2$ constructed in Equation \eqref{equ:Jensenachievable} show that the inequality \eqref{equ:Jensen} is indeed achievable.

Next, we consider the function 
$$g(\delta)=\frac{\gamma}{\alpha(\alpha-1)}\left(\frac{\gamma+\delta}{\gamma}\right)^{1-\alpha}+\frac{1-\gamma}{\alpha(\alpha-1)}\left(\frac{1-\gamma-\delta}{1-\gamma}\right)^{1-\alpha}-\frac{1}{\alpha(\alpha-1)}.$$
Taking the derivative of $g(\delta)$ with respect to $\delta$, we obtain
$$g'(\delta)=-\frac{1}{\alpha}\left(\frac{\gamma+\delta}{\gamma}\right)^{-\alpha}+\frac{1}{\alpha}\left(\frac{1-\gamma-\delta}{1-\gamma}\right)^{-\alpha}.$$
$$g''(\delta)=\frac{1}{\gamma}\left(\frac{\gamma+\delta}{\gamma}\right)^{-\alpha-1}+\frac{1}{1-\gamma}\left(\frac{1-\gamma-\delta}{1-\gamma}\right)^{-\alpha-1}>0.$$
It is easy to see that $g(\delta)$ is strictly decreasing when $\delta\in(-\gamma,0)$ and strictly increasing when $\delta\in(0,1-\gamma)$ with minimum $g(0)=0$. 
This fact shows that $\epsilon\ge g(\delta)$ is equivalent to $\delta$ in an interval around 0, i.e., $\delta\in [\underline{\delta},\overline{\delta}]\subset [-\gamma,1-\gamma]$. 
More explicitly, we can obtain an explicit bound for $\underline{\delta},\overline{\delta}$. In the following, we will show that in some cases, the bound for $\delta$ becomes too loose in the sense that $\underline{\delta}$ can be close to $-\gamma$ or $\overline{\delta}$ can be close to $1-\gamma$, in which cases we cannot obtain any useful information.

1) Suppose that $\alpha>1$. When $\delta_{\gamma,\epsilon}>0$, we have
$$ \frac{\gamma}{\alpha(\alpha-1)}\left(\frac{\gamma+\delta_{\gamma,\epsilon}}{\gamma}\right)^{1-\alpha}\ge \frac{\gamma}{\alpha(\alpha-1)}\left(\frac{\gamma+1-\gamma}{\gamma}\right)^{1-\alpha}=\frac{\gamma^\alpha}{\alpha(\alpha-1)}\ge 0$$
as $0<\delta_{\gamma,\epsilon}\le 1-\gamma$.
Therefore $\epsilon\ge D_\alpha(P_1,P_2)$ implies that
$$\frac{1-\gamma}{\alpha(\alpha-1)}\left(\frac{1-\gamma-\delta_{\gamma,\epsilon}}{1-\gamma}\right)^{1-\alpha}\le \epsilon+\frac{1}{\alpha(\alpha-1)}$$
which is equivalent to
$$\frac{1-\gamma-\delta_{\gamma,\epsilon}}{1-\gamma}\ge \left(\frac{\epsilon\alpha(\alpha-1)+1}{1-\gamma}\right)^{\frac{1}{1-\alpha}}.$$
Hence we have
$$\delta_{\gamma,\epsilon}\le 1-\gamma-\left(\epsilon\alpha(\alpha-1)+1\right)^{\frac{1}{1-\alpha}}\left(1-\gamma\right)^{\frac{\alpha}{\alpha-1}}.$$

\textbf{Remark.} Note that when $\alpha>1$, the lower bound of $\delta_{\gamma,\epsilon}$ can be similarly derived but it becomes vicious. To see this, we notice that whenever 
\begin{equation} \label{equ:deltalower}
\left(\epsilon\alpha(\alpha-1)+\gamma\right)^{\frac{1}{1-\alpha}}\gamma^{\frac{\alpha}{\alpha-1}}-\gamma\le \delta_{\gamma,\epsilon}<0,     
\end{equation}
we have that
$$\frac{1-\gamma}{\alpha(\alpha-1)}\left(\frac{1-\gamma-\delta_{\gamma,\epsilon}}{1-\gamma}\right)^{1-\alpha}-\frac{1}{\alpha(\alpha-1)}\le \frac{1-\gamma}{\alpha(\alpha-1)}-\frac{1}{\alpha(\alpha-1)}=\frac{-\gamma}{\alpha(\alpha-1)}$$
as $-\gamma\le \delta_{\gamma,\epsilon}<0$ and
$$ \frac{\gamma}{\alpha(\alpha-1)}\left(\frac{\gamma+\delta_{\gamma,\epsilon}}{\gamma}\right)^{1-\alpha}\le \frac{\gamma}{\alpha(\alpha-1)}\left(\frac{\gamma+\left(\epsilon\alpha(\alpha-1)+\gamma\right)^{\frac{1}{1-\alpha}}\gamma^{\frac{\alpha}{\alpha-1}}-\gamma}{\gamma}\right)^{1-\alpha}=\epsilon+\frac{\gamma}{\alpha(\alpha-1)}$$
which ensures $\epsilon\ge D_\alpha(P_1,P_2)$. However, the lower bound in Equation \eqref{equ:deltalower} is too loose:
$$\lim_{\epsilon\to +\infty}\left(\epsilon\alpha(\alpha-1)+\gamma\right)^{\frac{1}{1-\alpha}}\gamma^{\frac{\alpha}{\alpha-1}}-\gamma=-\gamma$$
which can be sufficiently close to $-\gamma$ if $\epsilon$ is sufficiently large. 

b) Suppose that $\alpha< 0$. When $\delta_{\gamma,\epsilon}<0$, we have
$$ \frac{\gamma}{\alpha(\alpha-1)}\left(\frac{\gamma+\delta_{\gamma,\epsilon}}{\gamma}\right)^{1-\alpha}\ge \frac{\gamma}{\alpha(\alpha-1)}\left(\frac{\gamma-\gamma}{\gamma}\right)^{1-\alpha}= 0$$
as $-\gamma\le \delta_{\gamma,\epsilon}<0$.
Therefore $\epsilon\ge D_\alpha(P_1,P_2)$ implies that
$$\frac{1-\gamma}{\alpha(\alpha-1)}\left(\frac{1-\gamma-\delta_{\gamma,\epsilon}}{1-\gamma}\right)^{1-\alpha}\le \epsilon+\frac{1}{\alpha(\alpha-1)}$$
which is equivalent to
$$\frac{1-\gamma-\delta_{\gamma,\epsilon}}{1-\gamma}\le \left(\frac{\epsilon\alpha(\alpha-1)+1}{1-\gamma}\right)^{\frac{1}{1-\alpha}}.$$
Hence we have
\begin{equation} \label{equ:deltaresult}
\delta_{\gamma,\epsilon}\ge 1-\gamma-\left(\epsilon\alpha(\alpha-1)+1\right)^{\frac{1}{1-\alpha}}\left(1-\gamma\right)^{\frac{\alpha}{\alpha-1}} 
\end{equation}

\textbf{Remark.} Note that when $\alpha<0$, the upper bound of $\delta_{\gamma,\epsilon}$ can be similarly derived but it becomes vicious. To see this, we notice that whenever 
\begin{equation} \label{equ:deltaupper}
0\le \delta_{\gamma,\epsilon}<\min\{\left(\epsilon\alpha(\alpha-1)+\gamma\right)^{\frac{1}{1-\alpha}}\gamma^{\frac{\alpha}{\alpha-1}}-\gamma, 1-\gamma\}    
\end{equation}
we have that
$$\frac{1-\gamma}{\alpha(\alpha-1)}\left(\frac{1-\gamma-\delta_{\gamma,\epsilon}}{1-\gamma}\right)^{1-\alpha}-\frac{1}{\alpha(\alpha-1)}\le \frac{1-\gamma}{\alpha(\alpha-1)}-\frac{1}{\alpha(\alpha-1)}=\frac{-\gamma}{\alpha(\alpha-1)}$$
as $-\gamma\le \delta_{\gamma,\epsilon}<0$ and
$$ \frac{\gamma}{\alpha(\alpha-1)}\left(\frac{\gamma+\delta_{\gamma,\epsilon}}{\gamma}\right)^{1-\alpha}\le \frac{\gamma}{\alpha(\alpha-1)}\left(\frac{\gamma+\left(\epsilon\alpha(\alpha-1)+\gamma\right)^{\frac{1}{1-\alpha}}\gamma^{\frac{\alpha}{\alpha-1}}-\gamma}{\gamma}\right)^{1-\alpha}=\epsilon+\frac{\gamma}{\alpha(\alpha-1)}$$
which ensures $\epsilon\ge D_\alpha(P_1,P_2)$. However, the lower bound in Equation \eqref{equ:deltaupper} is too loose:
$$\lim_{\epsilon\to +\infty}\left(\epsilon\alpha(\alpha-1)+\gamma\right)^{\frac{1}{1-\alpha}}\gamma^{\frac{\alpha}{\alpha-1}}-\gamma=+\infty$$
so the lower bound in Equation \eqref{equ:deltaupper} can be $1-\gamma$ if $\epsilon$ is large.

c) Note that when $\alpha\in (0,1)$, $\alpha(\alpha-1)<0$ and thus we have
$$ \frac{\gamma}{\alpha(\alpha-1)}\left(\frac{\gamma+\delta_{\gamma,\epsilon}}{\gamma}\right)^{1-\alpha}\le \frac{\gamma}{\alpha(\alpha-1)}\left(\frac{\gamma-\gamma}{\gamma}\right)^{1-\alpha}= 0$$
and
$$\frac{1-\gamma}{\alpha(\alpha-1)}\left(\frac{1-\gamma-\delta_{\gamma,\epsilon}}{1-\gamma}\right)^{1-\alpha}\le \frac{1-\gamma}{\alpha(\alpha-1)}\left(\frac{1-\gamma-(1-\gamma)}{1-\gamma}\right)^{1-\alpha}=0$$
as $-\gamma\le \delta_{\gamma,\epsilon}\le 1-\gamma$. Hence, as long as $\epsilon\ge \frac{-1}{\alpha(\alpha-1)}$, we always have
$$\frac{\gamma}{\alpha(\alpha-1)}\left(\frac{\gamma+\delta_{\gamma,\epsilon}}{\gamma}\right)^{1-\alpha}+\frac{1-\gamma}{\alpha(\alpha-1)}\left(\frac{1-\gamma-\delta_{\gamma,\epsilon}}{1-\gamma}\right)^{1-\alpha}-\frac{1}{\alpha(\alpha-1)}\le \epsilon$$
for any $\delta_{\gamma,\epsilon}\in [-\gamma,1-\gamma]$.
Moreover, as we discussed below Equation \eqref{equ:Jensen}, for any $\delta_{\gamma,\epsilon}\in [-\gamma,1-\gamma]$, there exist two distributions $P_1$ and $P_2$ such that
$$D_{\alpha}(P_1,P_2)=\frac{\gamma}{\alpha(\alpha-1)}\left(\frac{\gamma+\delta_{\gamma,\epsilon}}{\gamma}\right)^{1-\alpha}+\frac{1-\gamma}{\alpha(\alpha-1)}\left(\frac{1-\gamma-\delta_{\gamma,\epsilon}}{1-\gamma}\right)^{1-\alpha}-\frac{1}{\alpha(\alpha-1)},$$
which shows that $D_{\alpha}(P_1,P_2)\le \epsilon$ holds for such $P_1$ and $P_2$.
This implies that the condition $D_{\alpha}(P_1,P_2)\le \epsilon$ cannot control the quantile shift between $P_1$ and $P_2$ in general when $\alpha\in (0,1)$.
\end{proof}

\begin{proof}[Proof of Lemma \ref{thm:quantileshift2}]
Let $F_1$ and $F_2$ be the cdfs of $P_1$ and $P_2$ respectively. Since $F_1$ and $F_2$ are absolute continuous and strictly increasing (as they have positive densities), we have that $F_i(R_i(u))=u$ for $0\le u\le 1$. Taking the derivative with respect to both sides of $F_i(R_i(u))=u$, we obtain
\begin{equation} \label{equ:der}
R'_i(u) p_i(R_i(u))=1    
\end{equation}
where $R'_i(u):= \frac{dR_i(u)}{du}$. Note that we have the following equality:
\begin{align}
\frac{d}{du}R_2^{-1}(R_1(u))&=R'_1(u)\frac{dR_2^{-1}(v)}{dv}|_{v=R_1(u)}\nonumber\\
&=R'_1(u)\frac{1}{R'_2(R_2^{-1}(v))}|_{v=R_1(u)}\nonumber\\
&=R'_1(u)p_2(R_2(R_2^{-1}(v)))|_{v=R_1(u)} \quad \text{by Equation \eqref{equ:der}}\nonumber\\
&=R'_1(u)p_2(R_1(u)) \label{equ:der2}.
\end{align}

Using integration by substitution, for $\alpha$-divergence, we obtain that
\begin{align*}
D_\alpha(P_1,P_2)&=\frac{1}{\alpha(\alpha-1)}\left(\int p_1(x)^{\alpha}p_2(x)^{1-\alpha}dx-1\right)\\
&=\frac{1}{\alpha(\alpha-1)}\left(\int \left(\frac{p_1(x)}{p_2(x)}\right)^{\alpha-1}p_1(x)dx-1\right)\\
&=\frac{1}{\alpha(\alpha-1)}\left(\int_{0}^1 \left(\frac{p_1(R_1(u))}{p_2(R_1(u))}\right)^{\alpha-1} p_1(R_1(u)) dR_1(u)-1\right)\\
&=\frac{1}{\alpha(\alpha-1)}\left(\int_{0}^1 \left(\frac{1}{R'_1(u)p_2(R_1(u))}\right)^{\alpha-1}p_1(R_1(u))R'_1(u) du-1\right)\\
&=\frac{1}{\alpha(\alpha-1)}\left( \int_{0}^1 \left(R'_1(u)p_2(R_1(u))\right)^{1-\alpha}du-1\right) \quad \text{by Equation \eqref{equ:der}}\\
&=\frac{1}{\alpha(\alpha-1)}\left(\int_{0}^1 \left(\frac{d}{du}R_2^{-1}(R_1(u))\right)^{1-\alpha}du-1\right) \quad \text{by Equation \eqref{equ:der2}.}
\end{align*}

Similarly, for KL divergence, we obtain that
\begin{align*}
KL(P_1,P_2)&=\int_{0}^1 \log\left(\frac{p_1(R_1(u))}{p_2(R_1(u))}\right)p_1(R_1(u)) dR_1(u)\\
&=\int_{0}^1 \log\left(\frac{1}{R'_1(u)p_2(R_1(u))}\right)p_1(R_1(u))R'_1(u) du\\
&=-\int_{0}^1 \log\left(R'_1(u)p_2(R_1(u))\right)du \quad \text{by Equation \eqref{equ:der}}\\
&=-\int_{0}^1 \log\left(\frac{d}{du}R_2^{-1}(R_1(u))\right)du  \quad \text{by Equation \eqref{equ:der2}.}
\end{align*}
\end{proof}

\section{Proofs of Results in Section \ref{sec:regretbound}} \label{sec:proofs2}

We first prove two useful lemmas.

\begin{lemma} \label{thm:regret2}
Under the same condition in Theorem \ref{thm:regret}, the quantiles of the approximate distributions $qu_j(t)$ chosen by the EBUCB
algorithm satisfies the following bound:
\begin{align*}
\underline{u}_j(t)\le qu_j(t) \le \overline{u}_j(t)
\end{align*}
where
\begin{align*}
&\overline{u}_j(t) = \argmax{x> \frac{S_j(t)}{N_j(t)}}
\Big\{d\left(\frac{S_j(t)}{N_j(t)}
, x\right) \le
\frac{\zeta\tilde{\alpha}_1\log(t) + c\tilde{\alpha}_1 \log(\log T)-\log(M_{\epsilon,1})}{N_j(t)}    \Big\}.
\end{align*}
\begin{align*}
&\underline{u}_j(t) = \argmax{x> \frac{S_j(t)}{N_j(t)+1}}
\Big\{d\left(\frac{S_j(t)}{N_j(t)+1}
, x\right) \le
 \frac{\zeta\tilde{\alpha}_2\log(t) + c\tilde{\alpha}_2 \log(\log T)-\log(M_{\epsilon,2})-\log(N_j(t)+2)}{N_j(t)+1}    \Big\},
\end{align*}
\end{lemma}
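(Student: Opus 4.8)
\textbf{Proof proposal for Lemma \ref{thm:regret2}.}

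The plan is to bound $qu_j(t) = Qu(\gamma_t, Q_{t-1,j})$ by transferring known tight bounds on the quantiles of the \emph{exact} posterior $\Pi_{t-1,j} = \text{Beta}(1+S_j(t-1), 1+N_j(t-1)-S_j(t-1))$ through the quantile-shift result of Theorem \ref{thm:quantileshift}. Concretely, Theorem \ref{thm:quantileshift}(a) with $\alpha=\alpha_1>1$ tells us that $Qu(\gamma_t, \Pi_{t-1,j}) = Qu(\gamma_t + \delta, Q_{t-1,j})$ for some shift $\delta$ with an upper bound controlled by $M_{\epsilon,1}$; equivalently, the $\gamma_t$-quantile of $Q_{t-1,j}$ equals some $\gamma'$-quantile of $\Pi_{t-1,j}$ with $\gamma' \ge$ (an explicit expression). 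Writing $1-\gamma_t = \tfrac{1}{t^\zeta (\log T)^c}$, part (a) gives $1 - \gamma' \ge M_{\epsilon,1}(1-\gamma_t)^{\tilde\alpha_1}$, hence $qu_j(t) = Qu(\gamma_t, Q_{t-1,j}) \le Qu(\gamma', \Pi_{t-1,j})$ with $\gamma'$ as above; symmetrically, Theorem \ref{thm:quantileshift}(b) with $\alpha=\alpha_2<0$ yields a matching lower bound $qu_j(t) \ge Qu(\gamma'', \Pi_{t-1,j})$ with $1-\gamma'' \le M_{\epsilon,2}(1-\gamma_t)^{\tilde\alpha_2}$. (Here I must keep careful track of directions: increasing the quantile level moves the quantile value up, so the upper bound on $\delta$ produces an upper bound on $qu_j(t)$, and vice versa.)

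Next I would invoke the tight concentration bounds on Beta quantiles from the proof of Lemma 1 in \citet{kaufmann2012bayesian}. Those bounds say, roughly, that for $\Pi = \text{Beta}(1+S, 1+N-S)$ one has $Qu(1-\eta, \Pi) \le \argmax_{x > S/N}\{ d(S/N, x) \le \tfrac{\log(1/\eta)}{N}\}$ and a lower bound of the form $Qu(1-\eta, \Pi) \ge \argmax_{x > S/(N+1)}\{ d(S/(N+1), x) \le \tfrac{\log(1/\eta) - \log(N+2)}{N+1}\}$, using the relationship between the Beta CDF and Binomial tail probabilities (Chernoff/KL bounds for the Binomial). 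Plugging $\eta = 1-\gamma' $ into the upper bound: $\log(1/(1-\gamma')) \le \log(1/(M_{\epsilon,1}(1-\gamma_t)^{\tilde\alpha_1})) = -\log M_{\epsilon,1} + \tilde\alpha_1\log(1/(1-\gamma_t)) = -\log M_{\epsilon,1} + \tilde\alpha_1(\zeta\log t + c\log\log T)$, which is exactly the numerator in $\overline{u}_j(t)$. Similarly, plugging $\eta = 1-\gamma''$ with $1-\gamma'' \le M_{\epsilon,2}(1-\gamma_t)^{\tilde\alpha_2}$ into the lower bound gives $\log(1/(1-\gamma'')) - \log(N+2) \ge -\log M_{\epsilon,2} + \tilde\alpha_2(\zeta\log t + c\log\log T) - \log(N_j(t)+2)$, matching the numerator in $\underline{u}_j(t)$. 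Since $N_j(t-1) = N_j(t)$ and $S_j(t-1) = S_j(t)$ at the moment the quantile is computed (the update for step $t$ uses $\mathcal{F}_{t-1}$), the empirical means $S_j(t)/N_j(t)$ and $S_j(t)/(N_j(t)+1)$ appearing in the statement are the correct ones; I should state this bookkeeping convention explicitly.

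The main obstacle I anticipate is the careful composition of the two approximations and getting every inequality direction and every monotonicity right: the map $x \mapsto Qu(x,\rho)$ is increasing, $\log(1/\cdot)$ is decreasing, and $d(\hat\mu, \cdot)$ is increasing on $(\hat\mu, 1)$ so the $\argmax$-over-a-sublevel-set is monotone in the budget — chaining these correctly for both the upper and lower bound, while also checking the edge cases where the relevant quantile level exceeds $1$ or the $\argmax$ set is empty (in which case the trivial bounds $0 \le qu_j(t) \le 1$ suffice since the support is $[0,1]$ by Assumption \ref{assu2}). A secondary technical point is verifying that Theorem \ref{thm:quantileshift} applies: $\Pi_{t-1,j}$ and $Q_{t-1,j}$ both have densities on $[0,1]$ (Assumption \ref{assu2} and the Beta density), so the hypotheses of the theorem are met, and Assumption \ref{assu0} supplies the required divergence bounds $D_{\alpha_1}(Q_{t-1,j},\Pi_{t-1,j}) \le \epsilon$ and $D_{\alpha_2}(Q_{t-1,j},\Pi_{t-1,j}) \le \epsilon$. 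I would also double-check the precise form of the Kaufmann–Lai lower quantile bound, as the $-\log(N_j(t)+2)$ correction term is exactly where that lemma's slack enters, and I want the constant to line up with the claimed expression.
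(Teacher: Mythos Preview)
Your proposal is correct and follows essentially the same route as the paper: apply Theorem~\ref{thm:quantileshift} (with $P_1=Q_{t-1,j}$, $P_2=\Pi_{t-1,j}$, so that Assumption~\ref{assu0} supplies the needed bounds on $D_{\alpha_i}(P_1,P_2)$) to convert $Qu(\gamma_t,Q_{t-1,j})$ into a quantile of the exact Beta posterior at a shifted level, then invoke the Beta-quantile bounds from the proof of Lemma~1 in \citet{kaufmann2012bayesian}. Just be careful that your opening sentence has the roles of $P_1,P_2$ reversed relative to what Assumption~\ref{assu0} actually controls; your ``equivalently'' reformulation is the correct one and is exactly what the paper does.
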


\begin{proof}[Proof of Lemma \ref{thm:regret2}]
Recall that $M_{\epsilon,1}=\left(\epsilon\alpha_1(\alpha_1-1)+1\right)^{\frac{1}{1-\alpha_1}}<1$, $\tilde{\alpha}_1=\frac{\alpha_1}{\alpha_1-1}>0$, $M_{\epsilon,2}=\left(\epsilon\alpha_2(\alpha_2-1)+1\right)^{\frac{1}{1-\alpha_2}}>1$, $\tilde{\alpha}_2=\frac{\alpha_2}{\alpha_2-1}>0$.
First we notice that by Theorem \ref{thm:quantileshift} part a) (where $P_1$ corresponds to $Q_{t,{j-1}}$), we have
\begin{align*}
qu_j(t)&=Qu(1 - \frac{1}{t^{\zeta}(\log T)^c}, Q_{t-1,j})\\
&=Qu(1 - \frac{1}{t^{\zeta}(\log T)^c}+\delta_{1 - \frac{1}{t^{\zeta}(\log T)^c},\epsilon}, \Pi_{t-1,j})\\
&\le Qu(1 - \frac{1}{t^{\zeta}(\log T)^c}+ \frac{1}{t^{\zeta}(\log T)^c} - M_{\epsilon,1} \frac{1}{t^{\zeta\tilde{\alpha}_1}(\log T)^{c\tilde{\alpha}_1}}, \Pi_{t-1,j})    \\
&= Qu(1 - \frac{M_{\epsilon,1}}{t^{\zeta\tilde{\alpha}_1}(\log T)^{c\tilde{\alpha}_1}}, \Pi_{t-1,j})   
\end{align*}
since $D_{\alpha_1}(Q_{t,{j-1}},\Pi_{t,{j-1}})\le \epsilon$ and we have use the fact that $Qu$ is non-decreasing. Now we apply the proof of Lemma 1 in \citet{kaufmann2012bayesian}, the tight bounds of the
quantiles of the Beta distributions, to obtain
\begin{align*}
&Qu(1 - \frac{M_{\epsilon,1}}{t^{\zeta\tilde{\alpha}_1}(\log T)^{c\tilde{\alpha}_1}}, \Pi_{t-1,j})    \\
\le & \argmax{x> \frac{S_j(t)}{N_j(t)}}
\Big\{d\left(\frac{S_j(t)}{N_j(t)}
, x\right) \le
\frac{\log\left(\frac{1}{\frac{M_{\epsilon,1}}{t^{\zeta\tilde{\alpha}_1}(\log T)^{c\tilde{\alpha}_1}}}\right)}{N_j(t)}    \Big\}\\
\le & \argmax{x> \frac{S_j(t)}{N_j(t)}}
\Big\{d\left(\frac{S_j(t)}{N_j(t)}
, x\right) \le \frac{\zeta\tilde{\alpha}_1\log(t) + c\tilde{\alpha}_1 \log(\log T)-\log(M_{\epsilon,1})}{N_j(t)}    \Big\}.\\
=&\overline{u}_j(t)
\end{align*}

Similarly, by Theorem \ref{thm:quantileshift} part b) (where $P_1$ corresponds to $Q_{t,{j-1}}$), we have that
\begin{align*}
qu_j(t)&=Qu(1 - \frac{1}{t^{\zeta}(\log T)^c}, Q_{t-1,j})\\
&=Qu(1 - \frac{1}{t^{\zeta}(\log T)^c}+\delta_{1 - \frac{1}{t^{\zeta}(\log T)^c},\epsilon}, \Pi_{t-1,j})\\
&\ge Qu(1 - \frac{1}{t^{\zeta}(\log T)^c}+ \frac{1}{t^{\zeta}(\log T)^c} - M_{\epsilon,2} \frac{1}{t^{\zeta\tilde{\alpha}_2}(\log T)^{c\tilde{\alpha}_2}}, \Pi_{t-1,j})    \\
&= Qu(1 - \frac{M_{\epsilon,2}}{t^{\zeta\tilde{\alpha}_2}(\log T)^{c\tilde{\alpha}_2}}, \Pi_{t-1,j})   
\end{align*}
since $D_{\alpha_2}(\Pi_{t,{j-1}},Q_{t,{j-1}})\le \epsilon$ and we have use the fact that $Qu$ is non-decreasing. Now we apply the proof of Lemma 1 in \citet{kaufmann2012bayesian}, the tight bounds of the
quantiles of the Beta distributions, to obtain
\begin{align*}
&Qu(1 - \frac{M_{\epsilon,2}}{t^{\zeta\tilde{\alpha}_2}(\log T)^{c\tilde{\alpha}_2}}, \Pi_{t-1,j})   \\
\ge & \argmax{x> \frac{S_j(t)}{N_j(t)+1}}
\Big\{d\left(\frac{S_j(t)}{N_j(t)+1}
, x\right) \le
\frac{\log\left(\frac{1}{\frac{M_{\epsilon,2}}{t^{\zeta\tilde{\alpha}_2}(\log T)^{c\tilde{\alpha}_2}}(N_j(t)+2)}\right)}{N_j(t)+1}    \Big\}\\
\ge & \argmax{x> \frac{S_j(t)}{N_j(t)+1}}
\Big\{d\left(\frac{S_j(t)}{N_j(t)+1}
, x\right) \le
\frac{\zeta\tilde{\alpha}_2\log(t) + c\tilde{\alpha}_2 \log(\log T)-\log(M_{\epsilon,2})-\log(N_j(t)+2)}{N_j(t)+1}    \Big\}\\
=&\underline{u}_j(t)
\end{align*}

Therefore, we conclude that
\begin{align*}
\underline{u}_j(t)\le qu_j(t) \le \overline{u}_j(t).
\end{align*}

\end{proof}

Based on Lemma \ref{thm:regret2}, we can obtain a UCB-type decomposition of the number of draws of any sub-optimal action $j\ge 2$ as follows. 
\begin{lemma} \label{thm:regret3}
Under the same condition in Theorem \ref{thm:regret}, we have that for any constant $\beta_T$,
\begin{equation}\label{equ:numberofdraws}
\begin{split}
N_2(T)&\le \sum_{t=1}^T
\mathbbm{1}\{\mu_1-\beta_T>\underline{u}_1(t)\}\\
&+\sum_{t=1}^T
\mathbbm{1}\{(\mu_1-\beta_T\le \overline{u}_2(t))\cap (A_t=2)\}.   
\end{split}
\end{equation}
\end{lemma}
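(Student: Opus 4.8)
The plan is to start from the trivial identity $N_2(T)=\sum_{t=1}^{T}\mathbbm{1}\{A_t=2\}$ and to split each summand according to whether the lower bound $\underline{u}_1(t)$ on the quantile of the optimal arm falls below or above the threshold $\mu_1-\beta_T$. For the piece on which $\mu_1-\beta_T>\underline{u}_1(t)$, I simply drop the event $\{A_t=2\}$ and bound $\mathbbm{1}\{(A_t=2)\cap(\mu_1-\beta_T>\underline{u}_1(t))\}\le\mathbbm{1}\{\mu_1-\beta_T>\underline{u}_1(t)\}$, which already accounts for the first sum in \eqref{equ:numberofdraws}.

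For the complementary piece, I would show the inclusion $\{A_t=2\}\cap\{\mu_1-\beta_T\le\underline{u}_1(t)\}\subseteq\{A_t=2\}\cap\{\mu_1-\beta_T\le\overline{u}_2(t)\}$. On the left-hand event, Lemma \ref{thm:regret2} applied to arm $1$ gives $\mu_1-\beta_T\le\underline{u}_1(t)\le qu_1(t)$; since the EBUCB rule selects $A_t\in\argmax{j}qu_j(t)$, having $A_t=2$ forces $qu_2(t)\ge qu_1(t)$; and Lemma \ref{thm:regret2} applied to arm $2$ gives $qu_2(t)\le\overline{u}_2(t)$. Chaining these three inequalities yields $\mu_1-\beta_T\le\overline{u}_2(t)$, which is exactly the inclusion. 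Hence $\mathbbm{1}\{(A_t=2)\cap(\mu_1-\beta_T\le\underline{u}_1(t))\}\le\mathbbm{1}\{(\mu_1-\beta_T\le\overline{u}_2(t))\cap(A_t=2)\}$.

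Summing the two bounds over $t\in[T]$ produces the claimed decomposition. I do not expect any real obstacle here: the only points needing care are invoking Lemma \ref{thm:regret2} in the correct direction for each of the two arms, and handling the $\argmax$ selection rule so that $A_t=2$ genuinely implies $qu_2(t)\ge qu_1(t)$ (the tie-breaking convention is irrelevant, since it only affects which maximizer is picked, not the value of the maximum). The constant $\beta_T$ enters merely as a fixed threshold and plays no role in the argument; its value will be chosen later, in the proof of Theorem \ref{thm:regret}, so as to balance the ``under-estimation of the optimal arm'' sum against the ``over-estimation of the sub-optimal arm'' sum.
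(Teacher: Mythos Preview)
Your proposal is correct and is essentially the same argument as the paper's: start from $N_2(T)=\sum_t\mathbbm{1}\{A_t=2\}$, split on a threshold event, drop $\{A_t=2\}$ on one piece, and on the other piece chain $\mu_1-\beta_T\le qu_1(t)\le qu_2(t)\le\overline{u}_2(t)$ using Lemma~\ref{thm:regret2} and the $\argmax{}$ rule. The only cosmetic difference is that the paper splits on the event $\{\mu_1-\beta_T>qu_1(t)\}$ and then invokes $qu_1(t)\ge\underline{u}_1(t)$ to pass to $\{\mu_1-\beta_T>\underline{u}_1(t)\}$, whereas you split directly on $\{\mu_1-\beta_T>\underline{u}_1(t)\}$ and use $\underline{u}_1(t)\le qu_1(t)$ on the complementary piece; the two orderings are logically equivalent.
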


\begin{proof}[Proof of Lemma \ref{thm:regret3}]
We have that, by definition,
\begin{align*}
N_2(T)&= \sum_{t=1}^T
\mathbbm{1}\{A_t=2\}\\
&=\sum_{t=1}^T
\mathbbm{1}\{(\mu_1-\beta_T>q_1(t))\cap (A_t=2)\}+\sum_{t=1}^T
\mathbbm{1}\{(\mu_1-\beta_T\le q_1(t))\cap (A_t=2)\} \\
&\le \sum_{t=1}^T
\mathbbm{1}\{\mu_1-\beta_T>q_1(t)\}+\sum_{t=1}^T
\mathbbm{1}\{(\mu_1-\beta_T\le q_1(t))\cap (A_t=2)\}\\ 
&\le \sum_{t=1}^T
\mathbbm{1}\{\mu_1-\beta_T>\underline{u}_1(t)\}+\sum_{t=1}^T
\mathbbm{1}\{(\mu_1-\beta_T\le \overline{u}_2(t))\cap (A_t=2)\} 
\end{align*}
where the last inequality follows from the fact that $q_1(t)\ge \underline{u}_1(t)$ and when $A_t=2$, $q_1(t)\le q_2(t)\le \overline{u}_2(t)$.
\end{proof}

Therefore, to obtain Theorem \ref{thm:regret}, it is sufficient to analyze the two terms in Lemma \ref{thm:regret3}. 
\begin{proof}[Proof of Theorem \ref{thm:regret}]
Without loss of generality, we let $j=2$. (Note that we have assumed the action $1$ is optimal.) By Lemma \ref{thm:regret3}, we only need to bound the two following two terms:
$$D_1:=\sum_{t=1}^T
\mathbbm{1}\{\mu_1-\beta_T>\underline{u}_1(t)\}, \quad D_2:=\sum_{t=1}^T
\mathbbm{1}\{(\mu_1-\beta_T\le \overline{u}_2(t))\cap (A_t=2)\}$$
Let $\beta_T=\sqrt{\frac{1}{\log T}}$. We further split $D_1$ into two parts:
$$D_{1,1}:=\sum_{t=1}^T
\mathbbm{1}\{\mu_1-\beta_T>\underline{u}_1(t), N_1(t) + 2 \le (\log T)^2\},$$
$$D_{1,2}:=\sum_{t=1}^T
\mathbbm{1}\{\mu_1-\beta_T>\underline{u}_1(t), N_1(t) + 2 \ge (\log T)^2 \}.$$

\textbf{Step 1}: Consider $D_{1,1}$.

Note that
$$\frac{\zeta\tilde{\alpha}_2\log(t) + c \tilde{\alpha}_2 \log(\log T)-\log(M_{\epsilon,2})-\log(N_1(t)+2)}{N_1(t)+1} \ge \frac{\zeta\tilde{\alpha}_2\log(t) + (c\tilde{\alpha}_2-2) \log(\log T)-\log(M_{\epsilon,2})}{N_1(t)+1}$$
in $\underline{u}_1(t)$ when $N_1(t) + 2 \le (\log T)^2$. Hence we have that
$$\underline{u}_1(t)\ge \argmax{x> \frac{S_1(t)}{N_1(t)+1}}
\Big\{d\left(\frac{S_1(t)}{N_1(t)+1}
, x\right) \le
\frac{\zeta\tilde{\alpha}_2\log(t) + (c\tilde{\alpha}_2-2) \log(\log T)-\log(M_{\epsilon,2})}{N_1(t)+1}   \Big\}:=\tilde{\underline{u}}_1(t)$$
when $N_1(t) + 2 \le (\log T)^2$. This shows that
$$D_{1,1}\le \sum_{t=1}^T
\mathbbm{1}\{\mu_1>\tilde{\underline{u}}_1(t)\}$$

Similarly to the proof in \citet{kaufmann2012bayesian}, with a straightforward adaptation of the proof of theorem 10 in \citet{garivier2011kl}, we obtain the following self-normalized inequality
\begin{lemma} \label{thm:self-normalized}
$$\mathbb{P} (\mu_1 >\tilde{\underline{u}}_1(t)) \le (\bar{\delta} \log(t) + 1) \exp(-\bar{\delta} + 1)$$
where 
$$\bar{\delta}=\zeta\tilde{\alpha}_2\log(t) + (c\tilde{\alpha}_2-2) \log(\log T)-\log(M_{\epsilon,2}).$$
\end{lemma}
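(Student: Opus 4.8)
The plan is to reduce the bound to the KL-divergence self-normalized deviation inequality of \citet{garivier2011kl} (their Theorem 10), which controls, for i.i.d.\ Bernoulli rewards, the probability that the empirical mean $\hat{\mu}_{1,s}$ sits far (in Bernoulli KL distance $d$) below the true mean $\mu_1$ simultaneously over all sample sizes $s$. First I would unwind the definition of $\tilde{\underline{u}}_1(t)$: it is the largest $x$ above $\frac{S_1(t)}{N_1(t)+1}$ for which $d\!\left(\frac{S_1(t)}{N_1(t)+1}, x\right) \le \frac{\bar\delta}{N_1(t)+1}$, with $\bar\delta = \zeta\tilde{\alpha}_2\log t + (c\tilde{\alpha}_2-2)\log(\log T) - \log(M_{\epsilon,2})$. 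Because $d(p,x)$ is strictly increasing in $x$ for $x>p$, the event $\{\mu_1 > \tilde{\underline{u}}_1(t)\}$ is exactly the event $\{d\!\left(\frac{S_1(t)}{N_1(t)+1}, \mu_1\right) > \frac{\bar\delta}{N_1(t)+1}\}$ intersected with $\{\frac{S_1(t)}{N_1(t)+1} < \mu_1\}$ (when the shifted empirical mean is already above $\mu_1$, $\tilde{\underline{u}}_1(t)\ge \mu_1$ trivially and the event is empty). So it suffices to bound the probability of $d^+\!\left(\frac{S_1(t)}{N_1(t)+1}, \mu_1\right) \cdot (N_1(t)+1) > \bar\delta$.

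Next I would handle the ``$+1$'' in the denominator exactly as in the proof of Lemma~1 of \citet{kaufmann2012bayesian}: writing $\frac{S_1(t)}{N_1(t)+1} = \frac{N_1(t)}{N_1(t)+1}\hat{\mu}_{1,N_1(t)}$, one shows (by convexity / monotonicity of $d$ and the fact that shrinking the empirical mean toward $0$ only helps the lower deviation) that $(N_1(t)+1)\, d^+\!\left(\frac{S_1(t)}{N_1(t)+1}, \mu_1\right) \ge N_1(t)\, d^+\!\left(\hat{\mu}_{1,N_1(t)}, \mu_1\right)$ up to the harmless handling of the $N_1(t)=0$ case; this is precisely the algebraic step already carried out in \citet{kaufmann2012bayesian}, so I would cite it rather than redo it. Then the event is contained in $\{\exists\, s\le t:\ s\, d^+(\hat{\mu}_{1,s}, \mu_1) > \bar\delta\}$, and the self-normalized inequality (Theorem~10 of \citet{garivier2011kl}, in the form used in \citet{kaufmann2012bayesian,garivier2011kl}) gives
\[
\mathbb{P}\Big(\exists\, s\le t:\ s\, d^+(\hat{\mu}_{1,s}, \mu_1) > \bar\delta\Big) \le e\,(\bar\delta \log t + 1)\, e^{-\bar\delta} = (\bar\delta\log t + 1)\exp(-\bar\delta + 1),
\]
which is the claimed bound. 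The one point requiring a word of care is that Theorem~10 of \citet{garivier2011kl} is usually stated with $\lceil \bar\delta \log t\rceil$ or a slightly different constant; I would note that the version with the factor $e$ and the $(\bar\delta\log t + 1)$ envelope is exactly the one invoked in the BUCB analysis of \citet{kaufmann2012bayesian}, so consistency with that reference is immediate.

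The main obstacle is not any deep inequality — Garivier–Cappé's peeling argument does all the heavy lifting — but rather the bookkeeping that translates the EBUCB-specific quantity $\tilde{\underline{u}}_1(t)$, with its modified exponent $\bar\delta$ and its $N_1(t)+1$ normalization, into exactly the input format of the self-normalized bound. Concretely, I expect the fiddly step to be verifying that replacing $N_1(t)$ by $N_1(t)+1$ in the denominator and $\hat{\mu}_{1,N_1(t)}$ by $\frac{S_1(t)}{N_1(t)+1}$ in the first argument of $d$ does not weaken the deviation event — i.e.\ that the ``$+1$ trick'' used to get a clean Beta-quantile bound in Lemma~\ref{thm:regret2} is compatible with the empirical-process bound here. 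Since this identical manipulation appears in \citet{kaufmann2012bayesian}, I would present it as a short lemma-citation plus a one-line monotonicity remark rather than a from-scratch derivation.
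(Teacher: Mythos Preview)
Your approach is essentially the paper's: the paper's entire proof is the sentence ``similarly to the proof in \citet{kaufmann2012bayesian}, with a straightforward adaptation of the proof of Theorem~10 in \citet{garivier2011kl}'', and you are spelling out exactly that adaptation.

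One technical point in your sketch deserves correction. You write $(N_1(t)+1)\, d^+\!\big(\tfrac{S_1(t)}{N_1(t)+1}, \mu_1\big) \ge N_1(t)\, d^+\!\big(\hat{\mu}_{1,N_1(t)}, \mu_1\big)$ and then claim the event is \emph{contained} in $\{\exists\, s\le t:\ s\, d^+(\hat{\mu}_{1,s}, \mu_1) > \bar\delta\}$. The inequality is in the wrong direction for that containment: if the left side exceeds $\bar\delta$ you cannot conclude the right side does. Shrinking the empirical mean toward $0$ and inflating the sample size both \emph{enlarge} the deviation statistic, so the ``$+1$'' event is a superset, not a subset, of the plain one. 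This is why the paper (and \citet{kaufmann2012bayesian}) speak of an \emph{adaptation of the proof} of Theorem~10 rather than a reduction to its statement: one re-runs the peeling/change-of-measure argument directly for the statistics $s\mapsto (s{+}1)\, d^+\!\big(\tfrac{s\hat{\mu}_{1,s}}{s+1},\mu_1\big)$, obtaining the same $(\bar\delta\log t+1)\exp(-\bar\delta+1)$ envelope. Your plan to cite \citet{kaufmann2012bayesian} for this step is exactly right; just do not present it as a monotone comparison to the standard self-normalized event.
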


Lemma \ref{thm:self-normalized} leads to the following upper bound of $D_{1,1}$:
\begin{align*}
\mathbb{E}[D_{1,1}]&\le \sum_{t=1}^T
\mathbb{P}(\mu_1>\tilde{\underline{u}}_1(t))  \\ 
&\le 1+ \sum_{t=2}^T
\Big((\zeta\tilde{\alpha}_2\log(t) + (c\tilde{\alpha}_2-2) \log(\log T)-\log(M_{\epsilon,2})) \log(t) + 1  \Big)\frac{M_{\epsilon,2} e}{t^{\zeta\tilde{\alpha}_2}(\log T)^{c\tilde{\alpha}_2-4}}\\
&\le 1+ \sum_{t=2}^T
\Big(\zeta\tilde{\alpha}_2 + c\tilde{\alpha}_2-2 + 1  \Big)\frac{M_{\epsilon,2} e}{t^{\zeta\tilde{\alpha}_2}(\log T)^{c\tilde{\alpha}_2-4}}\\
&\le 1+
\Big(\zeta\tilde{\alpha}_2 + c\tilde{\alpha}_2 \Big)\frac{M_{\epsilon,2} e}{(\log T)^{c\tilde{\alpha}_2-4}} \sum_{t=2}^T \int_{t-1}^{t} \frac{1}{x^{\zeta\tilde{\alpha}_2}} dx\\
&\le 1+
\Big(\zeta\tilde{\alpha}_2 + c\tilde{\alpha}_2 \Big)\frac{M_{\epsilon,2} e}{(\log T)^{c\tilde{\alpha}_2-4}} \int_{T}^{1} \frac{1}{x^{\zeta\tilde{\alpha}_2}} dx\\
&\le \begin{cases}
1+\Big(\zeta\tilde{\alpha}_2 + c\tilde{\alpha}_2 \Big)\frac{M_{\epsilon,2} e}{(\log T)^{c\tilde{\alpha}_2-4}} \frac{T^{1-\zeta\tilde{\alpha}_2}-1}{1-\zeta\tilde{\alpha}_2} \le 1+\frac{\Big(\zeta\tilde{\alpha}_2 + c\tilde{\alpha}_2 \Big) M_{\epsilon,2} e T^{1-\zeta\tilde{\alpha}_2}}{1-\zeta\tilde{\alpha}_2} & \text{if } 0< \zeta\tilde{\alpha}_2 < 1 \\
1+\Big(\zeta\tilde{\alpha}_2 + c\tilde{\alpha}_2 \Big)\frac{M_{\epsilon,2} e}{(\log T)^{c\tilde{\alpha}_2-4}} \log(T) \le 1+\Big(\zeta\tilde{\alpha}_2 + c\tilde{\alpha}_2 \Big) M_{\epsilon,2} e & \text{if } \zeta\tilde{\alpha}_2 \ge 1
\end{cases}
\end{align*}
for $c\tilde{\alpha}_2\ge 5$.

\textbf{Step 2}: Consider $D_{1,2}$. Note that in this term, the optimal action $1$ has been sufficiently drawn to be well estimated, so we can use a loose bound
$$\mathbb{E}[D_{1,1}]\le \sum_{t=1}^T
\mathbbm{1}\{\mu_1-\beta_T>\frac{S_1(t)}{N_1(t)+1}, N_1(t) + 2 \ge (\log T)^2 \}$$
This right-hand side only depends on the draws from action $1$ and has been studied in Theorem 1 in \citet{kaufmann2012bayesian}, so we apply their results:
$$\mathbb{E}[D_{1,2}]\le \frac{1}{T-1}.$$

\textbf{Step 3}: Consider $D_{2}$.
Using the same technique as in lemma 7 in \citet{garivier2011kl}, $D_{2}$ is bounded by
$$D_2\le \sum_{s=1}^T
\mathbbm{1}\{sd^+(\hat{\mu}_{2}(t),\mu_1-\beta_T)\le \zeta\tilde{\alpha}_1 \log(T) + c\tilde{\alpha}_1 \log(\log T)-\log(M_{\epsilon,1})\}$$

For $\xi > 0$, we let 
$$K_{T,\epsilon} =\frac{(1 + \xi)(\zeta\tilde{\alpha}_1 \log(T) + c\tilde{\alpha}_1 \log(\log T)-\log(M_{\epsilon,1})}{d(\mu_2, \mu_1)}.$$
Then $D_{2}$ can be rewritten as
\begin{align*}
D_2&\le \sum_{s=1}^{\lfloor K_{T,\epsilon}\rfloor} 1 + \sum_{s=\lfloor K_{T,\epsilon}\rfloor+1}^{T}
\mathbbm{1}\{K_{T,\epsilon} d^+(\hat{\mu}_{2}(t),\mu_1-\beta_T)\le \zeta\tilde{\alpha}_1 \log(T) + c\tilde{\alpha}_1 \log(\log T)-\log(M_{\epsilon,1})\}    \\
&= \lfloor K_{T,\epsilon}\rfloor + \sum_{s=\lfloor K_{T,\epsilon}\rfloor+1}^{T}
\mathbbm{1}\{d^+(\hat{\mu}_{2}(t),\mu_1-\beta_T)\le \frac{d(\mu_2, \mu_1)}{1 + \xi}\} \\
&\le K_{T,\epsilon} + \sum_{s=\lfloor K_{T,\epsilon}\rfloor+1}^{T}
\mathbbm{1}\{d^+(\hat{\mu}_{2}(t),\mu_1)\le \frac{d(\mu_2, \mu_1)}{1 + \xi}+\beta_T\frac{2}{\mu_1(2-\mu_1)}\}
\end{align*}
where the last inequality follows from the same technique in the proof of Theorem 1 in \citet{kaufmann2012bayesian}, by noting that the function $g(q) = d^+(\hat{\mu}_2(s), q)$ is convex and differentiable and $g'(q) = \frac{q-\hat{\mu}_2(s)}{q(1-q)} \mathbbm{1}\{(q>\hat{\mu}_2(s)\}$. Hence, for $T \ge \exp\left( \left(\frac{
2(1+\xi)(1+\xi/2)}{\xi\mu_1(1-\mu_1)d(\mu_2,\mu_1)}\right)^2\right)$, we obtain
$\frac{d(\mu_2, \mu_1)}{1 + \xi}+\beta_T\frac{2}{\mu_1(1-\mu)}\le \frac{d(\mu_2, \mu_1)}{1 + \xi/2}$. Following the proof of Theorem 1 in \citet{kaufmann2012bayesian} (as well as \citet{garivier2011kl}), we obtain
\begin{align*}
\mathbb{E}[D_2]&\le K_{T,\epsilon} + \sum_{s=\lfloor K_{T,\epsilon}\rfloor+1}^{T}
\mathbb{P}\left(d^+(\hat{\mu}_{2}(t),\mu_1)\le \frac{d(\mu_2, \mu_1)}{1 + \xi/2}\right)\\
&\le K_{T,\epsilon} + \frac{(1 + \xi/2)^2}{\xi^2 (\min\{\mu_2(1 - \mu_2), \mu_1(1 - \mu_1)\})^2}
\end{align*}

\textbf{Step 4}: Combing the above results, we obtain that if $0< \zeta\tilde{\alpha}_2 < 1$, 
\begin{align*}
\mathbb{E}[N_2(T)]\le& \mathbb{E}[D_{1,1}]+\mathbb{E}[D_{1,2}]+\mathbb{E}[D_{2}]\\
\le& 1+\frac{\Big(\zeta\tilde{\alpha}_2 + c\tilde{\alpha}_2 \Big) M_{\epsilon,2} e T^{1-\zeta\tilde{\alpha}_2}}{1-\zeta\tilde{\alpha}_2}+ \frac{(1 + \xi)(\zeta\tilde{\alpha}_1 \log(T) + c\tilde{\alpha}_1 \log(\log T)-\log(M_{\epsilon,1})}{d(\mu_2, \mu_1)}\\
&+ \frac{(1 + \xi/2)^2}{\xi^2 (\min\{\mu_2(1 - \mu_2), \mu_1(1 - \mu_1)\})^2}\\
\le& \frac{\Big(\zeta\tilde{\alpha}_2 + c\tilde{\alpha}_2 \Big) M_{\epsilon,2} e T^{1-\zeta\tilde{\alpha}_2}}{1-\zeta\tilde{\alpha}_2} + o(T^{1-\zeta\tilde{\alpha}_2}).    
\end{align*}
and if $ \zeta\tilde{\alpha}_2 \ge 1$, 
\begin{align*}
\mathbb{E}[N_2(T)]\le& \mathbb{E}[D_{1,1}]+\mathbb{E}[D_{1,2}]+\mathbb{E}[D_{2}]\\
\le& 1+\Big(\zeta\tilde{\alpha}_2 + c\tilde{\alpha}_2 \Big) M_{\epsilon,2} e+ \frac{(1 + \xi)(\zeta\tilde{\alpha}_1 \log(T) + c\tilde{\alpha}_1 \log(\log T)-\log(M_{\epsilon,1})}{d(\mu_2, \mu_1)}\\
&+ \frac{(1 + \xi/2)^2}{\xi^2 (\min\{\mu_2(1 - \mu_2), \mu_1(1 - \mu_1)\})^2}\\
\le& \frac{(1 + \xi)\zeta\tilde{\alpha}_1}{d(\mu_2, \mu_1)} \log(T) + o(\log T).    
\end{align*}

\end{proof}

Note that the final step (Step 4) in the above proof provides an exact finite-time regret bound that holds for any time horizon $T$. It explicitly expresses the $o(\cdot)$ term in Theorem \ref{thm:regret}. The error terms $M_{\epsilon,1}^{-1}$ and $M_{\epsilon,2}$ depending on $\epsilon$ appear in this regret bound explicitly. Obviously, $M_{\epsilon,1}^{-1}$ and $M_{\epsilon,2}$ in the bound increase as $\epsilon$ increases.   

In general, our above derivations depend on bounds for specific distributions (Beta posterior distributions with inference errors in our setting). It is a direction to generalize these results to more general bandit problems or more general families of distributions. For instance, \citep{kaufmann2018bayesian} extends the setting of Beta posterior distribution to the exponential family that includes Gaussian (without approximate inference). Combining \citep{kaufmann2018bayesian} with our techniques in Sections \ref{sec:quantile} and \ref{sec:regretbound} may lead to analyzing the exponential family with approximate inference. This, however, requires some additional careful technical derivation beyond our current bounds in Bernoulli with approximate inference, which is a future research direction.

\section{Proofs of Results in Section \ref{sec:tsfails}} \label{sec:proofs3}

\begin{proof}[Proof of Theorem \ref{thm:tsfails}]
We can explicitly construct such a distribution $Q_t$ as follows:
\begin{gather}
q_{t,2}(x_2)=\pi_{t,2}(x_2) \nonumber \\
q_{t,1}(x_1) = \begin{cases}
\frac{1-\frac{1}{r}(1-F_{t,1}(b_t))}{F_{t,1}(b_t)} \pi_{t,1}(x_1)  & \text{if } 0 < x_1 < b_t \\
\frac{1}{r} \pi_{t,1}(x_1) & \text{if } b_t< x_1 < 1  \label{exp:tsfails}
\end{cases} 
\end{gather}
where $F_{t,1}$ is the cumulative distribution function (cdf) of $\Pi_{t,1}$ and $r>1$, $b_t \in (0,1)$ will be specified later.

First, note that by setting
$$q_{t,2}=\pi_{t,2},$$
we have $D_{\alpha}(\Pi_{t,2},Q_{t,2})=0$ and $q_{t,2}=\pi_{t,2}$ with the same support $[0,1]$, satisfying Assumptions \ref{assu2} and \ref{assu3} on action $j=2$.

We set $b_t=Qu(\frac{1}{2},\Pi_{t,2})\in (0,1)$, the $\frac{1}{2}$-quantile of the distribution $\Pi_{t,2}$ (or equivalently, $Q_{t,2}$). Let $F_{t,1}$ be the cdf of $\Pi_{t,1}$. We have $F_{t,1}(b_t)\in (0,1)$. For $r>1$, we set
\begin{equation*}
q_{t,1}(x_1) = \begin{cases}
\frac{1-\frac{1}{r}(1-F_{t,1}(b_t))}{F_{t,1}(b_t)} \pi_{t,1}(x_1)  & \text{if } 0 < x_1 < b_t \\
\frac{1}{r} \pi_{t,1}(x_1) & \text{if } b_t< x_1 < 1 
       \end{cases} \quad
\end{equation*}
Step 1: We show that $q_{t,1}(x_1)$ is indeed a density satisfying Assumption \ref{assu2} on action $j=1$. First of all, it is obvious that $q_{t,1}>0$ on $(0,1)$ as $\pi_{t,1}>0$ on $(0,1)$. Moreover
\begin{align*}
\int_{0}^1 q_{t,1}(x_1)dx_1&=\int_{0}^{b_t} q_{t,1}(x_1)dx_1+\int_{b_t}^1 q_{t,1}(x_1)dx_1\\
&=\int_{0}^{b_t} \frac{1-\frac{1}{r}(1-F_{t,1}(b_t))}{F_{t,1}(b_t)}\pi_{t,1}(x_1) dx_1+\int_{b_t}^1  \frac{1}{r} \pi_{t,1}(x_1)dx_1\\
&=\frac{1-\frac{1}{r}(1-F_{t,1}(b_t))}{F_{t,1}(b_t)}F_{t,1}(b_t)+ \frac{1}{r}(1-F_{t,1}(b_t))\\
&=1.
\end{align*}
Step 2: We show that there exists an $r>1$ (independent of $t$) such that $q_{t,1}$ satisfies Assumption \ref{assu3} on action $j=1$.

We have that when $\alpha<0$ or $0< \alpha< 1$:
\begin{align*}
D_{\alpha}(Q_{t,1},\Pi_{t,1})&=\frac{1}{\alpha(\alpha-1)}\left(\int_{0}^{b_t} \pi_{t,1}(x_1) \left(\frac{\pi_{t,1}(x_1)}{q_{t,1}(x_1)}\right)^{-\alpha}dx_1+\int_{b_t}^1 \pi_{t,1}(x_1) \left(\frac{\pi_{t,1}(x_1)}{q_{t,1}(x_1)}\right)^{-\alpha}dx_1-1\right)\\
&=\frac{1}{\alpha(\alpha-1)}\left(\int_{0}^{b_t} \pi_{t,1}(x_1) \left(\frac{F_{t,1}(b_t)}{1-\frac{1}{r}(1-F_{t,1}(b_t))}\right)^{-\alpha} dx_1+\int_{b_t}^1 \pi_{t,1}(x_1)  r^{-\alpha}dx_1-1\right)\\
&= \frac{1}{\alpha(\alpha-1)}\left(\left(\frac{F_{t,1}(b_t)}{1-\frac{1}{r}(1-F_{t,1}(b_t))}\right)^{-\alpha} F_{t,1}(b_t)+r^{-\alpha} (1-F_{t,1}(b_t))-1\right)
\end{align*}
We note that 
$$\frac{F_{t,1}(b_t)}{1-\frac{1}{r}(1-F_{t,1}(b_t))}\le \frac{r-1+F_{t,1}(b_t)}{1-\frac{1}{r}(1-F_{t,1}(b_t))}=r$$
as $r>1$. Hence we have
$$D_{\alpha}(Q_{t,1},\Pi_{t,1})\le \frac{1}{\alpha(\alpha-1)}\left(r^{-\alpha} F_{t,1}(b_t)+r^{-\alpha} (1-F_{t,1}(b_t))-1\right)=\frac{1}{\alpha(\alpha-1)}\left(r^{-\alpha} -1\right).$$
Then for $1<r<(\epsilon\alpha(\alpha-1)+1)^{-\frac{1}{\alpha}}$ (only if $\epsilon\alpha(\alpha-1)+1>0$, otherwise we put $+\infty$ as the upper bound on $r$), we have that
$$D_{\alpha}(Q_{t,1},\Pi_{t,1})\le\epsilon.$$

When $\alpha=0$:
\begin{align*}
D_{0}(Q_{t,1},\Pi_{t,1})&=KL(\Pi_{t,1},Q_{t,1})\\
&=\int_{0}^{b_t} \pi_{t,1}(x_1) \log\left(\frac{\pi_{t,1}(x_1)}{q_{t,1}(x_1)}\right)dx_1+\int_{b_t}^1 \pi_{t,1}(x_1) \log\left(\frac{\pi_{t,1}(x_1)}{q_{t,1}(x_1)}\right)dx_1\\
&=\int_{0}^{b_t} \pi_{t,1}(x_1) \log\left(\frac{F_{t,1}(b_t)}{1-\frac{1}{r}(1-F_{t,1}(b_t))}\right) dx_1+\int_{b_t}^1 \pi_{t,1}(x_1)  \log\left(r\right)dx_1\\
&= \log\left(\frac{F_{t,1}(b_t)}{1-\frac{1}{r}(1-F_{t,1}(b_t))}\right) F_{t,1}(b_t)+\log\left(r\right) (1-F_{t,1}(b_t))
\end{align*}
We note that 
$$\frac{F_{t,1}(b_t)}{1-\frac{1}{r}(1-F_{t,1}(b_t))}\le \frac{r-1+F_{t,1}(b_t)}{1-\frac{1}{r}(1-F_{t,1}(b_t))}=r$$
as $r>1$. Hence we have
$$KL(\Pi_{t,1},Q_{t,1})\le \log\left(r\right) F_{t,1}(b_t)+\log\left(r\right)(1-F_{t,1}(b_t))=\log\left(r\right).$$
Then for $1<r<e^{\epsilon}$, we have that
$$D_{0}(Q_{t,1},\Pi_{t,1})=KL(\Pi_{t,1},Q_{t,1})\le \log\left(r\right)\le \log\left(e^{\epsilon}\right)=\epsilon.$$

Step 3: We show that the probability of sampling from $Q_{t-1}$ choosing action $2$ is greater than a positive constant $\frac{1}{2}(1-\frac{1}{r})$, which thus leads to a linear regret. 

In fact, the probability of sampling from $Q_{t-1}$ choosing action $2$ is given by $\mathbb{P}_{Q_{t-1}}(x_2\ge x_1)$. Therefore we have that
$$\mathbb{P}_{Q_{t-1}}(x_2\ge x_1)\ge \mathbb{P}_{Q_{t-1}}(x_2\ge b_{t-1}\ge x_1)= \mathbb{P}_{Q_{t-1,2}}(x_2\ge b_{t-1})\mathbb{P}_{Q_{t-1,1}}(x_1\le b_{t-1})$$
since $Q_{t-1,1}$ and $Q_{t-1,2}$ are independent. $$\mathbb{P}_{Q_{t-1,2}}(x_2\ge b_{t-1})=\frac{1}{2}$$ 
since $b_{t-1}$ is the $\frac{1}{2}$-quantile of the distribution $\Pi_{t-1,2}$ and $p_{t-1,2}>0$ on $(0,1)$. 
$$\mathbb{P}_{Q_{t-1,1}}(x_1\le b_{t-1})=1-\mathbb{P}_{Q_{t-1,1}}(x_1\ge b_{t-1})=1-\frac{1}{r} (1-F_{t-1,1}(b_{t-1}))\ge 1-\frac{1}{r}$$ by our construction of $Q_{t-1,1}$. Therefore we have that
$$\mathbb{P}_{Q_{t-1}}(x_2\ge x_1)\ge \frac{1}{2}(1-\frac{1}{r})>0.$$
We conclude that the lower bound of the average expected regret is given by
$$R(T, \mathcal{A}) =\sum_{j=1}^{2} (\mu_1 - \mu_{j}) \mathbb{E}\left[N_j(t)\right]\ge (\mu_1-\mu_2)\frac{T}{2}(1-\frac{1}{r})=\Omega(T)$$
leading to a linear regret.
\end{proof}

\begin{proof}[Proof of Theorem \ref{thm:bucbfails}]
We can explicitly construct such a distribution $Q_t$ as follows:
\begin{gather}
q_{t,1}(x_1)=\pi_{t,1}(x_1) \nonumber\\
q_{t,2}(x_2) = \begin{cases}
\frac{1}{r} \pi_{t,2}(x_2) & \text{if } 0< x_1 < b_t \\
\frac{1-\frac{1}{r}F_{t,2}(b_t)}{1-F_{t,2}(b_t)} \pi_{t,2}(x_2)  & \text{if } b_t < x_1 < 1 \label{exp:bucbfails}
\end{cases}
\end{gather}
where $F_{t,2}$ is the cdf of $\Pi_{t,2}$ and $r>1$, $b_t \in (0,1)$ will be specified later.

First, note that by setting
$$q_{t,1}=\pi_{t,1},$$
we have $D_{\alpha}(\Pi_{t,1},Q_{t,1})=0$ and $q_{t,1}=\pi_{t,1}$ with the same support $[0,1]$, satisfying Assumptions \ref{assu2} and \ref{assu3} on action $j=1$.

We set $b_t=Qu(\gamma_{t+1},\Pi_{t,1})\in (0,1)$, the $\gamma_{t+1}$-quantile of the distribution $\Pi_{t,1}$ (or equivalently, $Q_{t,1}$). Let $F_{t,2}$ be the cdf of $\Pi_{t,2}$. We have $F_{t,2}(b_t)\in (0,1)$. For $r>1$, we set
\begin{equation*}
q_{t,2}(x_2) = \begin{cases}
\frac{1}{r} \pi_{t,2}(x_2) & \text{if } 0< x_2 < b_t \\
\frac{1-\frac{1}{r}F_{t,2}(b_t)}{1-F_{t,2}(b_t)} \pi_{t,2}(x_2)  & \text{if } b_t < x_2 < 1
       \end{cases} \quad
\end{equation*}
Step 1: We show that $q_{t,2}(x_2)$ is indeed a density satisfying Assumption \ref{assu2} on action $j=2$. First of all, it is obvious that $q_{t,2}>0$ on $(0,1)$ as $\pi_{t,2}>0$ on $(0,1)$. Moreover
\begin{align*}
\int_{0}^1 q_{t,2}(x_2)dx_2&=\int_{0}^{b_t} q_{t,2}(x_2)dx_2+\int_{b_t}^1 q_{t,2}(x_2)dx_2\\
&=\int_{0}^{b_t} \frac{1}{r} \pi_{t,2}(x_2)  dx_2+\int_{b_t}^1  \frac{1-\frac{1}{r}F_{t,2}(b_t)}{1-F_{t,2}(b_t)} \pi_{t,2}(x_2) dx_2\\
&=\frac{1}{r}F_{t,2}(b_t)+ \frac{1-\frac{1}{r}F_{t,2}(b_t)}{1-F_{t,2}(b_t)} (1-F_{t,2}(b_t))\\
&=1.
\end{align*}
Step 2: We show that there exists an $r>1$ (independent of $t$) such that $q_{t,2}$ satisfies Assumption \ref{assu3} on action $j=2$.

We have that when $\alpha<0$ or $0< \alpha< 1$:
\begin{align*}
D_{\alpha}(Q_{t,2},\Pi_{t,2}) &=\frac{1}{\alpha(\alpha-1)}\left(\int_{0}^{b_t} \pi_{t,2}(x_2) \left(\frac{\pi_{t,2}(x_2)}{q_{t,2}(x_2)}\right)^{-\alpha}dx_2+\int_{b_t}^1 \pi_{t,2}(x_2) \left(\frac{\pi_{t,2}(x_2)}{q_{t,2}(x_2)}\right)^{-\alpha}dx_2-1\right)\\
&=\frac{1}{\alpha(\alpha-1)}\left(\int_{0}^{b_t} \pi_{t,2}(x_2) r^{-\alpha} dx_2+\int_{b_t}^1 \pi_{t,2}(x_2) \left(\frac{1-F_{t,2}(b_t)}{1-\frac{1}{r}F_{t,2}(b_t)}\right)^{-\alpha}dx_2-1\right)\\
&= \frac{1}{\alpha(\alpha-1)}\left(r^{-\alpha} F_{t,2}(b_t)+\left(\frac{1-F_{t,2}(b_t)}{1-\frac{1}{r}F_{t,2}(b_t)}\right)^{-\alpha} (1-F_{t,2}(b_t))-1\right)
\end{align*}
We note that 
$$\frac{1-F_{t,2}(b_t)}{1-\frac{1}{r}F_{t,2}(b_t)}\le \frac{r-F_{t,2}(b_t)}{1-\frac{1}{r}F_{t,2}(b_t)}=r$$
as $r>1$. Hence we have
$$D_{\alpha}(\Pi_{t,2},Q_{t,2})\le \frac{1}{\alpha(\alpha-1)}\left(r^{-\alpha} F_{t,2}(b_t)+r^{-\alpha} (1-F_{t,2}(b_t))-1\right)=\frac{1}{\alpha(\alpha-1)}\left(r^{-\alpha} -1\right).$$
Then for $1<r<(\epsilon\alpha(\alpha-1)+1)^{-\frac{1}{\alpha}}$ (only if $\epsilon\alpha(\alpha-1)+1>0$, otherwise we put $+\infty$ as the upper bound on $r$), we have that
$$D_{\alpha}(Q_{t,2},\Pi_{t,2})\le\epsilon.$$

When $\alpha=0$,
\begin{align*}
D_{0}(Q_{t,2},\Pi_{t,2}) = KL(\Pi_{t,2},Q_{t,2})&=\int_{0}^{b_t} \pi_{t,2}(x_2) \log\left(\frac{\pi_{t,2}(x_2)}{q_{t,2}(x_2)}\right)dx_2+\int_{b_t}^1 \pi_{t,2}(x_2) \log\left(\frac{\pi_{t,2}(x_2)}{q_{t,2}(x_2)}\right)dx_2\\
&=\int_{0}^{b_t} \pi_{t,2}(x_2) \log\left(r\right) dx_2+\int_{b_t}^1 \pi_{t,2}(x_2)  \log\left(\frac{1-F_{t,2}(b_t)}{1-\frac{1}{r}F_{t,2}(b_t)}\right)dx_2\\
&= \log\left(r\right) F_{t,2}(b_t)+\log\left(\frac{1-F_{t,2}(b_t)}{1-\frac{1}{r}F_{t,2}(b_t)}\right) (1-F_{t,2}(b_t))
\end{align*}
We note that 
$$\frac{1-F_{t,2}(b_t)}{1-\frac{1}{r}F_{t,2}(b_t)}\le \frac{r-F_{t,2}(b_t)}{1-\frac{1}{r}F_{t,2}(b_t)}=r$$
as $r>1$. Hence we have
$$KL(\Pi_{t,2},Q_{t,2})\le \log\left(r\right) F_{t,2}(b_t)+\log\left(r\right)(1-F_{t,2}(b_t))=\log\left(r\right).$$
Then for $1<r<e^{\epsilon}$, we have that
$$D_{0}(Q_{t,2},\Pi_{t,2}) =KL(\Pi_{t,2},Q_{t,2})\le \log\left(r\right)= \log\left(e^{\epsilon}\right)=\epsilon.$$

Therefore, we conclude that there exists an $r>1$ (independent of $t$) such that $q_{t,2}$ satisfies Assumption \ref{assu3} on action $j=2$. Take this $r>1$ and notice that since $\gamma_t\to 1$ as $t\to +\infty$, there must exists a $T_0>0$ such that for any $t\ge T_0$, we have that $\gamma_t>\frac{1}{r}$.

Step 3: We show that the EBUCB algorithm always chooses action $2$ when $t \ge T_0$, which thus leads to a linear regret. 

We note that when $t\ge T_0$, by definition, 
$$\mathbb{P}_{Q_{t-1,1}}(x_1\le b_{t-1})=\mathbb{P}_{\Pi_{t-1,1}}(x_1\le b_{t-1})=\gamma_{t},$$ 
$$\mathbb{P}_{Q_{t-1,2}}(x_2\le b_{t-1})= \frac{1}{r}F_{t-1,2}(b_{t-1})\le \frac{1}{r}< \gamma_{t},$$ 
which implies that 
$$Qu(\gamma_{t},Q_{t-1,1})=b_{t-1}< Qu(\gamma_{t},Q_{t-1,2})$$
Therefore after time step $t\ge T_0$, the EBUCB algorithm will always choose the action $2$. We conclude that the lower bound of the average expected regret is given by
$$R(T, \mathcal{A}) =\sum_{j=1}^{2} (\mu_1 - \mu_{j}) \mathbb{E}\left[N_j(t)\right]\ge (\mu_1-\mu_2)(T-T_0)=\Omega(T)$$
leading to a linear regret.
\end{proof}


\section{Additional Experiments} \label{sec:expmore}

In this section, we present additional experimental results. We enrich our experiments by studying an increasing number of arms as well as multiple new problem instances with different inference errors. These results further support our findings in Section \ref{sec:exp} that EBUCB without the horizon-dependent term (i.e., $c = 0$) performs the best.

Suppose the posterior distributions are misspecified to the following distributions:
$$
(1-w) * \text{Beta}(1+S_j(t),1+N_j(t)-S_j(t)) + w * \text{Beta}(\Gamma(1+S_j(t)),\Gamma(1+N_j(t)-S_j(t)))
$$
where $j\in[K]$. We conduct two experiments: 1)
The Bernoulli multi-armed bandit problem has $K$ actions with the following mean rewards:

$K=2$: mean rewards = [0.7, 0.3]

$K=4$: mean rewards = [0.9, 0.7, 0.5, 0.3]

$K=8$: mean rewards =  [0.9, 0.8, 0.7, 0.6, 0.5, 0.4, 0.3, 0.2]

$K=16$: mean rewards = [0.9, 0.85, 0.8, 0.75, 0.7, 0.65, 0.6, 0.55, 0.5, 0.45, 0.4, 0.35, 0.3, 0.25, 0.2, 0.15]

Let $\Gamma=2$ or $0.5$. Let $w=0.9$.
The results are shown in Figure \ref{fig:appendix1} below. 

2) We also study different $\Gamma$ values in the appropriate distribution that lead to different inference errors. Consider $\Gamma= 0.05, 0.1, 0.2, 0.5, 2, 5, 10, 15$ in the experiments. Let $K=2$ with mean rewards = [0.7, 0.3]. Let $w=0.9$.

The results are shown in Figure \ref{fig:appendix2} below.

\begin{figure}
     \begin{tabular}{cc} 
        \includegraphics[width=0.49\textwidth]{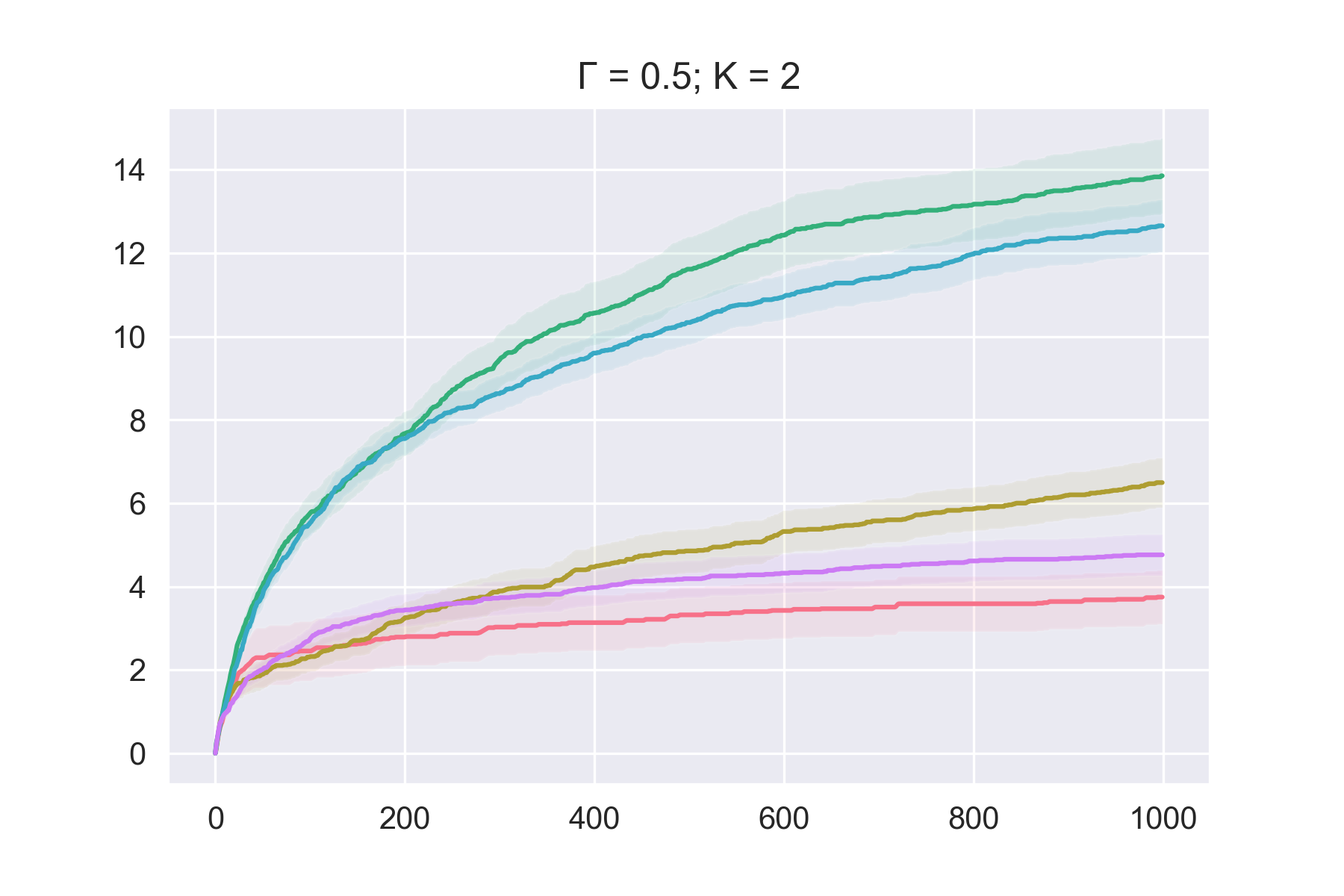}&
         \includegraphics[width=0.49\textwidth]{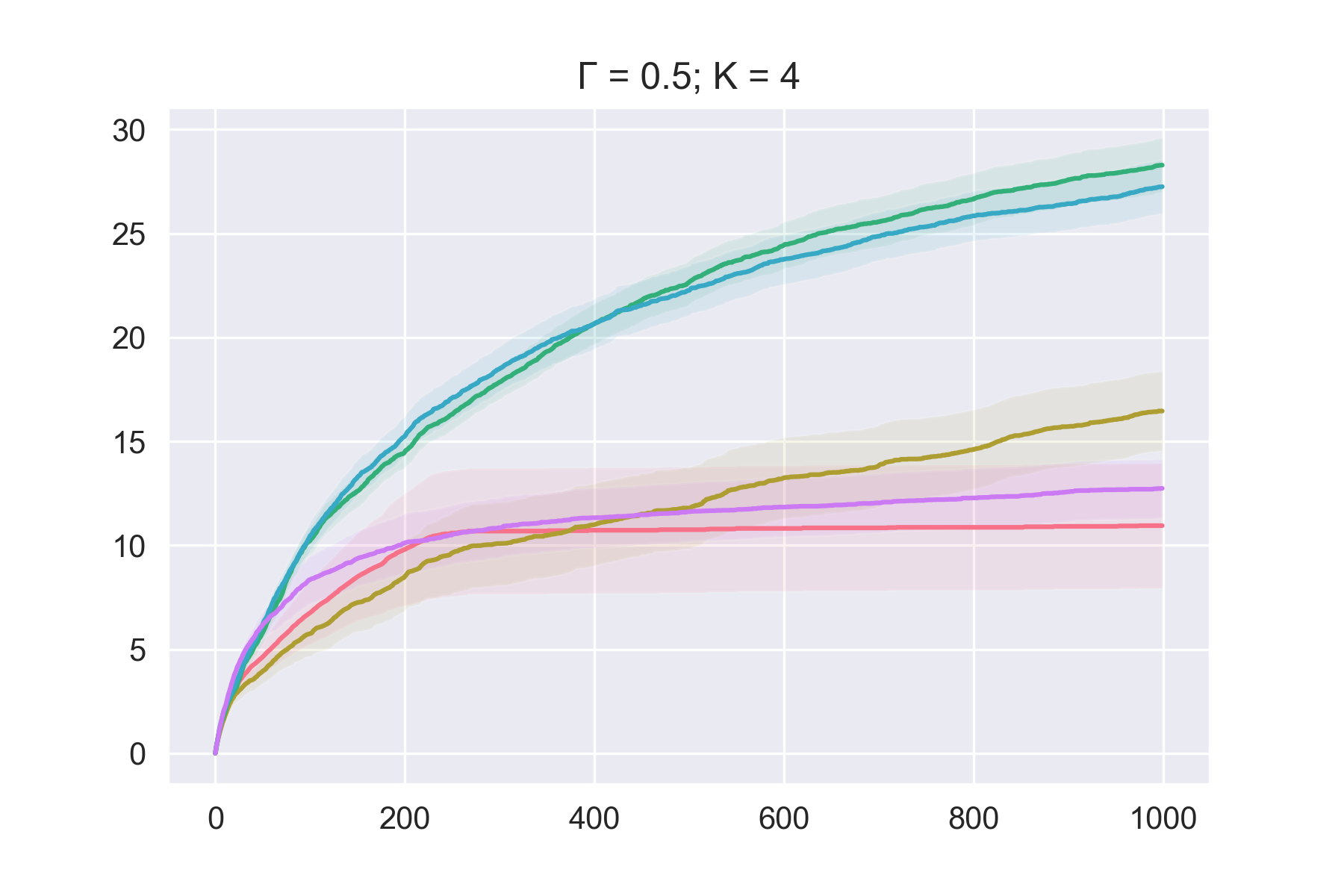}\\
        \includegraphics[width=0.49\textwidth]{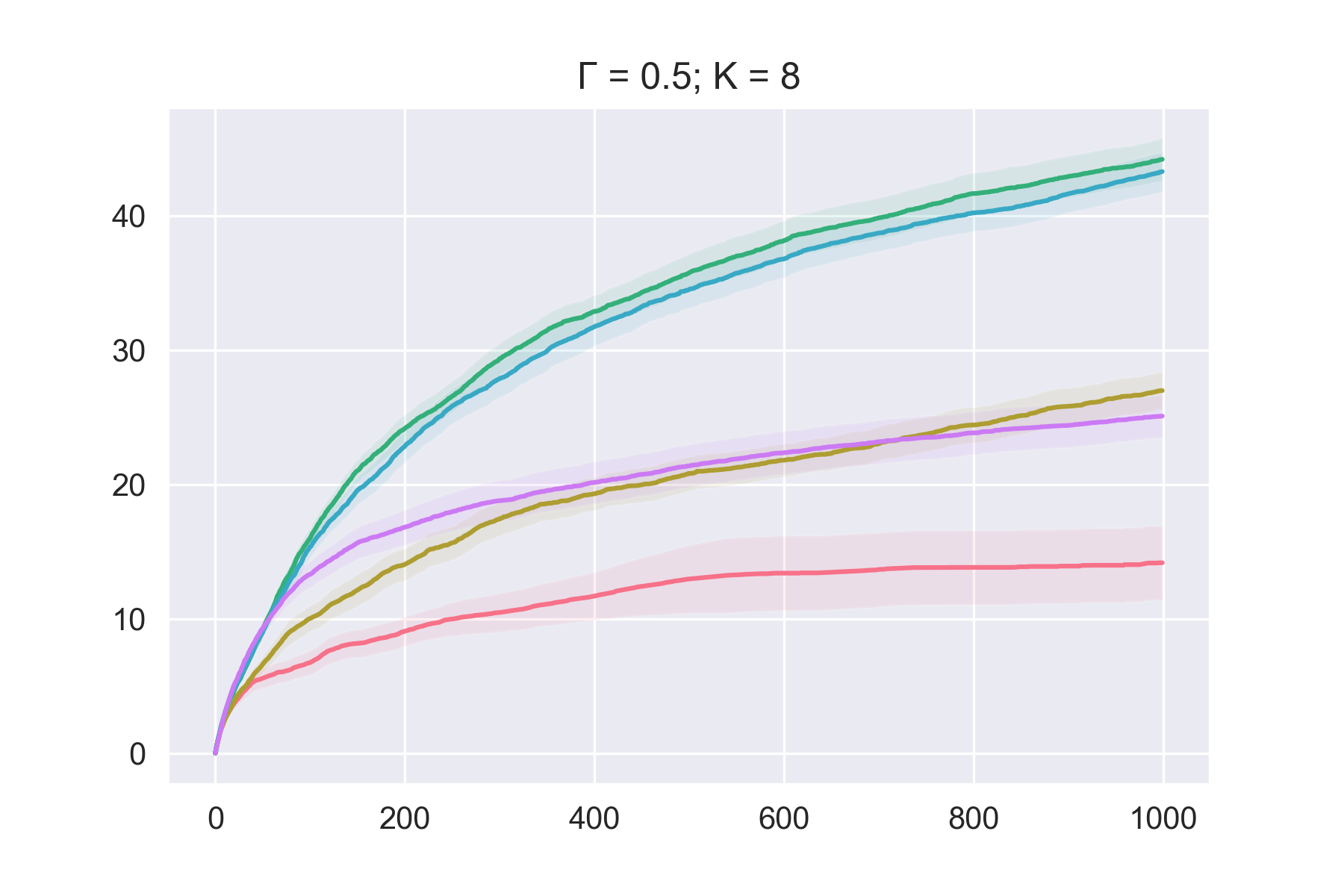} &
        \includegraphics[width=0.49\textwidth]{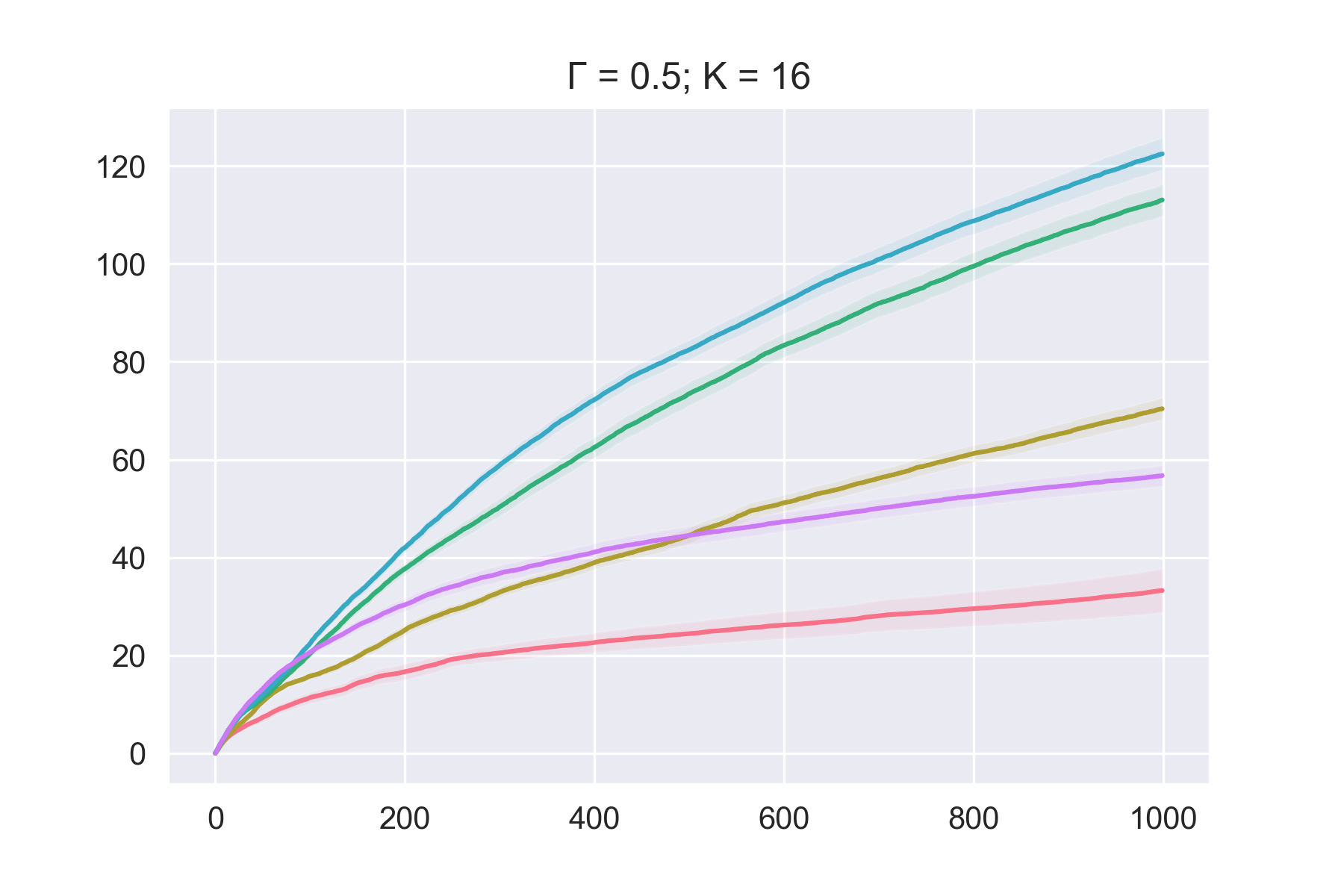}\\ 
        \includegraphics[width=0.49\textwidth]{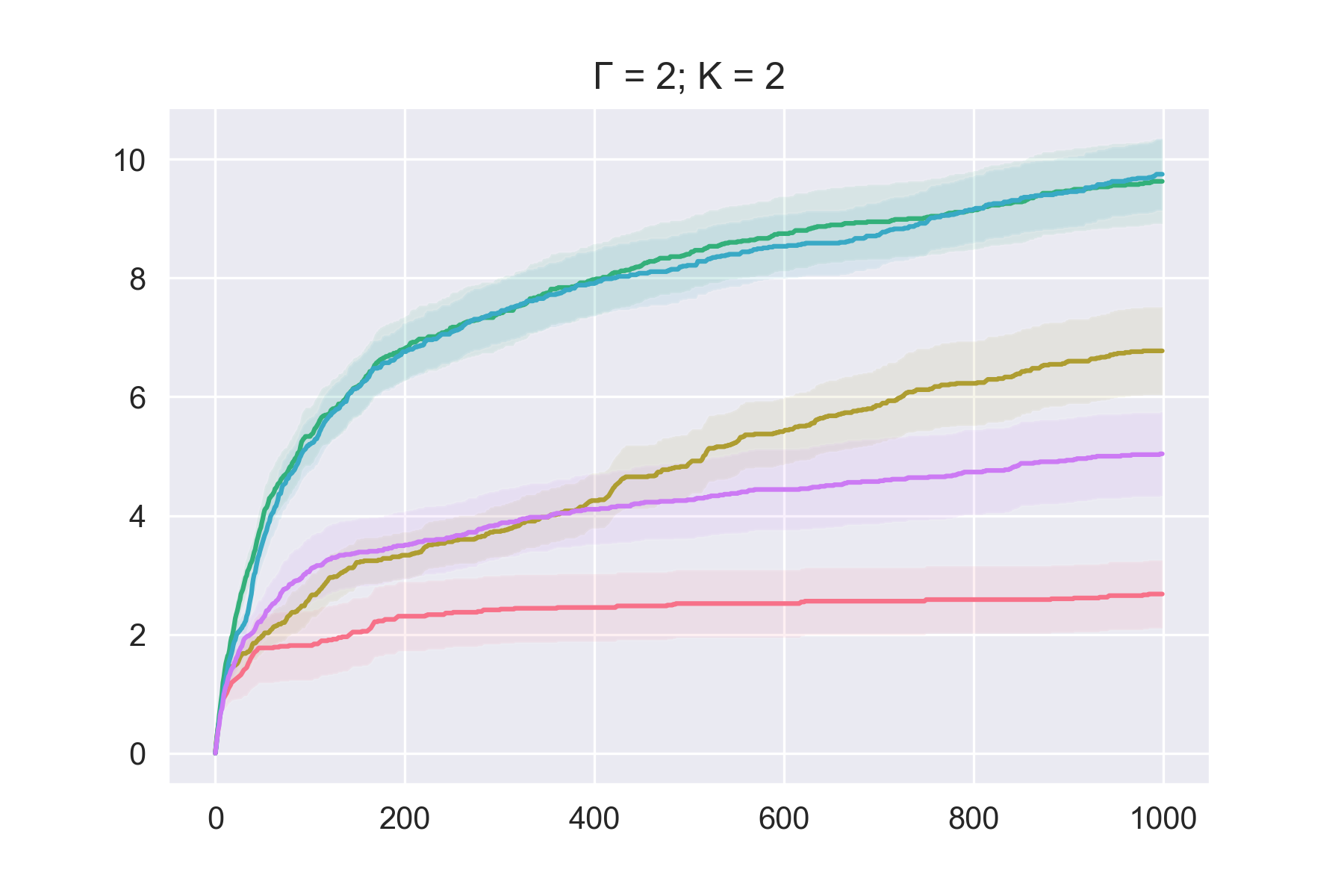}&      
        \includegraphics[width=0.49\textwidth]{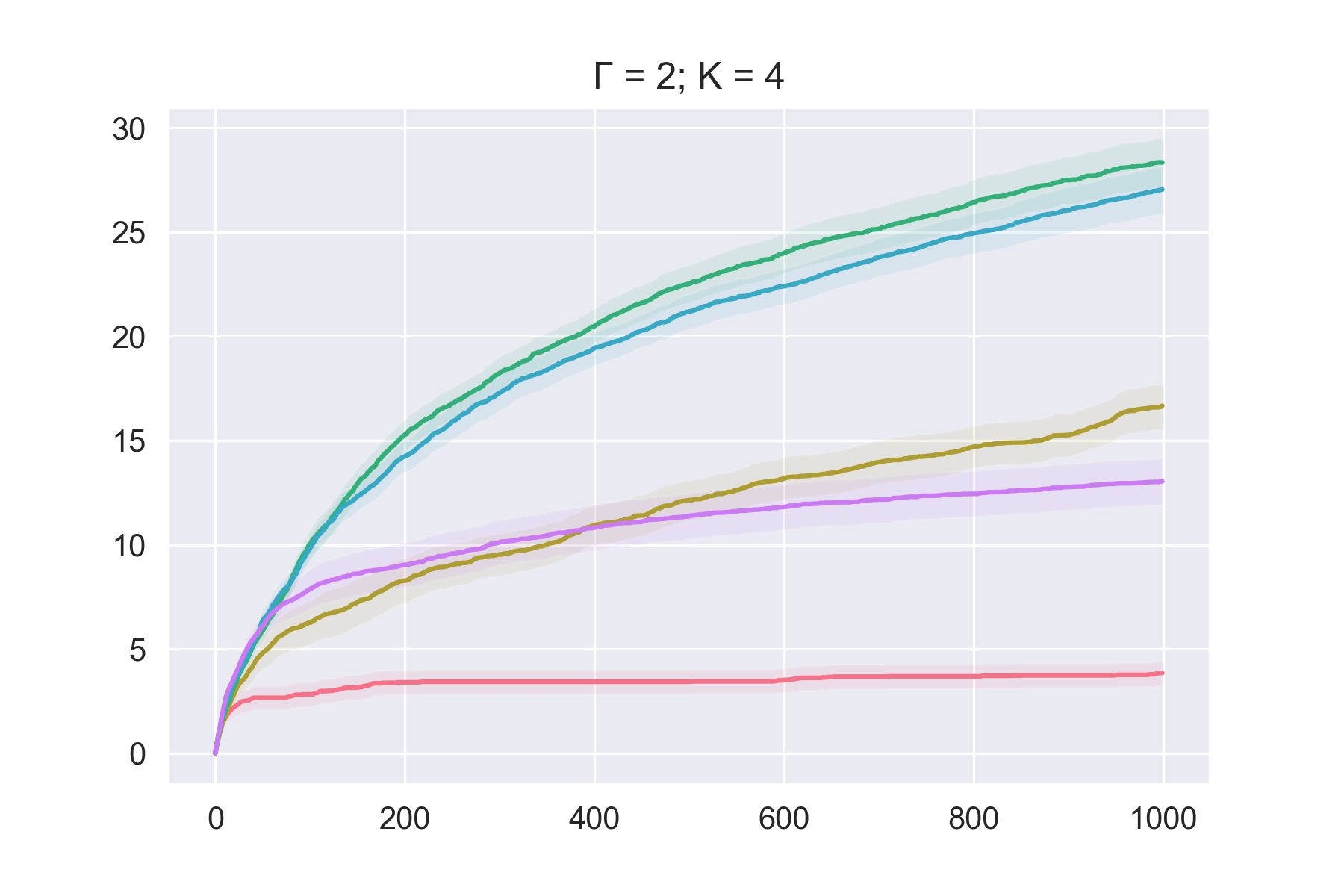}\\
        \includegraphics[width=0.49\textwidth]{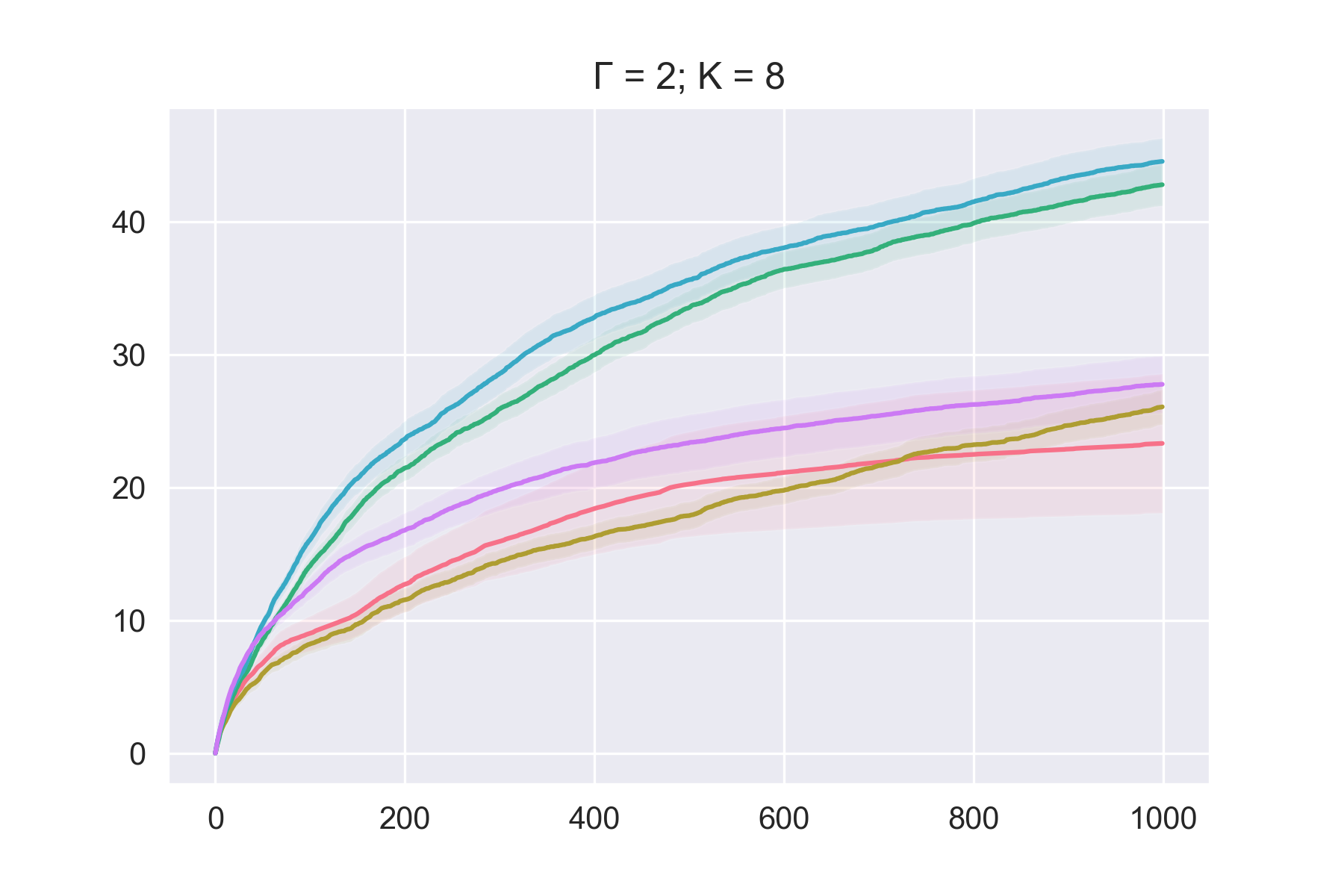}&
        \includegraphics[width=0.49\textwidth]{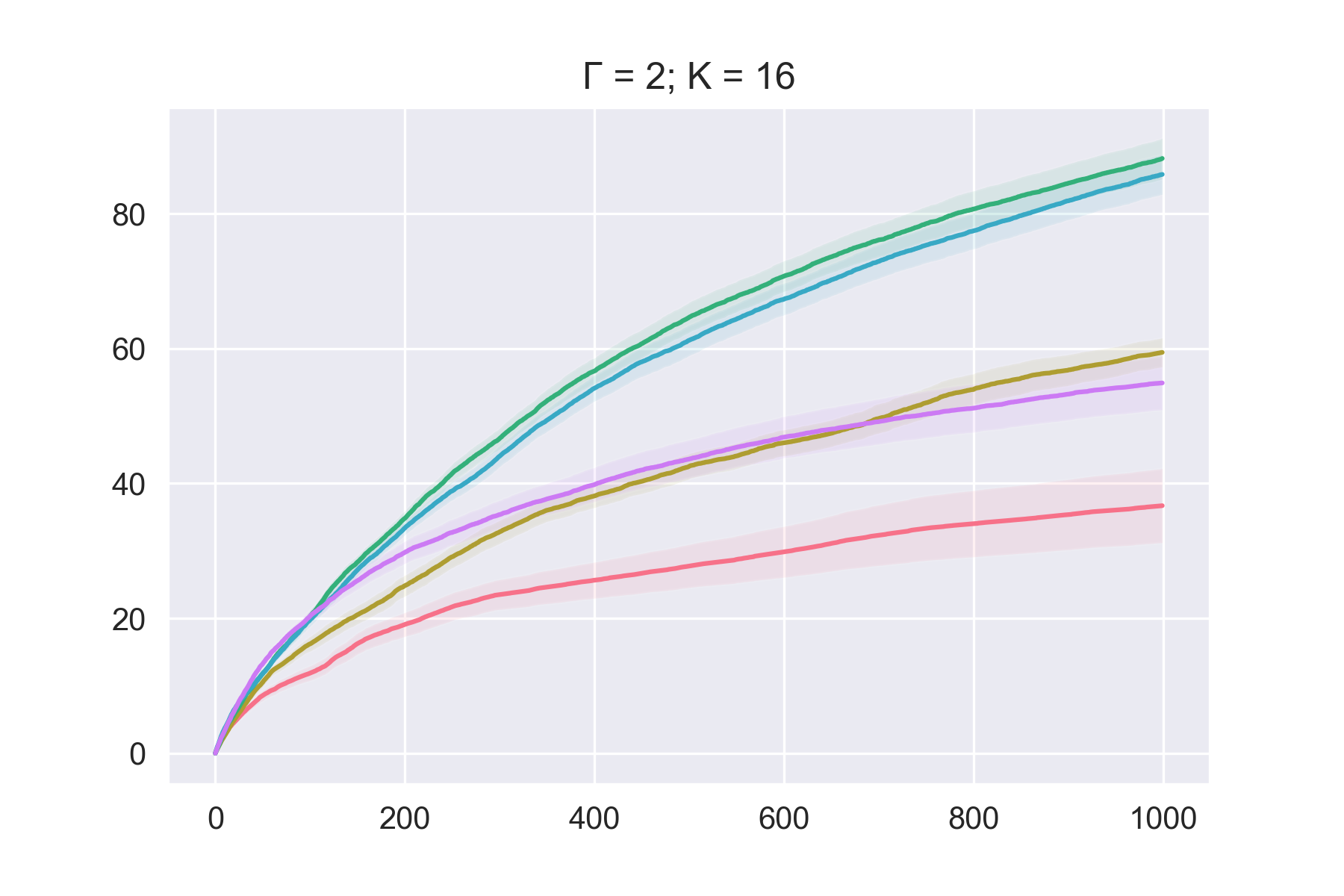}
     \end{tabular}
\caption{Experiments with different $K$ with $\Gamma=0.5, 2$. The curve-algorithm correspondence is the same as in Figure \ref{fig:nottoo}.}
\label{fig:appendix1}
\end{figure}

\begin{figure}
     \begin{tabular}{cc} 
        \includegraphics[width=0.49\textwidth]{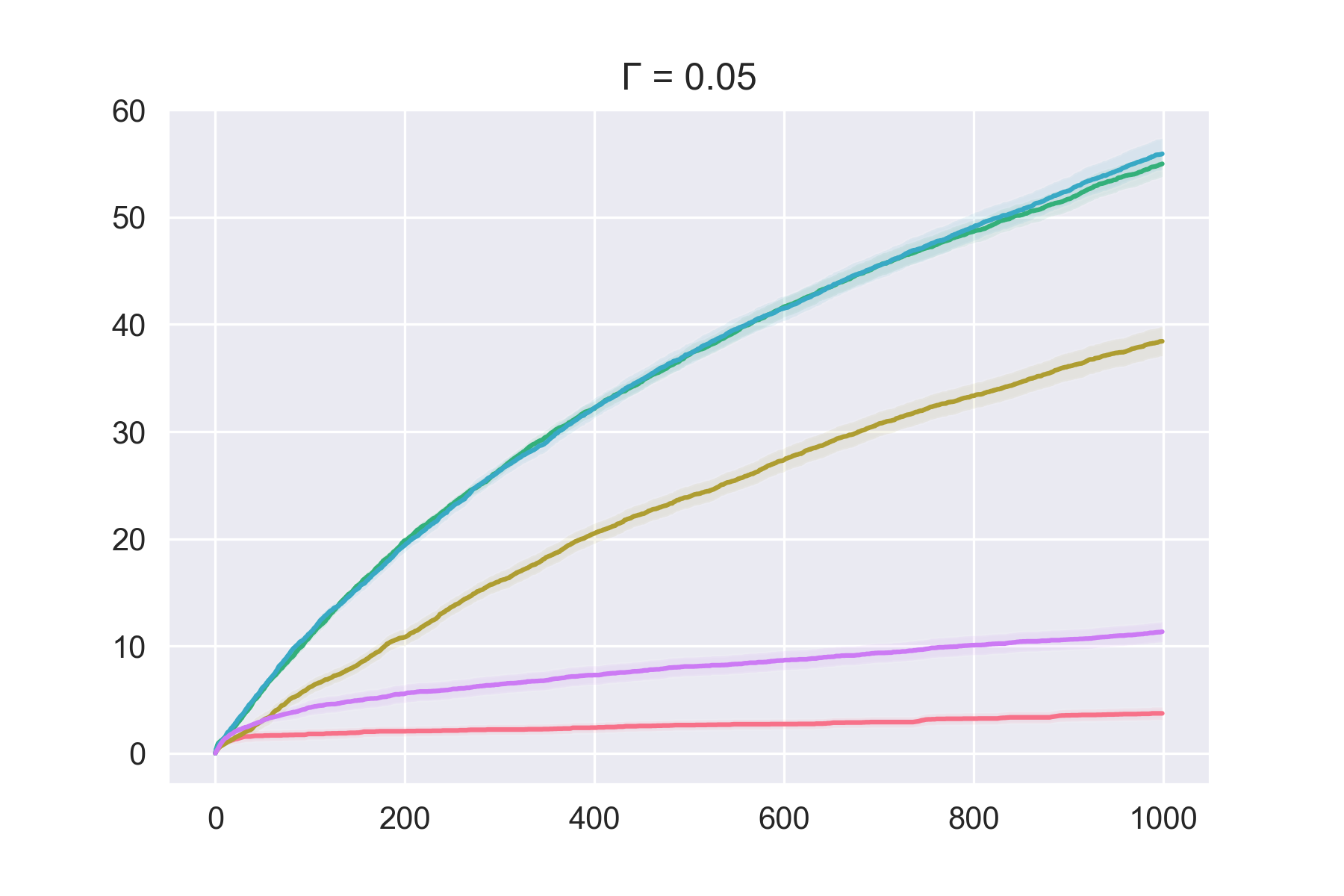}&
         \includegraphics[width=0.49\textwidth]{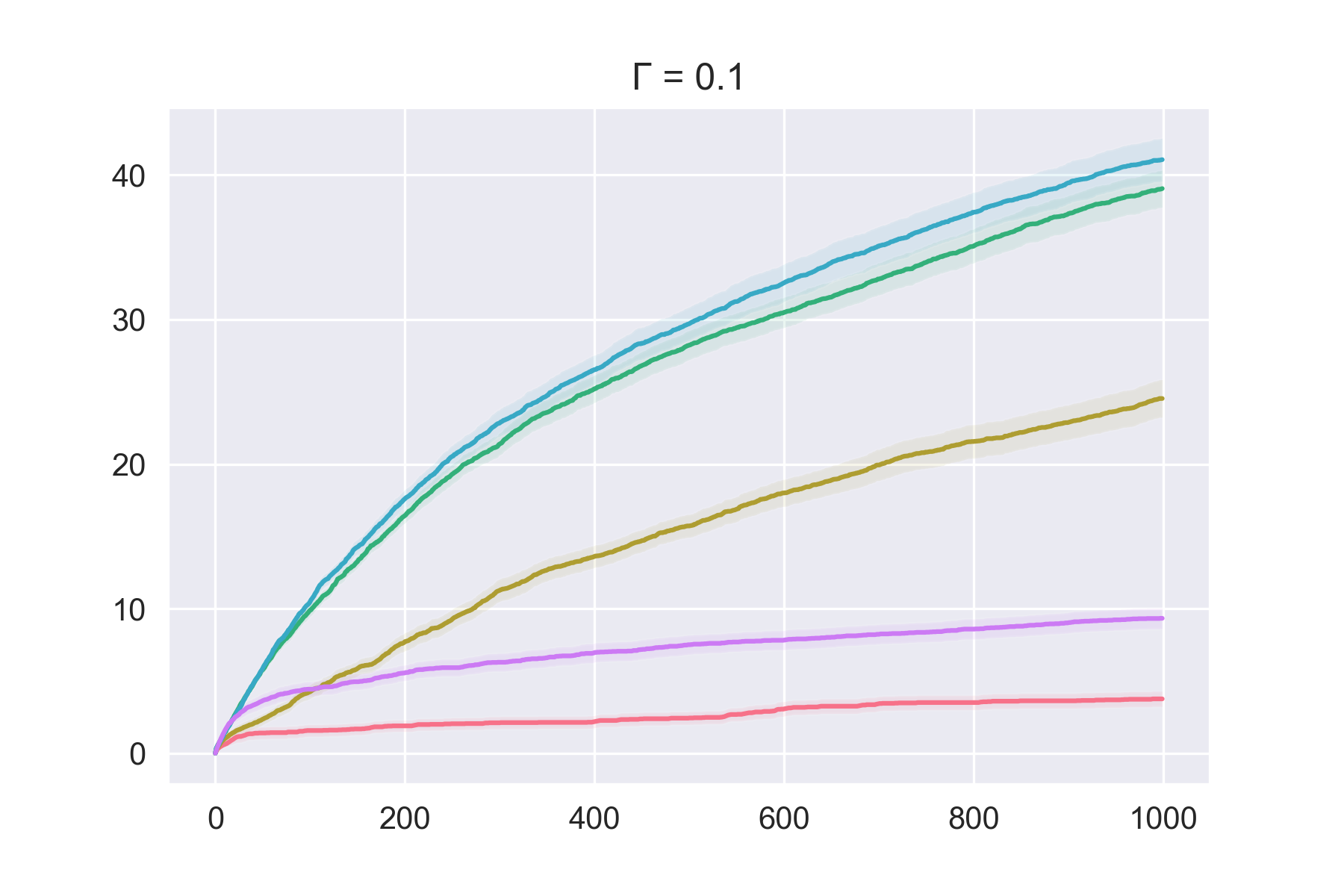}\\
        \includegraphics[width=0.49\textwidth]{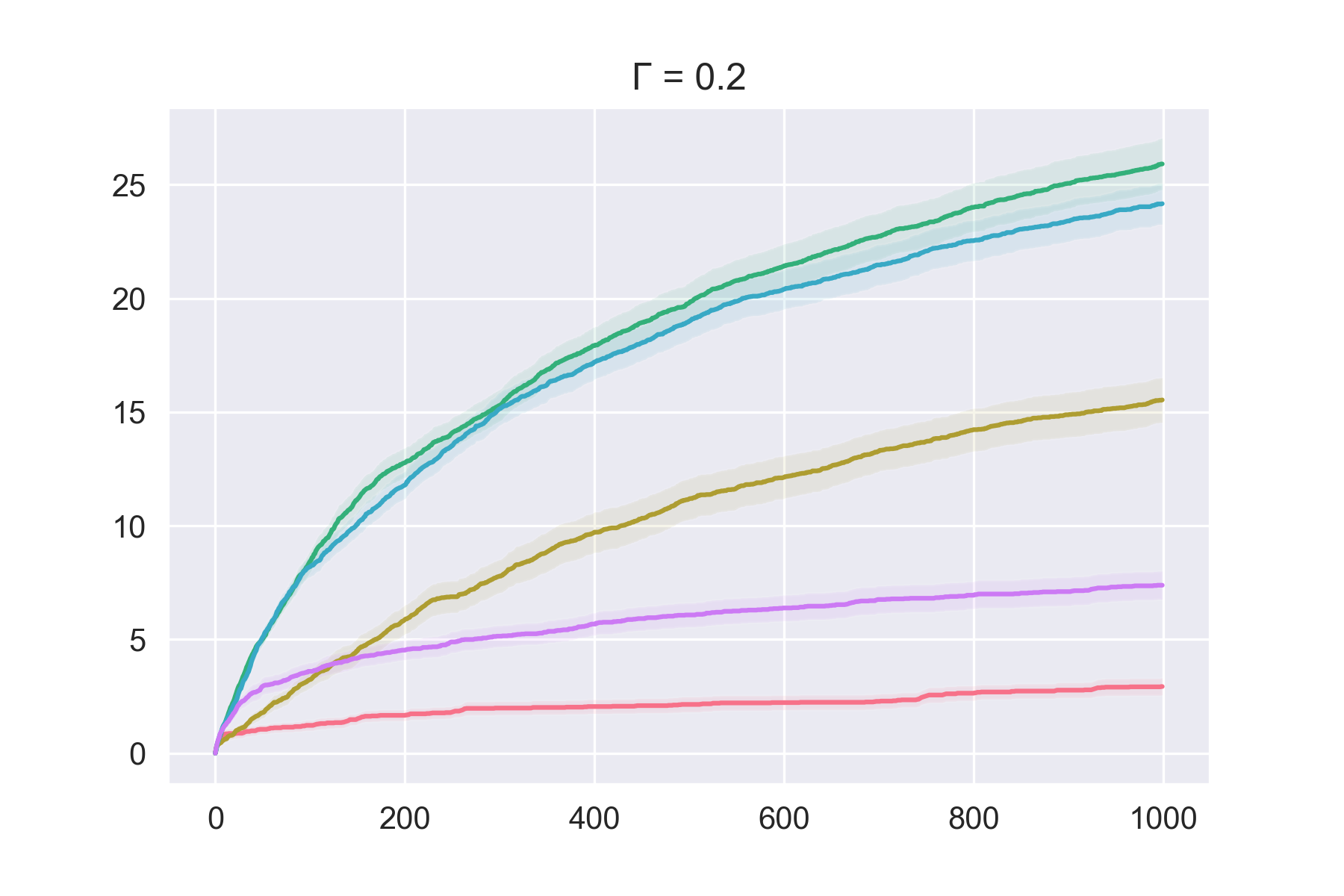} &
        \includegraphics[width=0.49\textwidth]{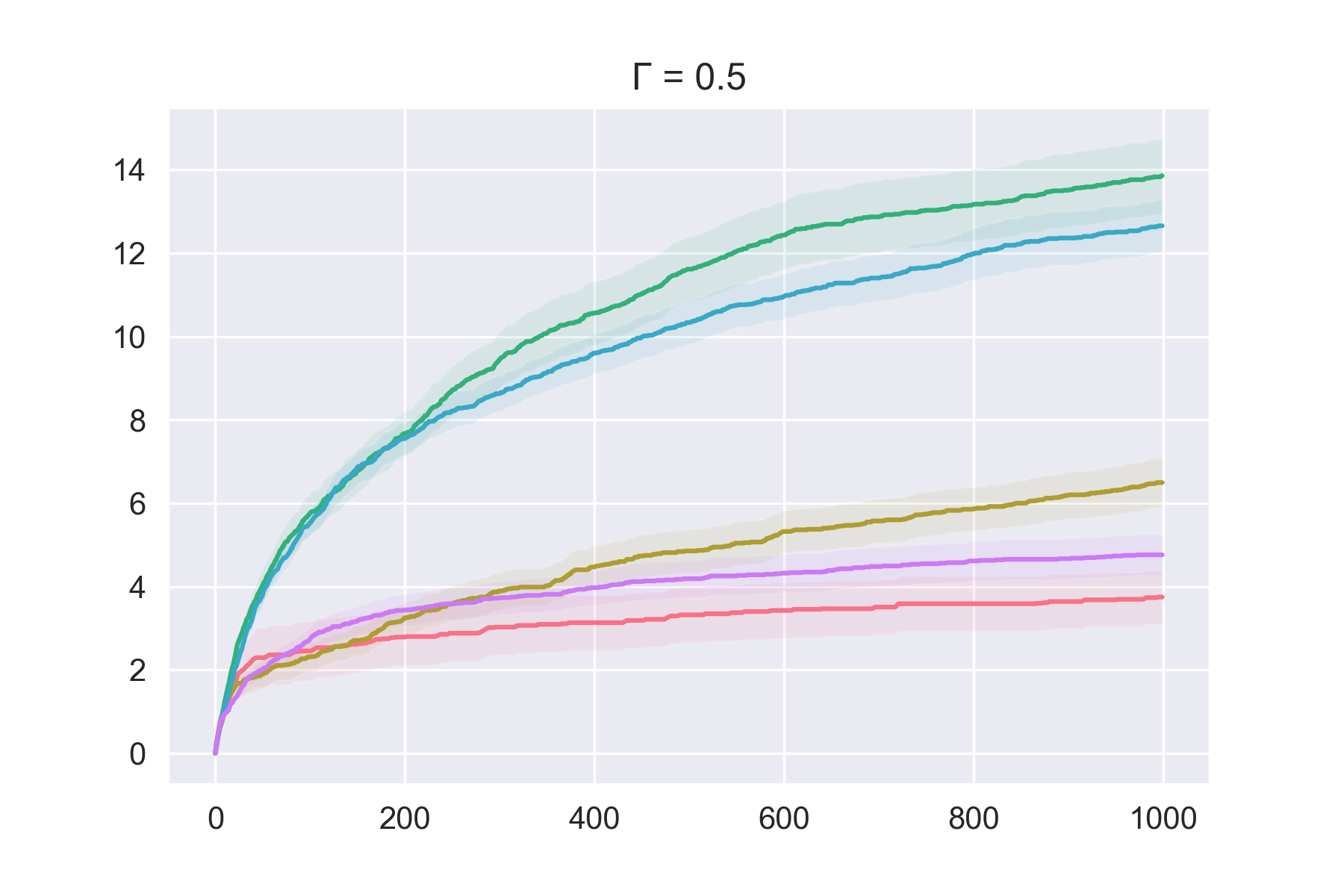}\\ 
        \includegraphics[width=0.49\textwidth]{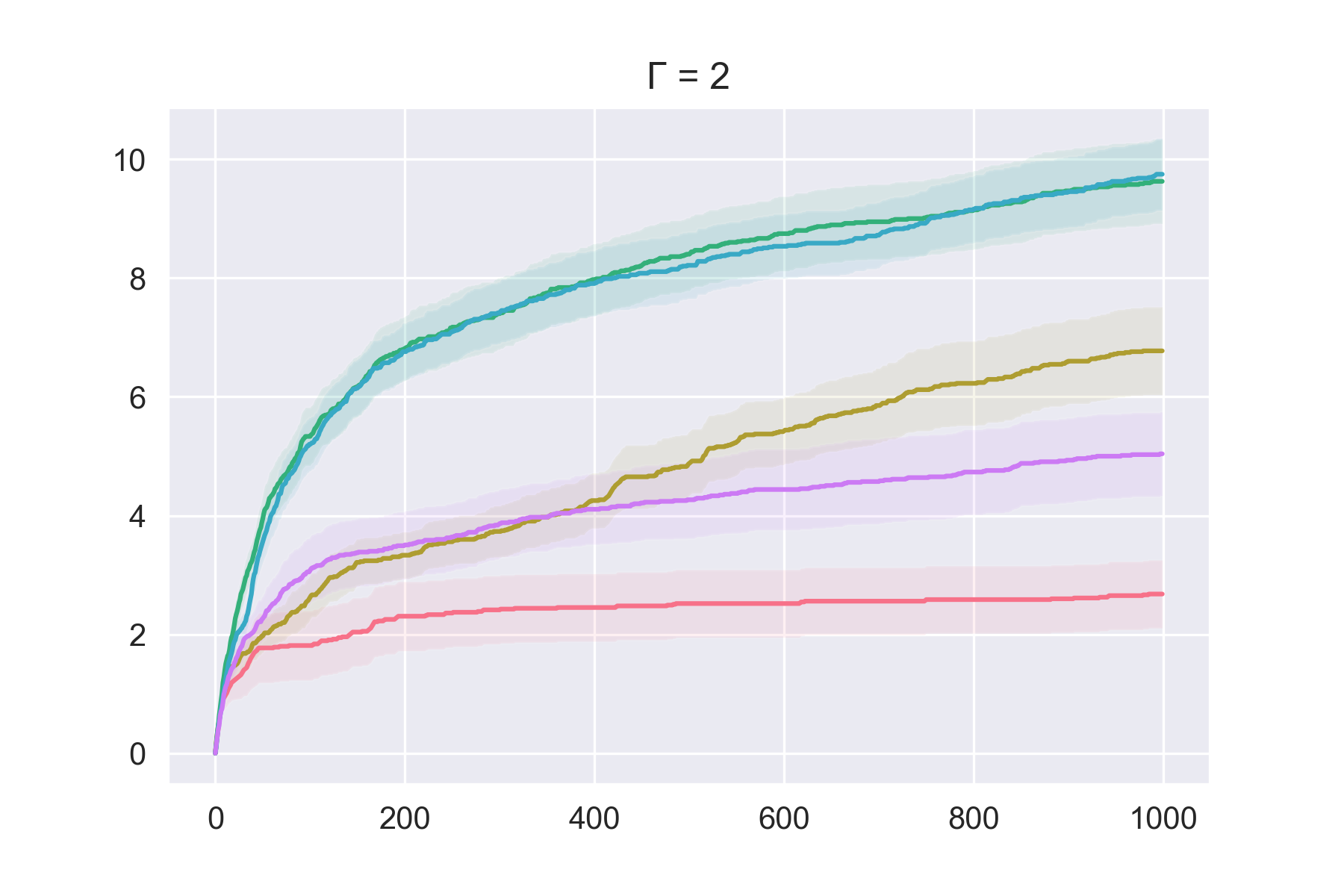}&      
        \includegraphics[width=0.49\textwidth]{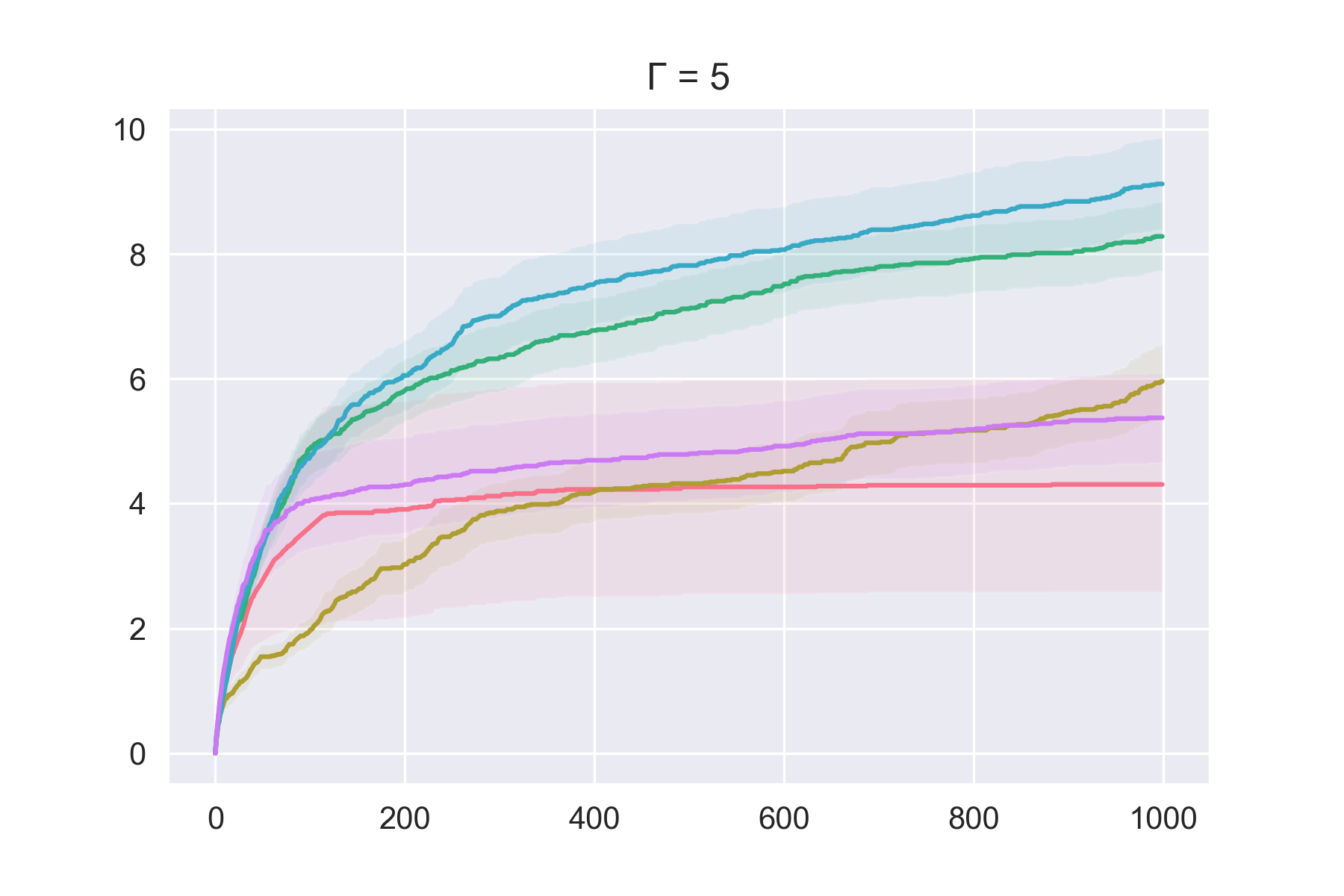}\\
        \includegraphics[width=0.49\textwidth]{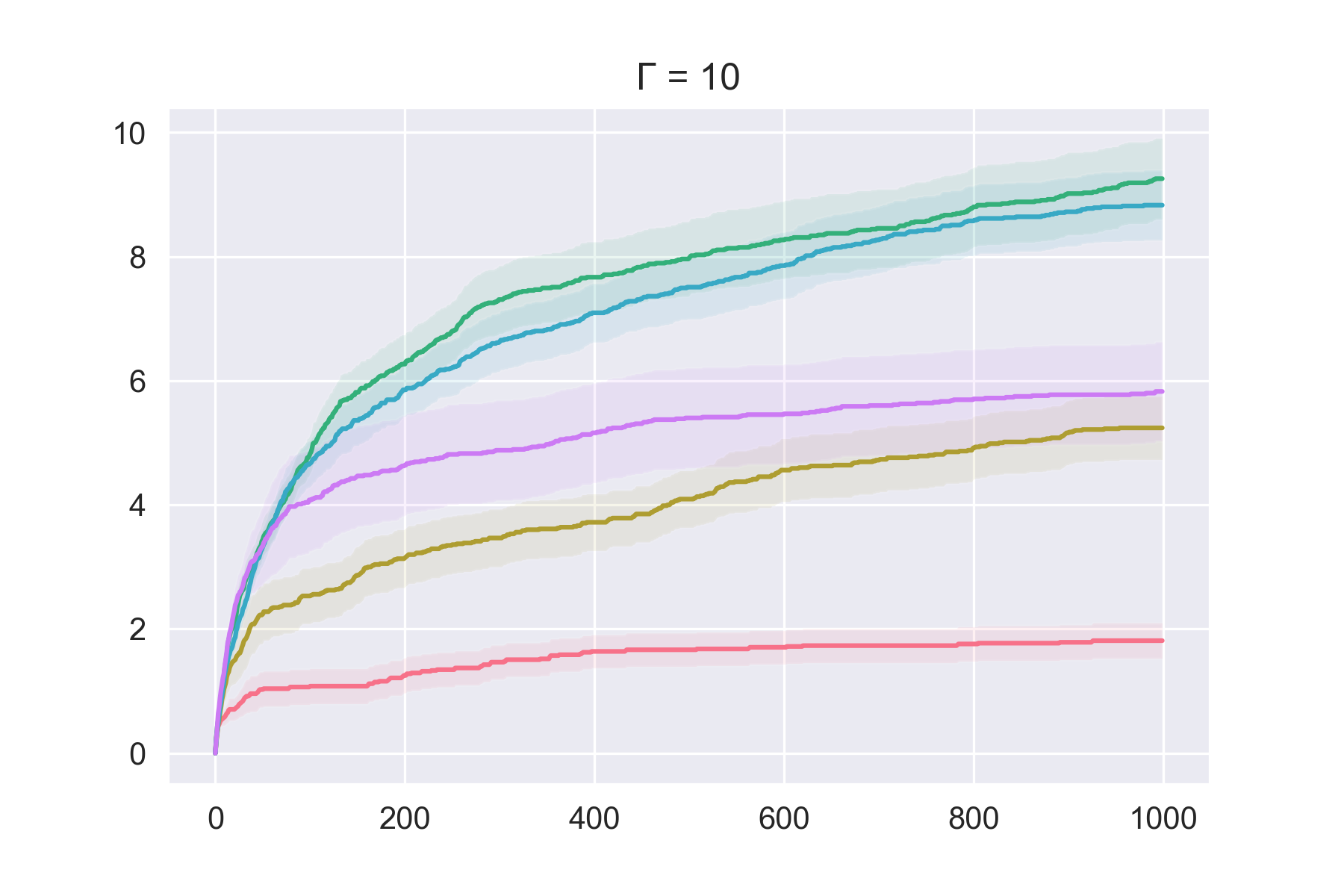}&
        \includegraphics[width=0.49\textwidth]{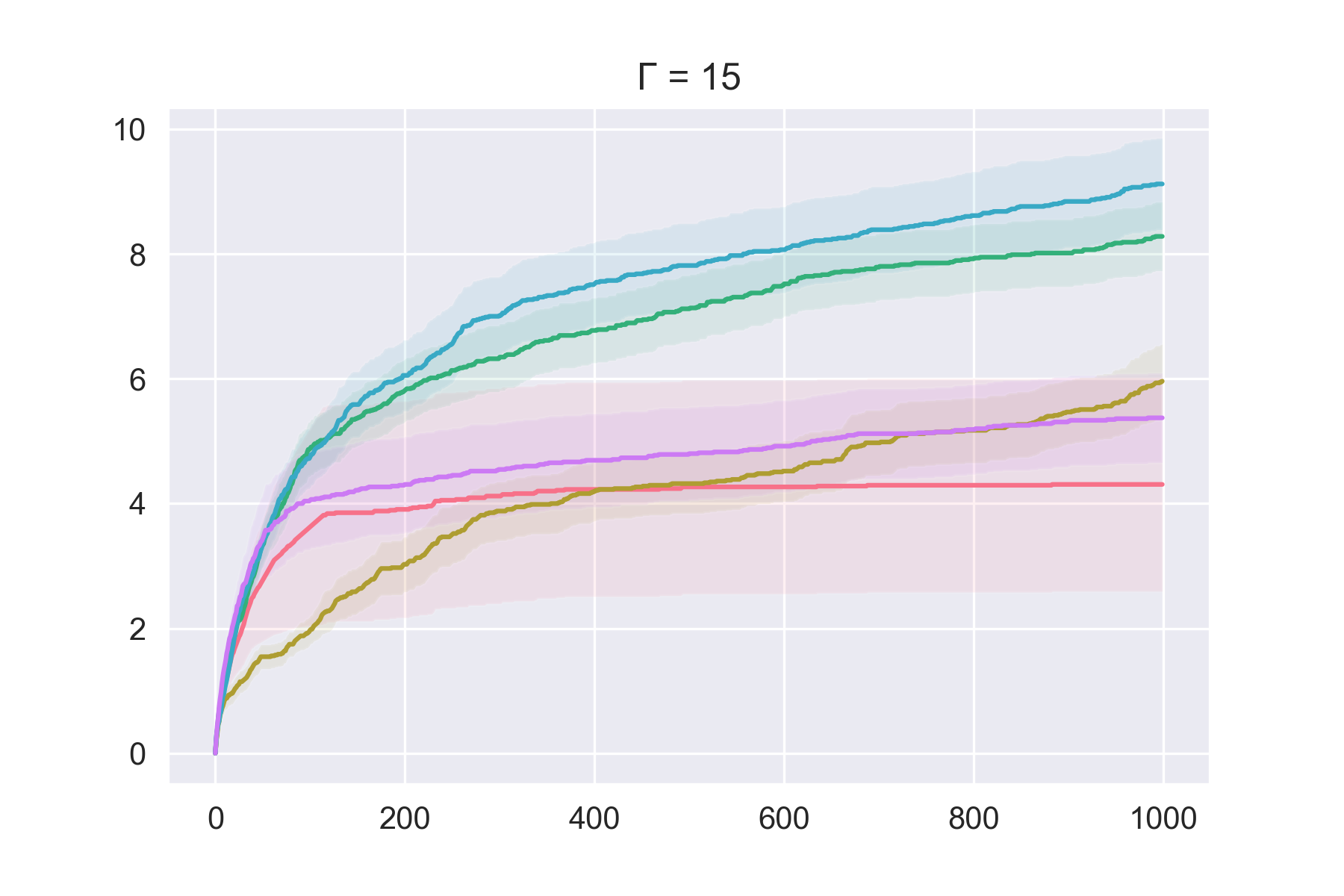}
     \end{tabular}
\caption{Experiments with different $\Gamma$. The curve-algorithm correspondence is the same as in Figure \ref{fig:nottoo}.}
\label{fig:appendix2}
\end{figure}

\newpage

\end{appendices}

\end{document}